\documentclass[conference,10pt,twocolumn]{IEEEtran}

\usepackage[utf8]{inputenc}
\usepackage{latexsym}
\usepackage{amsmath}
\usepackage{amsthm}
\usepackage{amssymb}
\usepackage{mathtools}
\usepackage{graphicx}
\usepackage{ifthen}
\usepackage{calc}
\usepackage{stmaryrd}
\usepackage{enumitem}
\usepackage{alphalph}

\usepackage[backend=bibtex,style=numeric-comp,url=false,giveninits=true,maxnames=10]{biblatex}
\addbibresource{gnn.bib}

\usepackage[dvipsnames]{xcolor}
\usepackage[colorlinks=true,citecolor=ForestGreen,linkcolor=RubineRed,urlcolor=RoyalBlue]{hyperref}

\usepackage{tikz}
\usetikzlibrary{arrows}
\usetikzlibrary{shapes.geometric}
\usetikzlibrary{graphs}
\usetikzlibrary{calc}
\usetikzlibrary{3d}
\tikzset{vertex/.style={circle,draw,fill=RoyalBlue!60,minimum size= 5pt,inner sep = 0mm}}

\usepackage{eucal}

\DeclareFontFamily{OT1}{pzc}{}
\DeclareFontShape{OT1}{pzc}{m}{it}{<-> s * [1.10] pzcmi7t}{}
\DeclareMathAlphabet{\mathpzc}{OT1}{pzc}{m}{it}

\newtheoremstyle{mythm}{}{}{\itshape}{}{\bfseries}{.}{.5em}{\thmname{#1}~\thmnumber{#2}\ifthenelse{\equal{\thmnote{#3}}{}}{}{~(\thmnote{#3})}}

\newtheoremstyle{mydefn}{}{}{\upshape}{}{\bfseries}{.}{.5em}{\thmname{#1}~\thmnumber{#2}\ifthenelse{\equal{\thmnote{#3}}{}}{}{~(\thmnote{#3})}}

\newtheoremstyle{myremark}{}{}{\upshape}{}{\itshape}{.}{.5em}{\thmname{#1}~\thmnumber{#2}\ifthenelse{\equal{\thmnote{#3}}{}}{}{~(\thmnote{#3})}}

\theoremstyle{mythm}
\newtheorem{theorem}{Theorem}[section]
\newtheorem{lemma}[theorem]{Lemma}
\newtheorem{proposition}[theorem]{Proposition}
\newtheorem{corollary}[theorem]{Corollary}

\theoremstyle{mydefn}

\newtheorem{example}[theorem]{Example}
\theoremstyle{myremark}
\newtheorem{remark}[theorem]{Remark}
\theoremstyle{mythm}

\newcommand{\uend}{\hfill$\lrcorner$}
\newcommand{\uende}{\eqno\lrcorner}

\newcounter{qcounter}
\newenvironment{question}{\refstepcounter{qcounter}\medskip\par\noindent\textbf{Question~\arabic{qcounter}.}\itshape}{\par\medskip}

\newcounter{claimcounter}

\setenumerate[1]{label=(\arabic*)}
\newlist{eroman}{enumerate}{2}
\setlist[eroman,1]{label=(\roman*)}
\setlist[eroman,2]{label=(\alph*)}
\newlist{ealph}{enumerate}{1}
\setlist[ealph]{label=(\Alph*)}

\usepackage{color}
\definecolor{blau}{RGB}{0,84,159}
\definecolor{hellblau}{RGB}{142,168,229}
\definecolor{petrol}{RGB}{0,97,101}
\definecolor{tuerkis}{RGB}{0,152,161}
\definecolor{gruen}{RGB}{87,171,39}
\definecolor{maigruen}{RGB}{189,205,0}
\definecolor{gelb}{RGB}{255,237,0}
\definecolor{orange}{RGB}{255,128,0}
\definecolor{magenta}{RGB}{227,0,102}
\definecolor{rot}{RGB}{204,7,30}
\definecolor{bordeaux}{RGB}{161,16,53}
\definecolor{violett}{RGB}{97,33,88}
\definecolor{lila}{RGB}{122,111,172}
\definecolor{grey}{gray}{0.7}
\definecolor{mittelblau}{RGB}{0,128,255}

\definecolor{azure}{RGB}{6,154,243}
\definecolor{xindigo}{RGB}{56,2,130}
\definecolor{xorange}{RGB}{255,173,1}
\definecolor{xgreen}{RGB}{21,176,26}

\newcommand{\bigmid}{\mathrel{\big|}}
\newcommand{\Bigmid}{\mathrel{\Big|}}

\renewcommand{\hat}{\widehat}

\renewcommand{\vec}[1]{\boldsymbol{#1}}

\newcommand{\lmulti}{{\{\hspace{-3.4pt}\{}}
\newcommand{\rmulti}{{\}\hspace{-3.4pt}\}}}
\newcommand{\biglmulti}{{\big\{\hspace{-4.3pt}\big\{}}
\newcommand{\bigrmulti}{{\big\}\hspace{-4.3pt}\big\}}}
\newcommand{\Biglmulti}{{\Big\{\hspace{-5.1pt}\Big\{}}
\newcommand{\Bigrmulti}{{\Big\}\hspace{-5.1pt}\Big\}}}

\renewcommand{\phi}{\varphi}
\renewcommand{\epsilon}{\varepsilon}

\newcommand{\Nat}{{\mathbb N}}
\newcommand{\Real}{{\mathbb R}}

\newcommand{\logic}[1]{\textsf{\upshape #1}}
\newcommand{\LL}{{\logic L}}
\newcommand{\LC}{{\logic C}}
\newcommand{\LCk}[1]{{\logic C}_{#1}}
\newcommand{\LCq}[1]{{\logic C}^{(#1)}}
\newcommand{\LCkq}[2]{{\logic C}_{#1}^{(#2)}}
\newcommand{\LGC}{{\logic{GC}}}
\newcommand{\LGCk}[1]{{\logic{GC}}_{#1}}
\newcommand{\LGCkq}[2]{{\logic{GC}}_{#1}^{(#2)}}

\newcommand{\FO}{\logic{FO}}
\newcommand{\TC}{\logic{TC}}

\newcommand{\CC}{{\mathcal C}}

\newcommand{\CF}{{\mathcal F}}

\newcommand{\CN}{{\mathcal N}}

\newcommand{\CQ}{{\mathcal Q}}

\DeclareMathOperator{\sig}{sig}
\DeclareMathOperator{\lsig}{lsig}
\DeclareMathOperator{\relu}{relu}

\newcommand{\col}{\logic{col}}
\newcommand{\atp}[1]{\logic{atp}_{#1}}
\newcommand{\colref}[1]{\logic{cr}^{(#1)}}
\newcommand{\wl}[2]{\logic{wl}_{#1}^{(#2)}}
\newcommand{\owl}[2]{\logic{owl}_{#1}^{(#2)}}
\newcommand{\agg}{\logic{agg}}
\newcommand{\comb}{\logic{comb}}

\newcommand{\ro}{\logic{ro}}
\newcommand{\aggro}{\logic{aggro}}

\pagestyle{plain}

\begin{document}
\title{The Logic of Graph Neural Networks}
\author{\IEEEauthorblockN{Martin Grohe}
\IEEEauthorblockA{RWTH Aachen University, Germany\\
  Email: \href{mailto:grohe@informatik.rwth-aachen.de}{grohe@informatik.rwth-aachen.de}}
}
\maketitle

\begin{abstract}
  Graph neural networks (GNNs) are deep learning architectures for 
  machine learning problems on graphs. It has recently been
  shown that the expressiveness of GNNs can be characterised precisely
  by the combinatorial Weisfeiler-Leman algorithms and by finite
  variable counting logics. The correspondence has even led to new, higher-order GNNs corresponding to the
  WL algorithm in higher dimensions.

  The purpose of this paper is to explain these descriptive
  characterisations of GNNs.
\end{abstract}

\thispagestyle{plain}

\section{Introduction}
\label{sec:intro}
Graph neural networks (GNNs) are deep learning architectures for
graph structured data that have developed into a method of choice for
many graph learning problems in recent years. It is, therefore,
important that we understand their power. One aspect of this is the
expressiveness: which functions on graphs can be expressed by a GNN
model? This may guide us in our choice for a suitable learning
architecture for a learning problem we want to solve, and perhaps
point to a suitable variant of GNNs.

Machine learning (statistical reasoning) and logic (symbolic
reasoning) not always seem to be compatible. Therefore, it comes as a
pleasant surprise that the expressive power of GNNs
has a clean logical characterisations in terms of finite variable
counting logics. These logics have been studied for a long time in
finite model theory, and they play an important role in the field
(see, e.g., \cite{ebbflu99,imm99,ott97,lib04}). This is because of their
role in analysing fixed-point logic with counting and the quest for a
logic capturing polynomial time \cite{ott97,gro17}, and because of
their close relation to the Weisfeiler-Leman (WL) algorithm
\cite{caifurimm92,weilem68}: the $k$-dimensional WL
algorithm is an equivalence test for the $(k+1)$-variable counting logic
$\LCk{k+1}$.

The $k$-dimensional WL algorithm iteratively computes a colouring of
the $k$-tuples of vertices of a graph by passing local information
about the isomorphism type and the current colour between tuples. The
goal is to detect differences; tuples that are structurally different
(formally: belong to different orbits of the automorphism group)
should eventually receive different colours. It has been shown by Cai,
Fürer and Immerman \cite{caifurimm92} that this goal cannot always be
reached, but still, the WL algorithm gathers ``most'' structural
information. For graph classes excluding some fixed graph as a minor,
we can always find a $k$ such that $k$-dimensional WL fully
characterises the graphs in this class and the orbits of their
automorphism groups \cite{gro17}, for planar graphs even $k=3$ is
enough \cite{kieponschwe17} (see \cite{kie20} for a recent
survey). For most of this paper, we will focus on the $1$-dimensional
WL algorithm and a variant known as colour refinement.\footnote{Often,
  no distinction is made between 1-dimensional WL and colour
  refinement, because in some sense, they are equivalent (see
  Proposition~\ref{prop:cr1wl}). However, it will be important here to
  emphasise the difference between the algorithms.} On the logical
side, they correspond to the $2$-variable counting logic $\LCk2$ and
its guarded fragment $\LGCk2$, which is also known as graded modal
logic.

Intuitively, the similarity between GNNs and the 1-dimensional WL
algorithm is quite obvious. A
GNN represents a message passing algorithm operating on  graphs; just like the
WL algorithm, a GNN works by
iteratively passing local information along the edges of a graph. At
each point during the computation, each
vertex gets a real-valued vector as its state. Vertices exchange
information by sending messages along the edges of the graph, and then
they update their states based on their current state and the messages
they receive. The message functions as well as the state update
functions are represented by neural networks, and their parameters can
be learned, either from labelled examples to learn a model of an
unknown function defined on the vertices of a graph, or in an
unsupervised fashion to learn a vector representation of the vertices
of a graph. Crucially, all vertices use the
same message passing and state update functions, that is, the
parameters of the neural network are shared across the graph. This idea
is inspired by convolutional neural networks, which have been applied
extremely successfully in computer vision. It not
only reduces the overall number of parameters that need to be learned,
but it also guarantees that the learned function is isomorphism
invariant (or equivariant), which is essential for learning functions
on graphs.

It has been proved independently by Morris et al.\ \cite{morritfey+19}
and Xu et al.\ \cite{xuhulesjeg19} that the colour refinement
algorithm precisely captures the expressiveness of GNNs in the sense
that there is a GNN distinguishing two nodes of a graph if and only if
colour refinement assigns different colours to these nodes. This
implies a characterisation of the distinguishability of nodes by GNNs
in terms of equivalence in the logic $\LGCk2$. Barcelo et
al.~\cite{DBLP:conf/iclr/BarceloKM0RS20} extended this result and
showed that every property of nodes definable by a $\LGCk2$-formula
is expressible by a GNN, uniformly over all graphs. Corresponding
results hold for GNNs with an additional feature that allows vertices
to access aggregate global information,
the 1-dimensional WL algorithm, and the logic $\LCk2$.  It is the main
purpose of this paper to explain these results, including all required
background, all the fine-print in their statements, and their proofs.

The results have some interesting extensions. Based on the
higher-order WL algorithm, Morris et al.\ \cite{morritfey+19}
introduced higher-order GNNs that lift the equivalence between WL and
GNNs to a higher level. Another interesting extension is to let GNNs
operate with a random initialisation of their states. This is
often done in practice anyway. At first sight, this violates
the isomorphism invariance, but as random variables, the functions
computed by GNNs with random initialisation remain invariant. It turns out
that random initialisation significantly affects the
expressiveness; all invariant functions on graphs can be expressed
using GNNs with random
initialisation~\cite{abbceygroluk21}.

The paper is organised as follows. After giving the necessary
preliminaries, we include background sections on finite-variable
counting logics, invariant and equivariant functions defined on
graphs, the Weisfeiler-Leman algorithm, and feedforward neural
networks. We then introduce GNNs. In the
subsequent sections, we state and prove the results outlined above as well
as their consequences. We conclude with a brief discussion and a
number of open questions.

\section{Preliminaries}
\label{sec:prel}
We denote the set of real numbers by $\Real$ and the set of
nonnegative integers by $\Nat$. For real numbers $x\le y$, by $(x,y)$ and $[x,y]$ we
denote the open resp.\ closed interval between $x$ and $y$. For every positive integer $n$, we let
$[n]=\{1,\ldots,n\}$.

We use bold-face letters to denote tuples (or
vectors). The entries of a $k$-tuple $\vec x$ are $x_1,\ldots,x_k$;
the length $k$ will usually be clear from the context. For a $k$-tuple
$\vec x$ and an element $y$, by $\vec xy$ we denote
the $(k+1)$-tuple $(x_1,\ldots,x_k,y)$. Moreover, for every $i\in[k]$,
by $\vec x[y/i]$ we denote the $k$-tuple
$(x_1,\ldots,x_{i-1},y,x_{i+1},\ldots,x_k)$ and by $\vec x[/i]$ the
$(k-1)$-tuple $(x_1,\ldots,x_{i-1},x_{i+1},\ldots,x_k)$.

A \emph{multiset} is an unordered collection with repetition, which
can formally be described as a function from a set to the positive
integers. We use $\lmulti\ldots\rmulti$ to denote multisets. We write
$M\subseteq X$ to denote that $M$ is a multiset with elements from a
set $X$ and $x\in M$ to denote that $x$ appears at least once in $M$.
The \emph{order} of a multiset is the sum of the multiplicities of all
elements of $M$.

By default, we assume \emph{graphs} to be finite, undirected, simple,
and vertex-labelled. Thus formally, a graph is a tuple
$G=(V(G),E(G),P_1(G),\ldots,P_\ell(G))$ consisting of a finite vertex
set $V(G)$, a binary edge relation $E(G)\subseteq V(G)^2$ that is
symmetric and irreflexive, and unary relations
$P_1(G),\ldots,P_\ell(G)\subseteq V(G)$ representing $\ell$ vertex
labels.  We usually denote edges without parenthesis, as in $vw$, with
the understanding that $vw=wv$. We always use $\ell$ to denote the
number of labels of a graph; if $\ell=0$, we speak of an
\emph{unlabelled graph}. Throughout this paper, we think of $\ell$ as
being fixed and not part of the input of computational problems.  We
sometimes refer to the vector
$\col(G,v)\coloneqq(c_1,\ldots,c_\ell)\in\{0,1\}^\ell$ with $c_i=1$ if
$v\in P_i(G)$ and $c_i=0$ otherwise as the \emph{colour} of vertex
$v\in V(G)$. Let me remark that $\col(G,v)$ is even defined for
unlabelled graphs, where we simply have $\col(G,v)=()$ (the empty
tuple) for all $v$.

An isomorphism from a graph $G$ to a graph $G'$ is a bijective mapping
$f:V(G)\to V(G')$ that preserves edges as well as labels, that is,
$vw\in E(G)\iff f(v)f(w)\in E(G')$ for
all $v,w\in V(G)$ and $v\in P_i(G)\iff f(v)\in P_i(G')$ for all
$i\in[\ell]$ and $v\in V(G)$.

We use the usual graph theoretic terminology. In particular, the set
of \emph{neighbours} of a vertex $v$ in a graph $G$ is
$N^G(v)\coloneqq\{w\in V(G)\mid vw\in E(G)\}$.
For a set $X\subseteq V(G)$, the \emph{induced subgraph} $G[X]$ is the
graph with vertex set $X$, edge relation $E(G)\cap X^2$, and unary
relations $P_i(G)\cap X$. The \emph{order} of a graph $G$ is
$|G|\coloneqq|V(G)|$.

\begin{remark}\label{rem:bin}
  To keep the presentation simple, in this paper we focus on
  undirected vertex-labelled graphs, but all the results can be
  extended to directed graphs with edge labels, that is, binary
  relational structures. Formally, a binary relational structure is a
  tuple  $A=(V(A),E_1(A),\ldots,E_k(A),P_1(A),\ldots,P_\ell(A))$
  consisting of a finite vertex set $V(A)$, binary relations
  $E_i(A)\subseteq V(A)$ and unary relations $P_j(A)$. For such structures, we define
  the colour map $\col(A):V(A)\to\{0,1\}^{k+\ell}$ by
  $\col(A,v)\coloneqq(c_1,\ldots,c_k,d_1,\ldots,d_\ell)$ with $c_i=1$
  if $(v,v)\in E_i(A)$ and $d_j=1$ if $v\in P_j(A)$.

  We will get back to binary relational structures once in a while,
  mainly in a series of remarks.
  \uend
\end{remark}

\begin{remark}
  Another common generalisation is to assign real-valued weights to
  vertices and possibly edges instead of the Boolean labels. Formally,
  instead of the subsets $P_i(G)\subseteq V(G)$ we have functions
  $P_i(G):V(G)\to\Real$. Isomorphisms $f$ are required to preserve
  these functions. Again, most results can be extended to this
  setting, but will not discuss these extensions here.  \uend
\end{remark}

\section{Finite Variable Counting Logic}
\label{sec:logic}
The logics we are interested in are fragments of the
extension $\LC$ of first-order logic $\FO$ by \emph{counting
  quantifiers} $\exists^{\ge p}$. That is, $\LC$-formulas are formed
from \emph{atomic formulas} of the form $x=y$, $E(x,y)$, $P_i(x)$ with the
usual Boolean connectives, and the new counting quantifiers. Standard
existential and universal quantifiers can easily be expressed using
the counting quantifiers ($\exists x$ as $\exists^{\ge 1}x$ and
$\forall x$ as $\neg\exists^{\ge 1}x\neg$).

We interpret $\LC$-formulas over labelled graphs; variables range
over the vertices. We use the notation
$\phi(x_1,\ldots,x_k)$ to indicate that the free variables of a
formula $\phi$ are among $x_1,\ldots,x_k$, and for a graph $G$ and
vertices $v_1,\ldots,v_k\in V(G)$ we write $G\models\phi(v_1,\ldots,v_k)$
to denote that $G$ satisfies $\phi$ if the variables $x_i$ are
interpreted by the vertices $v_i$. The semantics
of the counting quantifiers is the obvious one: a labelled graph $G$
together with vertices $w_1,\ldots,w_k\in V(G)$ satisfies a formula $\exists^{\ge p}x\,\phi(x,y_1,\ldots,y_k)$ if there are
at least $p$ vertices $v\in V(G)$ such that
$G\models\phi(v,w_1,\ldots,w_k)$.

\begin{remark}\label{rem:bin-logic}
  We can easily extend $\LC$ from graphs to arbitrary (binary)
  relational structures; we only need to add additional atomic
  formulas $E_i(x,y)$ for the binary relations $E_i$.
  \uend
\end{remark}

Note that while syntactically an extension of $\FO$, the logic $\LC$ has
the same expressive power as $\FO$, because the formula $\exists^{\ge
  p}x\,\phi(x)$ is equivalent to 
\[
\exists x_1\ldots\exists
x_p\Big(\bigwedge_{1\le i<j\le p}x_i\neq
x_{j}\wedge\bigwedge_{i=1}^p\phi(x_i)\Big).
\]
However, the translation from $\LC$ to $\FO$ incurs an increase in the
number of variables as well as the quantifier rank (maximum number of
nested quantifiers). For example, the $\LC$-formula $\forall
x\exists^{\ge d}y E(x,y)$ stating that the minimum degree of a graph
is $d$ uses 2 variables and has quantifier rank $2$. It is easy to
show that any equivalent $\FO$-formula needs at least $d+1$ variables
and has quantifier rank at least $d+1$.

By $\LCk k$ we
denote the fragment of $\LC$ consisting of all formulas with at most
$k$ variables, and by $\LCq q$, we denote the fragment consisting of all
formulas of quantifier rank at most $q$.\footnote{This notation is
  slightly non-standard, but it aligns well with our notation for the
  WL algorithm and GNNs later.} We also combine the two,
letting $\LCkq kq\coloneqq\LCk k\cap\LCq q$. The fragments are still
quite expressive.

\begin{example}
  For every $\ell$ one can easily
construct a $\LCkq3\ell$-formula stating that the diameter of a graph
is at most $2^\ell$ based on the inductive definition of distances:
\[\delta_{2n}(x,y)=\exists
  z\big(\delta_n(x,z)\wedge\delta_n(z,y)\big).\uende\]
\end{example}

We need to consider one more fragment of the logic $\LC$, the
\emph{guarded fragment} $\LGC$. In this fragment, quantifiers are
restricted to range over the neighbours of the current
nodes. Formally, $\LGC$-formulas are formed from the atomic formulas
by the Boolean connectives and quantification restricted to formulas
of the form $\exists^{\ge p}y(E(x,y)\wedge\psi)$, where $x$ and $y$
are distinct variables and $y$ appears in
$\psi$ as a free variable. Note
that every formula of $\LGC$ has at least one free variable.
We are mainly interested in the
$2$-variable fragment $\LGCk2$, also known as \emph{graded
modal logic}
\cite{DBLP:conf/iclr/BarceloKM0RS20,DBLP:journals/sLogica/Rijke00}. Again
we use a superscript to indicate the quantifier rank, that is,
$\LGCkq2q\coloneqq\LGCk2\cap\LCq q$. 

\begin{example}
  The following $\LGCk2$-formula $\phi(x)$ says that vertex $x$ has at most 1
  neighbour that has more than 10 neighbours with label $P_1$:
  \[
    \phi(x)\coloneqq \neg \exists^{\ge 2}y\Big(E(x,y)\wedge\exists^{\ge
      11}x\big(E(y,x)\wedge P_1(x)\big)\Big).\uende
  \]
\end{example}

\section{Invariants and Colourings}
In this paper, we study mappings defined on graphs or vertices of
graphs. Crucially, we want these mappings to be isomorphism
invariant. A \emph{graph invariant} (or \emph{$0$-ary graph invariant}) is a
function $\xi$ defined on graphs such that $\xi(G)=\xi(G')$ for
isomorphic graphs $G,G'$.
For $k\ge1$, a \emph{$k$-ary graph
  invariant} is a function $\xi$ that associates with
each graph $G$ a function $\xi(G)$ defined on $V(G)^k$ in such a way
that for all graphs $G,G'$, all isomorphisms $f$ from $G$ to $G'$, and
all tuples $\vec v\in V(G)^k$ it holds that
\[
  \xi(G)(\vec v)=\xi(f(G))(f(\vec v)).
\]
Formally, such a mapping $\xi$ is called \emph{equivariant}.
In
the following, we just speak of \emph{$k$-ary
  invariants}. Moreover, we call $0$-ary invariants \emph{graph
  invariants} and $1$-ary invariants \emph{vertex invariants}. To simplify the notation, for mappings
$\xi$ (equivariant or not) that associate a function on $V(G)^k$ with
every graph $G$, we usually write $\xi(G,\vec v)$ instead of
$\xi(G)(\vec v)$. We have already used this notation for the vertex
invariant $\col$ defined in Section~\ref{sec:prel}.

Let $\xi$ be a $k$-ary invariant. If $\xi(G,\vec v)\neq\xi(G',\vec
v')$, then we say that $\xi$ \emph{distinguishes} $(G,\vec v)$ and
$(G',\vec v')$ (or just $\vec v$ and $\vec v'$ if the graphs are clear
from the context). The equivariance condition implies that if  $\xi$ distinguishes $(G,\vec v)$ and
$(G',\vec v')$ then there is no isomorphism from $G$ to $G'$ that maps
$\vec v$ to $\vec v'$. If the converse also holds, then $\xi$ is a
\emph{complete} invariant.

From a $k$-ary invariant $\xi$ we can derive a
graph invariant $\hat\xi$ by mapping each graph $G$ to the multiset
\[
  \hat\xi(G)\coloneqq \biglmulti \xi(G,\vec
  v)\bigmid \vec v\in V(G)^k\bigrmulti.
\]
We say that $\xi$ \emph{distinguishes} two graphs $G,G'$ if
$\hat\xi(G)\neq\hat\xi(G')$. Note that if $\xi$ is a complete
invariant then $\hat\xi$ is complete as well, that is, $\xi$
distinguishes $G,G'$ if and only if $G,G'$ are non-isomorphic. The
converse does not necessarily hold, that is, there are incomplete
invariants $\xi$ for which $\hat\xi$ is complete.

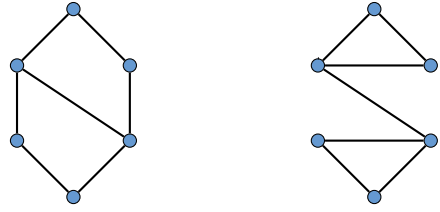
\begin{figure}
  \centering
  \begin{tikzpicture}
    \begin{scope}
    \node[vertex] (v1) at (0.75,0) {}; 
    \node[vertex] (v2) at (0,0.75) {}; 
    \node[vertex] (v3) at (0,1.75) {}; 
    \node[vertex] (v4) at (0.75,2.5) {}; 
    \node[vertex] (v5) at (1.5,1.75) {}; 
    \node[vertex] (v6) at (1.5,0.75) {};
    \draw[thick] (v1) edge (v2) edge (v6) (v3) edge (v2) edge (v4)
    edge (v6) (v5) edge (v4) edge (v6);
    \end{scope}

      \begin{scope}[xshift=4cm]
    \node[vertex] (v1) at (0.75,0) {}; 
    \node[vertex] (v2) at (0,0.75) {}; 
    \node[vertex] (v3) at (0,1.75) {}; 
    \node[vertex] (v4) at (0.75,2.5) {}; 
    \node[vertex] (v5) at (1.5,1.75) {}; 
    \node[vertex] (v6) at (1.5,0.75) {};
    \draw[thick] (v1) edge (v2) edge (v6) (v2) edge (v6) (v3) edge (v3) edge (v4)
    edge (v5) edge (v6) (v5) edge (v4);
    \end{scope}
\end{tikzpicture}
  \caption{Two graphs with the same degree sequence}
  \label{fig:deg}
\end{figure}

\begin{example}\label{exa:invs}
  \begin{enumerate}
  \item 
  The degree $\logic{deg}$ defined by
  $\logic{deg}(G,v)\coloneqq |N^G(v)|$ is a vertex invariant. The graph invariant
  $\widehat{\logic{deg}}$ associates with each graph the multiset of
  vertex degrees appearing in the graph (or, equivalently, the
  \emph{degree sequence}). Note that $\logic{deg}$ does not
  distinguish the two graphs shown in Figure~\ref{fig:deg}.
\item
  The binary invariant $\logic{dist}$ is defined by letting
  $\logic{dist}(G,v,w)$ be the length of the shortest path from $v$ to
  $w$ in $G$, or $\infty$ if no such path exists. $\logic{dist}$
  distinguishes the two graphs in Figure~\ref{fig:deg}.

  \item We define a ternary invariant $\logic{tri}$ by letting
    $\logic{tri}(G,v_1,v_2,v_3)=1$ if $G$ induces a triangle on
    $\{v_1,v_2,v_3\}$ and $\logic{tri}(G,v_1,v_2,v_3)=0$
    otherwise. Then $\widehat{\logic{tri}}$ is essentially the graph
    invariant ``number of triangles''. $\logic{tri}$
  distinguishes the two graphs in Figure~\ref{fig:deg} as well.
  \uend
  \end{enumerate}
\end{example}

We often think of mappings $\chi$ defined on $V(G)$ or $V(G)^k$ as
\emph{colourings} of the vertices or $k$-tuples of vertices of a graph
$G$ (where ``colour'' is just an intuitive way of illustrating an
abstract range). If the range of $\chi$ is $\Real^n$, we also call
$\chi$ a $n$-dimensional \emph{feature map}, in particular in the
context of graph neural networks.  We refer to the elements in the
range of a colouring (or feature map) $\chi$ as \emph{colours} and to
the pre-images $\chi^{-1}(c)$ for colours $c$ as \emph{colour
  classes}. Thus a colouring defined on $V(G)^k$ induces a partition
of $V(G)^k$ into colour classes.  For colourings
$\chi,\chi':V(G)^k\to C$, we say that $\chi$ \emph{refines} $\chi'$
(we write $\chi\preceq\chi'$) if for all $\vec v,\vec w\in V(G)^k$, if
$\chi(\vec v)=\chi(\vec w)$ then $\chi'(\vec v)=\chi'(\vec w)$. In
other words: the partition of $V(G)^k$ into the colour classes of
$\chi$ refines the partition into the colour classes of $\chi'$. We
call colourings $\chi,\chi'$ \emph{equivalent} (we write
$\chi\equiv\chi'$) if $\chi\preceq\chi'$ and $\chi'\preceq\chi$, that
is, if $\chi$ and $\chi'$ induce the same partition. For $k$-ary
invariants $\xi,\xi'$ we say that $\xi$ \emph{refines} $\xi'$ if
$\xi(G)$ refines $\xi'(G)$ for every graph $G$, and similarly for
equivalence.

The vertex colouring $\col$ is, in some sense, the most basic vertex
invariant. We extend it to a $k$-ary invariant $\atp k$. For a
graph $G$ with $\ell$ labels and a tuple
$\vec v=(v_1,\ldots,v_k)\in V(G)$, we let
$\atp k(G,\vec v)\in\{0,1\}^{2\binom{k}{2}+k\ell}$ be the vector which
for $1\le i<j\le k$ has two entries indicating whether $v_i=v_j$ and
whether $v_iv_j\in E(G)$ and for $1\le i\le k$ has $\ell$ entries
indicating whether $v_i$ is in $P_1(G),\ldots,P_\ell(G)$. Note that
$\atp 1=\col$. The vector $\atp k(G,\vec v)$ is called the
\emph{atomic type} of $\vec v$ in $G$. The crucial property of atomic
types is that $\atp k(G,\vec v)=\atp k(G',\vec v')$ if and only if the
mapping $v_i\mapsto v_i'$ is an isomorphism from the induced subgraph
$G[\{v_1,\ldots,v_k\}]$ to the induced subgraph
$G'[\{v_1',\ldots,v_k'\}]$. In particular, this implies that the
 $\atp k$ is indeed an invariant.

\begin{example}
  Recall the invariants $\logic{dist}$ and $\logic{tri}$ defined in
  Example~\ref{exa:invs}. 
  For all unlabelled graphs $G$ (that is, graphs with $\ell=0$ labels),
  $\logic{dist}(G)$ refines $\atp 2(G)$ and $\atp 3(G)$ refines
  $\logic{tri}(G)$. In general, both refinements are strict.
  \uend
\end{example}

\begin{remark}\label{rem:bin-atp}
  If we want to extend atomic types to arbitrary relational
  structures, say with $m$ binary and $\ell$ unary relations, we
  extend the range to be $\{0,1\}^{mk^2+\binom{k}{2}+\ell k}$. For each binary
  relation, we reserve $n^2$ bits storing
  which pairs it contains.
  \uend
\end{remark}

\section{The Weisfeiler-Leman Algorithm}
The Weisfeiler-Leman (WL) algorithm was originally introduced as an
isomorphism test, and it served that purpose well. It is a key
subroutine of individualisation-refinement algorithms on which all
modern graph isomorphism tools build
\cite{CodenottiKSM13,JunttilaK11,Lopez-PresaCA14,McKay81,McKayP14},
and it has played an important role in theoretical research towards
the graph isomorphism problem \cite{bab16,bab81,caifurimm92,gro17}. It
has long been recognised that the WL algorithm has a close connection
with finite variable counting logics \cite{caifurimm92,immlan90}. Only recently, several
remarkable connections of the WL algorithm to other areas have
surfaced
\cite{abrdawwan17,atsman13,atsmanrob+19,bergro15,delgrorat18,dvo10,kermlagar+14},
and building on them applications ranging from linear optimisation to
machine learning
\cite{grokermla+14,morkermut17,morritfey+19,sheschlee+11,xuhulesjeg19}.

The 1-dimensional version of the WL algorithm, which is essentially
the same as the \emph{colour refinement algorithm}, has been
re-invented several times; the oldest reference I am aware of is
\cite{mor65}.  The $2$-dimensional version, also known as the
\emph{classical} Weisfeiler-Leman algorithm, has been introduced by
Weisfeiler and Leman \cite{weilem68}, and the $k$-dimensional
generalisation goes back to Babai and Mathon (see~\cite{caifurimm92}).

Let us start by introducing the \emph{colour refinement algorithm},
which is also known as \emph{naive vertex classification}. As I said, it is
essentially the same as the 1-dimensional WL algorithm, but there is a
subtle difference that will be relevant for us here. The basic idea is
to label vertices of the graph with their iterated degree sequence.
More precisely, the initial colouring given by the vertex labels in a
graph is repeatedly refined by counting, for each colour, the number of
neighbours of that colour. For every graph $G$, we define a sequence
of vertex colourings $\colref t(G)$ as follows: for every $v\in
V(G)$, we let $\colref 0(G,v)\coloneqq\col(G,v)$ and
\[
  \colref{t+1}(G,v)\coloneqq\Big(\colref t(G,v),\biglmulti\colref t(G,w)\bigmid w\in
  N(v)\bigrmulti\Big).
\]
Thus for vertices $v,v'\in
V(G)$ we have $\colref{t+1}(G,v)=\colref{t+1}(G,v')$ if and only if for every
colour $c$ in the range of $\colref t(G)$ the vertices $v$ and $v'$ have the
same number of neighbours $w$ with $\colref t(G,w)=c$.

In each round, the algorithm computes a colouring that is finer than
the one computed in the previous round, that is,
$\colref{t+1}(G) \preceq \colref t(G)$. For some $t<n\coloneqq|G|$, this
procedure stabilises, meaning the colouring does not become strictly
finer anymore. Hence, there is a least $t_\infty < n$ such that
$\colref{t_\infty+1}(G) \equiv \colref {t_\infty}(G)$.  We call
$\colref{t_\infty}(G)$ the \emph{stable colouring} and denote it by
$\colref{\infty}(G)$.

It is easy to see that for every $t\in\Nat\cup\{\infty\}$, the mapping
$\colref t$ is equivariant; in other words: $\colref t$ is a vertex
invariant.

\begin{example}\label{exa:cr-inxp}
  The stable colouring
  $\colref{\infty}$ does not distinguish the two graphs in
  Figure~\ref{fig:deg}.
  Neither does it distinguish a cycle of length $6$ from a pair of
  triangles.

  Thus the invariant $\colref{\infty}$ is not complete.
  \uend
\end{example}

The colour refinement algorithm is very efficient.
The stable colouring of a graph can be computed in time
$O((n+m)\log n)$, where $n$ denotes the number of vertices and $m$ the
number of edges of the input graph \cite{carcro82} (also
see~\cite{paitar87})\footnote{To be precise, the algorithms computes the partition of the
  vertex set corresponding to the stable colouring, not the actual
  colours viewed as multisets.}.
For a natural class of partitioning algorithms,
this is best-possible~\cite{berbongro17}.

Let us now turn to the \emph{$k$-dimensional Weisfeiler-Leman
  algorithm ($k$-WL)}. It defines a sequence of $k$-ary invariants
$\wl kt$ for $t\in\Nat\cup\{\infty\}$. For every graph $G$ and $\vec
v\in V(G)^k$, we
let $\wl k0(G,\vec v)\coloneqq\atp k(G,\vec v)$ and 
\[
  \wl k{t+1}(G,\vec v)\coloneqq \big(\wl kt(G,\vec v),M\big)
\]
with 
\begin{align*}
  M=
  \Biglmulti
  \Big(\atp{k+1}(G,\vec v w),\,&\wl kt\big(G,\vec v[w/1]\big),\\
                              &\wl kt\big(G,\vec v[w/2]\big),\\
                              &\hspace{1cm}\vdots\\
  &\wl kt\big(G,\vec v[w/k]\big)
               \Big)
  \Bigmid
  w\in V(G)\Bigrmulti.
\end{align*}
Recall the notation $\vec v[w/i]=(v_1,\ldots,v_{i-1},w,v_{i+1},\ldots
v_k)$.

Then there is a least $t_\infty < n^k$
such that $\wl k{t_\infty}(G)\equiv \wl k{t_\infty+1}(G)$, and we call
$\wl k{\infty}(G)\coloneqq \wl k{t_\infty}(G)$ the \emph{$k$-stable colouring}.
The $k$-stable colouring of an $n$-vertex graph can be computed  in
time $O(k^2n^{k+1}\log n)$ \cite{immlan90}.

Recall that an invariant is \emph{complete} if two tuples receive the
same colour if and only if there is an isomorphism mapping one to the
other and that an invariant \emph{distinguishes} two graphs if they
get the same multiset of colours. If an invariant is complete, then it
distinguishes any two non-isomorphic graphs. The following example
shows that neither $1$-WL nor $2$-WL are complete.

\begin{figure}
 \centering
 \begin{tikzpicture}[
   every node/.style={draw,circle,inner sep=0pt,minimum
     height=2mm}
   ]
  \foreach \i in {0,...,3}{
   \foreach \j in {0,...,3}{
    \node[vertex] (v\i\j) at ($(90*\i:0.8) + (90*\i+35*\j-52.5:1.2)$) {};
   }
  }
  
  \foreach \i in {0,...,3}{
   \foreach \j/\k in {0/1,0/2,0/3,1/2,1/3,2/3}
   \path[draw,thick] (v\i\j) edge (v\i\k);
  }
  
  \foreach \i in {0,...,3}{
   \foreach \j/\k in {0/1,0/2,0/3,1/2,1/3,2/3}
   \path[draw,thick] (v\j\i) edge (v\k\i);
  }
  \begin{scope}[xshift=-1.5cm]
  \foreach \i in {0,...,7}{
   \node[vertex] (u\i) at ($(6,0)+(22.5+45*\i:1)$) {};
   \node[vertex] (v\i) at ($(6,0)+(22.5+45*\i:2)$) {};
  }
  \foreach \i/\j in {0/2,0/3,0/5,0/6,1/3,1/4,1/6,1/7,2/4,2/5,2/7,3/5,3/6,4/6,4/7,5/7}{
   \path[draw,thick] (u\i) edge (u\j);
  }
  \foreach \i/\j in {0/1,0/7,1/0,1/2,2/1,2/3,3/2,3/4,4/3,4/5,5/4,5/6,6/5,6/7,7/0,7/6}{
   \path[draw,thick] (v\i) edge (u\j);
  }
  \foreach \i/\j in {0/1,0/2,0/6,0/7,1/2,1/3,1/7,2/3,2/4,3/4,3/5,4/5,4/6,5/6,5/7,6/7}{
   \path[draw,thick] (v\i) edge (v\j);
 }
 \end{scope}
 \end{tikzpicture}
 \caption{Two non-isomorphic strongly regular graphs with parameters $(16,6,2,2)$: the line graph of $K_{4,4}$ (left) and the Shrikhande Graph (right).}
 \label{fig:sr-graphs}
\end{figure}
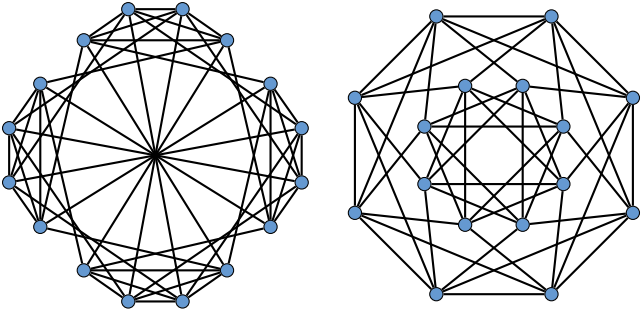

\begin{example}
  \begin{enumerate}
  \item Let $G,G'$ be regular graphs of the same degree, for example a
    cycle of length $6$ and the union of two triangles. Then
    $\wl1{\infty}$ does not distinguish $G$ and $G'$.
  \item Let $G,G'$ be strongly regular graphs with the same
    parameters (see Figure~\ref{fig:sr-graphs}). Then $\wl2{\infty}$ does not distinguish $G$ and $G'$.
  \end{enumerate}
\end{example}

In a seminal paper, Cai, Fürer, and Immerman \cite{caifurimm92}
proved that $\wl k{\infty}$ is incomplete for $k\ge 3$.

\begin{theorem}[\cite{caifurimm92}]
  For every $k\ge 3$ there are non-isomorphic 3-regular graphs of
  order $O(k)$ that $\wl k{\infty}$ does not distinguish.
\end{theorem}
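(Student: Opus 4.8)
The plan is to realize the theorem through the Cai--Fürer--Immerman gadget construction, combined with the correspondence recalled in the introduction that $\wl k{\infty}$ is an equivalence test for the counting logic $\LCk{k+1}$. Starting from a suitable connected $3$-regular base graph $H$, I would build two graphs $X(H)$ and $\tilde X(H)$ --- an ``untwisted'' and a ``twisted'' version --- and prove two things: (a) $X(H)$ and $\tilde X(H)$ are non-isomorphic, and (b) $\wl k{\infty}$ does not distinguish them. For (b) I would pass from WL to logic via the stated correspondence and then use the standard bijective-pebble-game characterisation of $\LCk{k+1}$-equivalence: it suffices to exhibit a winning strategy for Duplicator in the bijective game with $k+1$ pebbles played on $X(H)$ and $\tilde X(H)$.

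For the construction, I would attach to each vertex $v$ of $H$ (incident to edges $e_1,e_2,e_3$) a gadget whose inner vertices correspond to the even-cardinality subsets $S\subseteq\{e_1,e_2,e_3\}$, while each edge $e$ of $H$ contributes a pair of ``wire'' vertices shared by the gadgets of its two endpoints; the inner vertex $S$ attaches to the appropriate wire of each incident edge according to whether $e\in S$. Since $H$ is $3$-regular, each gadget has constant size and the whole construction can be arranged to be $3$-regular, so $|X(H)|=O(|H|)$. The twisted graph $\tilde X(H)$ is obtained by swapping the two wires of a single edge. Non-isomorphism (a) follows from a parity invariant: swapping the two wires of an edge can be compensated by flipping the subsets at an incident gadget, which merely propagates a swap to another incident edge, so over a \emph{connected} $H$ the number of swapped edges is invariant modulo $2$; as $X(H)$ and $\tilde X(H)$ have opposite parities, no isomorphism exists. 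Finally I would take $H$ to be a $3$-regular expander (equivalently, of treewidth larger than $k$) on $O(k)$ vertices --- such graphs exist --- which yields $X(H)$ of order $O(k)$.

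For the indistinguishability (b), Duplicator maintains the invariant that the pebbled tuples in $X(H)$ and $\tilde X(H)$ extend to an isomorphism \emph{after relocating the single twist} to an edge of $H$ far from the currently pebbled gadgets. The key geometric fact is that a twist can be slid along any path of $H$ by flipping inner vertices in the gadgets along the path, so the only way Spoiler could ever expose it would be to pebble a set of gadgets that separates $H$. But $k+1$ pebbles touch at most $k+1$ gadgets, and the connectivity of $H$ (its treewidth exceeding the number of pebbles) guarantees that no such small separator disconnects the graph; hence Duplicator can always keep the twist outside the pebbled region and answer each of Spoiler's bijection challenges.

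The main obstacle I expect is exactly this indistinguishability step: one must formalise ``sliding the twist'' as a genuine partial isomorphism between the two CFI graphs, and then verify that after \emph{every} pebble placement the invariant can be re-established by rerouting the twist through a part of $H$ avoiding the $\le k+1$ occupied gadgets. This is where the high connectivity (treewidth $>k$) of the base graph is used in an essential way, and where the usual off-by-one bookkeeping between WL dimension, variable count, and pebble number has to be handled carefully; the construction, the parity argument for non-isomorphism, and the degree/size count are comparatively routine once the base graph is fixed.
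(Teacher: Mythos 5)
The paper does not prove this theorem---it is stated with a citation and taken directly from Cai, F\"urer and Immerman---and your proposal is essentially a faithful sketch of the argument in that cited source: the even-subset gadget construction over a $3$-regular base graph, the parity invariant for non-isomorphism, and Duplicator's ``relocate the twist'' strategy in the bijective $(k+1)$-pebble game, using a constant-degree expander on $O(k)$ vertices so that no $k+1$ pebbled gadgets can separate the base graph. The one detail to watch is that with a single pair of wire vertices shared per base edge the wires acquire degree $4$, so to obtain genuinely $3$-regular graphs you need the standard variant with one wire pair per edge--endpoint incidence, matched across the edge; with that adjustment the outline is correct.
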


The next proposition and the following remark clarify the relation
between colour refinement and $1$-WL.

\begin{proposition}\label{prop:cr1wl}\label{prop:cr1wl}
  For all $t\in\Nat\cup\{\infty\}$, graphs $G,G'$ are distinguished by $\colref t$ if and
  only if they are distinguished by $\wl1t$.
\end{proposition}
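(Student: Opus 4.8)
The statement is an equivalence between two notions of distinguishing graphs, so the plan is to prove the two implications separately, keeping in mind that ``$\xi$ distinguishes $G,G'$'' means $\widehat{\xi}(G)\neq\widehat{\xi}(G')$ for the induced graph invariant $\widehat{\xi}$.

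For the easy direction (if $\colref t$ distinguishes $G,G'$ then so does $\wl 1t$) I would first show that $\wl 1t$ refines $\colref t$ via a \emph{graph-independent} decoding function $f_t$ with $\colref t(G,v)=f_t(\wl 1t(G,v))$ for all $G$ and $v$. This is a routine induction on $t$: the base case is $\colref 0=\wl 10=\col$, and in the step one recovers $\colref{t+1}(G,v)$ from $\wl 1{t+1}(G,v)=(\wl 1t(G,v),M)$ by applying $f_t$ to the first component and to the second coordinate of exactly those pairs in $M$ whose $\atp 2$-entry marks an edge; these are precisely the neighbours $w$, so they yield $\lmulti \colref t(G,w)\mid w\in N(v)\rmulti$. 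Since $f_t$ does not depend on the graph, $\widehat{\wl 1t}(G)=\widehat{\wl 1t}(G')$ forces $\widehat{\colref t}(G)=\widehat{\colref t}(G')$ by pushing $f_t$ through the multisets, and contraposition gives the implication.

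The hard direction is $\widehat{\colref t}(G)=\widehat{\colref t}(G')\Rightarrow\widehat{\wl 1t}(G)=\widehat{\wl 1t}(G')$. The obstacle is that $\wl 1t(G,v)$ additionally records how many \emph{non}-neighbours carry each colour, information that $\colref t$ appears to discard. The point that rescues the equivalence is that the multiset $\widehat{\colref t}(G)$ already encodes the global colour statistics: its order is $|V(G)|$, and through the decodings between consecutive rounds it determines $\widehat{\col}(G)$ and, inductively, the global distribution of $\wl 1t$-colours. Concretely I would prove by induction on $t$ the auxiliary statement (SL$_t$): whenever $\widehat{\colref t}(G)=\widehat{\colref t}(G')$, any $v\in V(G)$, $v'\in V(G')$ with $\colref t(G,v)=\colref t(G',v')$ also satisfy $\wl 1t(G,v)=\wl 1t(G',v')$ -- that is, the correspondence between $\colref t$-colours and $\wl 1t$-colours is the \emph{same} in both graphs (the within-graph equivalence $\colref t(G)\equiv\wl 1t(G)$ is the special case $G'=G$, combined with $f_t$). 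Applying SL$_t$ to a colour-preserving bijection $\pi\colon V(G)\to V(G')$, which exists exactly because $\widehat{\colref t}(G)=\widehat{\colref t}(G')$, shows that $\pi$ also preserves $\wl 1t$-colours and hence yields $\widehat{\wl 1t}(G)=\widehat{\wl 1t}(G')$; so SL$_t$ is all that is needed, and as a by-product it gives the matching of the global $\wl 1t$-colour multisets used at the next level.

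To prove SL$_t$ the base case is trivial, and for the step I take $v,v'$ with $\colref{t+1}(G,v)=\colref{t+1}(G',v')$; refinement gives $\widehat{\colref t}(G)=\widehat{\colref t}(G')$, so SL$_t$ applies and, as just noted, the global $\wl 1t$-colour multisets coincide. I then reconstruct the multiset $M$ attached to $v$ from three pieces, each transported from $G$ to $G'$: the diagonal ($w=v$) term, fixed by $\col(G,v)=\col(G',v')$; the neighbour part $\lmulti\wl 1t(G,w)\mid w\in N(v)\rmulti$, obtained by pushing the equal $\colref t$-neighbour-multisets through the colour correspondence SL$_t$; and the non-neighbour part, which equals the global $\wl 1t$-multiset minus the neighbour part minus the own colour, hence is forced once the first two agree. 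Re-attaching the appropriate $\atp 2$-type -- determined by $\col(G,v)$ and by each $w$'s colour, which is itself read off from its $\wl 1t$-colour -- shows $M_v=M_{v'}$ and thus $\wl 1{t+1}(G,v)=\wl 1{t+1}(G',v')$. \textbf{The main obstacle} is exactly this reconstruction of the non-neighbour multiset: it goes through only because equal colour-refinement multisets pin down the total vertex count and the global colour distribution, which is the subtle distinction between the graph-level equivalence proved here and the finer per-vertex behaviour of the two algorithms. Finally, for $t=\infty$ both colourings stabilise by round $\max(|V(G)|,|V(G')|)$ and distinguishing is monotone in the round (again by the decodings between consecutive rounds), so for all sufficiently large finite $t$ distinguishing by $\colref\infty$ coincides with distinguishing by $\colref t$, and likewise for $\wl 1\infty$ and $\wl 1t$; the finite case then transfers the equivalence to the limit.
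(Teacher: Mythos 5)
Your proof is correct and follows essentially the same route as the paper's (sketched) argument: the one‑to‑one correspondence between $\colref t$‑ and $\wl1t$‑colours with equal class sizes across $G$ and $G'$ (your $f_t$ and SL$_t$), and the recovery of the non‑neighbour counts as class size minus neighbour count. You have simply carried out in full the induction that the paper leaves as "easily turn this intuition into an inductive proof".
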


\begin{proof}[Proof sketch]
  Observe that the difference between colour refinement and 1-WL is
  that $\colref{t+1}(G)$ only counts neighbours of a node $v$ in each
  colour class of $\colref t(G)$ to determine $\colref{t+1}(G,v)$, whereas 
$\wl1{t+1}(G)$ counts both neighbours and non-neighbours of $v$ in each
  colour class of $\wl1t(G)$ to determine $\wl1{t+1}(G,v)$. 

  Now the
  idea is that if we have a one-to-one correspondence between the
  colours in the range of $\colref t(G)$ and $\wl1t(G)$ and all
  corresponding colour classes have the same sizes in $G$ and $G'$,
  then it is enough to count neighbours in each colour class because
  the number of non-neighbours in a colour class is the size of the
  class minus the number of neighbours. 

  We can easily turn this intuition into an inductive proof.
\end{proof}

\begin{remark}
  It can be proved similarly to Proposition~\ref{prop:cr1wl} that for all graphs $G$ and all vertices
  $v,v'$ of $G$ we have
  \[
    \colref t(G,v)=\colref t(G,v')
    \iff
    \wl1t(G,v)=\wl1t(G,v').
  \]
  But note that $\colref t(G,v)=\colref t(G',v')$ does not imply
  $\wl1t(G,v)=\wl1t(G',v')$ for distinct graph $G,G'$. As an example,
  let $G,G'$ be cycles of different lengths and $v,v'$ arbitrary
  vertices. Then for all $t\in\Nat$ it holds that $\colref
  t(G,v)=\colref t(G,v')$, but $\wl1t(G,v)\neq\wl1t(G',v')$ for $t\ge
  1$.
  \uend
\end{remark}

There is a variant of the WL algorithm that can also be found in the
literature (for example, \cite{gro00}). We call it \emph{oblivious
  WL}. It is somewhat simpler, and it has the advantage of being
closer to both the logical characterisation of WL and 
higher-order graph neural networks. Its main disadvantage is that it
is less efficient. To reach the same expressive power as $k$-WL,
oblivious WL needs to operate on on $(k+1)$-tuples and thus needs memory
space $\Omega(n^{k+1})$, whereas $k$-WL only needs space $O(n^k)$.

Oblivious $k$-WL defines a sequence of $k$-ary invariants
$\owl kt$ for $t\in\Nat\cup\{\infty\}$. For every graph $G$ and $\vec
v\in V(G)^k$, we
let $\owl k0(G,\vec v)\coloneqq\atp k(G,\vec v)$ and 
\begin{align*}
  &\owl k{t+1}(G,\vec v)\coloneqq\\
  &\hspace{1cm}\bigg(\owl kt(G,\vec v),
\begin{array}[t]{@{\;}l}
\Biglmulti \owl kt\big(G,\vec
v[w/1]\big)\Bigmid w\in V(G)\Bigrmulti,\\
\Biglmulti \owl kt\big(G,\vec
v[w/2]\big)\Bigmid w\in V(G)\Bigrmulti,\\
\hspace{2cm}\vdots\\
\Biglmulti \owl kt\big(G,\vec
v[w/k]\big)\Bigmid w\in V(G)\Bigrmulti\bigg).
\end{array}
\end{align*}
Then we can define the stable colouring $\owl k{\infty}(G)$ in the
same way as we did for colour refinement and standard WL.

The difference between oblivious WL and standard WL is that when we
substitute a vertex $w$ in the $i$th place of $\vec v$ in oblivious
WL, we forget about the context, that is, the entry $v_i$ we
substitute as well as the colours that we obtain if we substitute
other entries of $\vec v$ by $w$. Standard WL considers all these
colours together, whereas oblivious WL is oblivious to the context
(hence the name). It turns out that the price for obliviousness is an
increase of the dimension by $1$.

\begin{theorem}\label{theo:owl}
  Let $k\ge 1$. Then for all
  $t\in\Nat$ and all graphs $G,G'$:
  \begin{enumerate}
  \item if  $G,G'$ are distinguished by $\wl
    kt$ then they are distinguished by $\owl{k+1}t$;
  \item  if  $G,G'$ are distinguished by  $\owl{k+1}t$ then they are distinguished by $\wl{k}{t+1}$.
  \end{enumerate}
\end{theorem}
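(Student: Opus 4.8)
The plan is to derive each implication from a tuple-level comparison of the two colourings and then to pass from tuples to graphs by a counting argument that absorbs the change of dimension from $k$ to $k+1$. Throughout I may assume $|G|=|G'|=n$: otherwise the multisets $\hat{\wl kt}(G)$ and $\hat{\owl{k+1}t}(G)$ have orders $n^k$ resp.\ $n^{k+1}$, differing from the corresponding orders for $G'$, so both algorithms distinguish $G$ and $G'$ and both implications hold trivially. I would first record an auxiliary fact used in both parts: both $\wl k\cdot$ and $\owl{k+1}\cdot$ are equivariant under permuting the coordinates of a tuple, i.e.\ for each permutation $\sigma$ there is a relabelling $\rho_\sigma$ with $\wl kt(G,\sigma\vec v)=\rho_\sigma(\wl kt(G,\vec v))$, and similarly for $\owl{k+1}t$. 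This follows by a routine induction on $t$ from the definitions and lets me freely compare subtuples that agree as sets but differ in the order of their entries.

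For \textbf{part (1)} I would prove the tuple-level claim that for every $t$ the colour $\owl{k+1}t(G,\vec u)$ determines $\wl kt(G,\vec u[/j])$ for every coordinate $j\in[k+1]$; formally there are functions $F_{t,j}$ with $\wl kt(G,\vec u[/j])=F_{t,j}(\owl{k+1}t(G,\vec u))$. The base case is immediate since $\atp{k+1}(G,\vec u)$ determines the atomic type of any $k$-subtuple. For the step, writing $\vec w=\vec u[/j]$, the only nontrivial point is to recover the refinement multiset
\[
M=\biglmulti\big(\atp{k+1}(G,\vec w w'),\wl kt(G,\vec w[w'/1]),\ldots,\wl kt(G,\vec w[w'/k])\big)\bigmid w'\in V(G)\bigrmulti
\]
of $\wl k{t+1}(G,\vec w)$. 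Here I would use the $j$-th per-position multiset $\biglmulti \owl{k+1}t(G,\vec u[w'/j])\bigmid w'\bigrmulti$ provided by the oblivious update: since $\vec u[w'/j]$ is $\vec w$ with $w'$ inserted in coordinate $j$, the inductive hypothesis applied to $\vec u[w'/j]$, together with permutation-equivariance to restore the order of entries, recovers the entire $w'$-indexed bundle of $M$ from $\owl{k+1}t(G,\vec u[w'/j])$, hence recovers $M$ itself. Taking $j=k+1$ gives $\wl kt(G,\pi(\vec u))$ with $\pi(\vec u)=\vec u[/(k+1)]$. If $\owl{k+1}t$ does not distinguish $G,G'$, fix a colour-preserving bijection $\beta$ on $(k+1)$-tuples; applying the claim along $\pi$ and using that each $k$-tuple is the $\pi$-image of exactly $n$ many $(k+1)$-tuples yields $n\cdot\hat{\wl kt}(G)=n\cdot\hat{\wl kt}(G')$, so $\wl kt$ does not distinguish them either. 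This is the contrapositive of (1).

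\textbf{Part (2)} is the reverse inclusion with the characteristic loss of one round, and here I would not try to read the oblivious colour of a $(k+1)$-tuple off a single $k$-subtuple, since it genuinely depends on all $k+1$ coordinates jointly. Instead, from a colour-preserving bijection $\alpha$ on $k$-tuples witnessing $\hat{\wl k{t+1}}(G)=\hat{\wl k{t+1}}(G')$ I would build a bijection $B$ on $(k+1)$-tuples preserving $\owl{k+1}t$. The device is that the equality $\wl k{t+1}(G,\vec v)=\wl k{t+1}(G',\alpha\vec v)$ forces the two round-$t$ refinement multisets to coincide, which yields a \emph{local} bijection $g_{\vec v}\colon V(G)\to V(G')$ matching them element-wise: for every $w$ simultaneously $\atp{k+1}(G,\vec v w)=\atp{k+1}(G',\alpha\vec v\, g_{\vec v}(w))$ and $\wl kt(G,\vec v[w/i])=\wl kt(G',(\alpha\vec v)[g_{\vec v}(w)/i])$ for all $i\in[k]$. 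Setting $B(\vec v w)\coloneqq(\alpha\vec v)\,g_{\vec v}(w)$ gives a fibrewise bijection on $(k+1)$-tuples, and I would verify $\owl{k+1}t(G,\vec v w)=\owl{k+1}t(G',B(\vec v w))$ by induction on $t$, matching each per-position multiset of the oblivious update: for the last coordinate directly through $g_{\vec v}$, and for $i\le k$ by passing to the substituted pairs $\vec v[w/i]$ and $(\alpha\vec v)[g_{\vec v}(w)/i]$, whose round-$t$ equality is exactly the second matching condition.

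The \textbf{main obstacle} is the coherence of these local bijections in part (2): when the oblivious update substitutes into a coordinate $i\le k$, the induction must re-apply the hypothesis at the substituted pair, and for this the single bijection $g_{\vec v}$ must remain a valid matching for those substituted tuples. Securing a globally consistent family $\{g_{\vec v}\}$ — rather than independent per-tuple choices — is precisely the point at which the extra round of $\wl k{t+1}$ is consumed, and it is what makes (2) strictly harder than (1). I would handle it by strengthening the inductive statement into a back-and-forth form that quantifies over all bijections matching the round-$t$ bundles, so that the hypothesis survives substitution; this is the colouring counterpart of the bijective pebble-game argument for finite-variable counting logic.
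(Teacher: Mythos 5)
Your part (1) is essentially sound and close to the paper's argument: the claim that $\owl{k+1}t(G,\vec u)$ determines $\atp{k+1}(G,\vec u)$ and $\wl kt(G,\vec u[/j])$ for every $j$ is exactly one direction of the key lemma the paper proves by induction on $t$ (modulo making the determination explicitly cross-graph, i.e.\ independent of which graph the tuple lives in), and your multiplicity-$n$ counting argument for passing from tuples to graphs is a legitimate alternative to the paper's device of duplicating the last coordinate (sending $\vec v$ to $\vec v v_k$ and observing that the colour-preserving bijection on $(k+1)$-tuples respects the diagonal).

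The gap is in part (2), and you have located it yourself without closing it. The construction $B(\vec vw)=(\alpha\vec v)\,g_{\vec v}(w)$ is the same as the paper's, but the verification cannot be run as an induction on $t$ for the global bijection $B$: the position-$i$ multiset ($i\le k$) of $\owl{k+1}{t+1}$ at $\vec vw$ ranges over the tuples $\vec v[w''/i]\,w$ for $w''\in V(G)$, and the pairs one would need to feed back into the inductive hypothesis are not of the form $(\vec x y,(\alpha\vec x)g_{\vec x}(y))$ --- neither $\alpha(\vec v[w''/i])$ nor $g_{\vec v[w''/i]}$ is in any way compatible with $\alpha(\vec v)$ and $g_{\vec v}$. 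Your proposed repair (``a back-and-forth form that quantifies over all bijections matching the round-$t$ bundles'') is not pinned down, and the clean resolution is different: what is needed is the \emph{converse} of your part-(1) determination claim, namely that $\atp{k+1}(G,\vec u)=\atp{k+1}(G',\vec u')$ together with $\wl kt(G,\vec u[/i])=\wl kt(G',\vec u'[/i])$ for all $i\in[k+1]$ already implies $\owl{k+1}t(G,\vec u)=\owl{k+1}t(G',\vec u')$. Because this is a statement about a single pair of tuples, its inductive proof may choose a fresh local bijection for each multiset comparison, and no coherence across tuples ever arises. With this equivalence in hand, part (2) needs no induction at the level of the theorem: for each pair $(\vec vw,B(\vec vw))$ its hypotheses are read off directly from $\wl k{t+1}(G,\vec v)=\wl k{t+1}(G',\alpha\vec v)$ and the defining property of $g_{\vec v}$ (extracting $g_{\vec v}$ from the round-$(t+1)$ colours, whose matching conditions live at round $t$, is exactly where the extra round is consumed). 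Until that converse is stated and proved, part (2) is incomplete.
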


This theorem follows from the logical characterisation of $k$-WL
\cite{immlan90,caifurimm92}, which we will discuss below, but I am not
aware of an explicit reference or direct proof of the
result. Therefore, I find it worthwhile to include a proof as an appendix.

\begin{corollary}\label{cor:owl}
  Let $k\ge 1$. Then graphs $G,G'$ are distinguished by
  $\wl k{\infty}$ if and only if they are distinguished by
  $\owl{k+1}{\infty}$.
\end{corollary}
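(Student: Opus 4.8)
The plan is to obtain Corollary~\ref{cor:owl} as the stable (i.e.\ $t\to\infty$) version of Theorem~\ref{theo:owl}, whose two items already supply the round-by-round correspondence. The only real work is to bridge between finite rounds and the stable colourings, and for this I would first record a monotonicity observation: for each of the invariants $\wl kt$ and $\owl{k+1}t$, whether a fixed pair $G,G'$ is distinguished is monotone in $t$.

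Here is how I would argue monotonicity. By construction $\wl k{t+1}(G,\vec v)=(\wl kt(G,\vec v),M)$, so the first component of a round-$(t+1)$ colour is the corresponding round-$t$ colour; that is, there is a projection $\pi$, independent of the graph, with $\pi\big(\wl k{t+1}(G,\vec v)\big)=\wl kt(G,\vec v)$. Applying $\pi$ to each element of the colour multiset turns $\widehat{\wl k{t+1}}(G)$ into $\widehat{\wl kt}(G)$, so equality of the round-$(t+1)$ multisets for $G$ and $G'$ forces equality of the round-$t$ multisets; contrapositively, if $\wl kt$ distinguishes $G,G'$ then so does $\wl k{t+1}$. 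The same projection argument applies verbatim to $\owl{k+1}t$. Combined with the stabilisation already noted in the text (the colourings stabilise at a finite $t_\infty<n^k$ resp.\ $<n^{k+1}$, and $\wl k\infty$ is by definition $\wl k{t_\infty}$), monotonicity yields the bridging fact I need: $\wl k\infty$ distinguishes $G,G'$ if and only if $\wl kt$ does for some finite $t$, and likewise for $\owl{k+1}\infty$.

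With the bridge in hand, the corollary is a short chase through Theorem~\ref{theo:owl}. For the forward direction, if $\wl k\infty$ distinguishes $G,G'$ then some finite round $\wl kt$ does; Theorem~\ref{theo:owl}(1) gives that $\owl{k+1}t$ distinguishes them, whence $\owl{k+1}\infty$ does as well. For the converse, if $\owl{k+1}\infty$ distinguishes $G,G'$ then $\owl{k+1}t$ does for some finite $t$; Theorem~\ref{theo:owl}(2) yields that $\wl k{t+1}$ distinguishes them, hence so does $\wl k\infty$.

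I do not expect a genuine obstacle here: both ingredients, namely stabilisation at a finite round and the refinement $\wl k{t+1}\preceq\wl kt$, are already supplied in the excerpt, and Theorem~\ref{theo:owl} does all the substantial work. The only point needing a little care is that distinguishing is defined at the level of colour multisets $\widehat{(\cdot)}$ rather than of individual tuples, so monotonicity must be phrased via the projection on multisets, as above, rather than by a within-graph refinement argument; once that is stated cleanly, the rest is immediate.
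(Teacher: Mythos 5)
Your proposal is correct and is exactly the derivation the paper intends: Corollary~\ref{cor:owl} is stated without proof as an immediate consequence of Theorem~\ref{theo:owl}, obtained by combining its two items with stabilisation at a finite round. Your explicit monotonicity argument via the projection onto the first component (at the level of colour multisets) is the right way to make the passage to $t=\infty$ rigorous.
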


Let us now turn to the logical characterisation of the
WL-algorithm.

\begin{theorem}[\cite{caifurimm92}]\label{theo:wl-logic}
  Let $k\ge 2$ and $t\ge 0$. Then for all graphs $G,G'$ and tuples
  $\vec v\in V(G)^k,\vec v'\in V(G')^k$ the following are equivalent:
  \begin{eroman}
  \item
    $\owl kt(G,\vec v)=\owl kt(G',\vec v')$;
  \item for all formulas $\phi(\vec x)\in\LCkq kt$,
    \[
      G\models\phi(\vec v)\iff G'\models\phi(\vec v').
    \]
  \end{eroman}
\end{theorem}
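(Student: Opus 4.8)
The plan is to prove the two implications via two lemmas, each established by induction on the quantifier rank $t$. The organising observation is a normal form: every formula in $\LCkq kt$ is a Boolean combination of atomic formulas and formulas of the shape $\exists^{\ge p}x_i\,\psi$ with $\psi\in\LCkq k{t-1}$ and $x_i$ one of the $k$ available variables. This normal form is exactly what drives both inductions.

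For (i)$\Rightarrow$(ii) I would show that $\owl kt(G,\vec v)=\owl kt(G',\vec v')$ implies $G\models\phi(\vec v)\iff G'\models\phi(\vec v')$ for every $\phi\in\LCkq kt$. The base case $t=0$ is immediate, since $\owl k0=\atp k$ records precisely the truth values of all quantifier-free formulas in $x_1,\dots,x_k$. For the step, Boolean connectives are trivial, so I only treat $\phi=\exists^{\ge p}x_i\,\psi$ with $\psi\in\LCkq k{t-1}$. Now $G\models\phi(\vec v)$ holds iff at least $p$ vertices $w$ satisfy $G\models\psi(\vec v[w/i])$; by the induction hypothesis this truth value depends only on $\owl k{t-1}(G,\vec v[w/i])$. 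The multiset $\lmulti \owl k{t-1}(G,\vec v[w/i])\bigmid w\in V(G)\rmulti$ is precisely the $i$-th component of $\owl kt(G,\vec v)$, so the number of qualifying $w$, and hence the truth of $\phi$, is determined by $\owl kt(G,\vec v)$. Equality of colours then transfers the truth value to $(G',\vec v')$.

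For (ii)$\Rightarrow$(i) I would prove a definability lemma: for every $t$ and every $(G,\vec v)$ there is a formula $\phi^{(t)}_{G,\vec v}(\vec x)\in\LCkq kt$ holding at $(G',\vec v')$ exactly when $\owl kt(G',\vec v')=\owl kt(G,\vec v)$. At $t=0$, take the conjunction of atomic formulas and negations describing $\atp k(G,\vec v)$. For the step, write $\owl kt(G,\vec v)=(c_0,M_1,\dots,M_k)$; let $\phi^{(t-1)}_{c_0}$ define the $(t-1)$-colour $c_0$ by induction, and for each $i$ and each colour $d$ occurring in $M_i$ with multiplicity $m$, assert $\exists^{\ge m}x_i\,\phi^{(t-1)}_d(\vec x)$ using the inductive formula $\phi^{(t-1)}_d$. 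The genuinely delicate point is forcing the multisets to match \emph{exactly} rather than merely dominating, since a priori infinitely many colours could occur. I would handle this with the single conjunct $\neg\exists^{\ge N+1}x_i\,(x_i=x_i)$ where $N=|V(G)|$, which caps the total vertex count; together with the lower bounds $\exists^{\ge m}x_i\,\phi^{(t-1)}_d$ for the finitely many $d\in M_i$ (whose colour classes are disjoint and whose multiplicities sum to $N$), this pins each multiplicity to its exact value and rules out any extra colour. The resulting formula has quantifier rank $t$ and reuses only $x_1,\dots,x_k$. Given the lemma, (ii)$\Rightarrow$(i) follows by contraposition: if the colours differ then $\phi^{(t)}_{G,\vec v}$ holds at $(G,\vec v)$ but fails at $(G',\vec v')$.

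I expect the exactness argument in the definability lemma to be the main obstacle. Everything else is a routine rank induction, but converting the counting lower bounds into an exact multiset match, with no a priori bound on the set of colours that can appear, is what forces the global cardinality cap; one must also verify that this cap keeps the quantifier rank at $t$ (it has rank $1\le t$ for $t\ge 1$) and does not introduce a new variable.
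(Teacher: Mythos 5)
Your proof is correct. Note that the paper deliberately omits a proof of this theorem (it is cited to Cai, F\"urer, and Immerman) and offers only a one-sentence intuition --- namely, that every $\LCkq k{t+1}$-formula is a Boolean combination of atomic formulas and formulas $\exists^{\ge p}x_i\,\psi$ with $\psi\in\LCkq kt$, and that the $i$th multiset in the definition of $\owl k{t+1}$ ``takes care of'' these formulas. Your argument is a faithful and complete development of exactly that intuition: the normal form drives the induction for (i)$\Rightarrow$(ii), and the converse is the standard definability-of-colour-classes lemma. Your handling of the exactness issue is sound: since the multiplicities in $M_i$ sum to $|V(G)|$ and distinct colours have disjoint witness sets, the lower bounds $\exists^{\ge m}x_i\,\phi^{(t-1)}_d$ together with the global cap $\neg\exists^{\ge N+1}x_i\,(x_i=x_i)$ do pin every multiplicity exactly and exclude extraneous colours; this is a legitimate (and slightly slicker) alternative to the more common device of adding explicit upper-bound conjuncts $\neg\exists^{\ge m+1}x_i\,\phi^{(t-1)}_d$ for each realised colour. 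The variable and quantifier-rank bookkeeping you describe is also correct, so there is nothing to fix.
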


We omit the proof, of this result. Intuitively,
a formula $\phi(x_1,\ldots,x_k)\in\LCkq k{t+1}$ is a Boolean
combination of atomic formulas and formulas of the form $\exists^{\ge
  p}x_i\psi(x_1,\ldots,x_k)$, where $\psi(x_1,\ldots,x_k)\in\LCkq kt$. The $i$th multiset in the definition of
$\owl k{t+1}$ takes care of these formulas.

From the Theorem~\ref{theo:wl-logic} and
Corollary~\ref{cor:owl}, we obtain the following characterisation of
the the indistinguishability of graphs. 

\begin{corollary}\label{cor:wl-logic}
  Let $k\ge 1$. The for all graphs $G,G'$ the following are
  equivalent:
  \begin{eroman}
  \item
    $\wl k{\infty}$ does not distinguish $G$ and $G'$;
  \item
    $\owl{k+1}{\infty}$ does not distinguish $G$ and $G'$;
  \item
    $G$ and $G'$ satisfy the same $\LCk{k+1}$-sentences.
  \end{eroman}
\end{corollary}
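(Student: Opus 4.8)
The plan is to show the three statements equivalent by proving (i)$\iff$(ii) and (ii)$\iff$(iii). The equivalence (i)$\iff$(ii) is precisely Corollary~\ref{cor:owl}, so the entire task reduces to (ii)$\iff$(iii). Since $k\ge 1$ we have $k+1\ge 2$, so Theorem~\ref{theo:wl-logic} applies in dimension $k+1$: for every $t$, two $(k+1)$-tuples receive the same $\owl{k+1}{t}$-colour iff they satisfy the same formulas of $\LCkq{k+1}{t}$, that is, the $\owl{k+1}{t}$-colour of a tuple is exactly its $\LCkq{k+1}{t}$-type. I will also use the routine fact that $\owl{k+1}{\infty}$ does not distinguish $G$ and $G'$ iff $\widehat{\owl{k+1}{t}}(G)=\widehat{\owl{k+1}{t}}(G')$ for every $t\in\Nat$; this holds because $\owl{k+1}{t}$ is a component of $\owl{k+1}{t+1}$ (so equality of the multisets is monotone in $t$) and each graph stabilises after finitely many rounds. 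Throughout I use that, for fixed vocabulary, $k+1$ variables and bounded quantifier rank, there are only finitely many formulas up to logical equivalence, so every type is defined by a single formula.

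For (ii)$\Rightarrow$(iii), assume $\owl{k+1}{\infty}$ does not distinguish $G,G'$, so $\widehat{\owl{k+1}{t}}(G)=\widehat{\owl{k+1}{t}}(G')$ for every $t$. Let $\phi$ be an $\LCk{k+1}$-sentence of quantifier rank $t$, so $\phi\in\LCkq{k+1}{t}$. Equality of the level-$t$ multisets yields a colour-preserving bijection $\pi\colon V(G)^{k+1}\to V(G')^{k+1}$ (if $V(G)=\emptyset$ then $V(G')=\emptyset$ and the claim is trivial). Picking any $\vec v\in V(G)^{k+1}$ and setting $\vec v'=\pi(\vec v)$, we have $\owl{k+1}{t}(G,\vec v)=\owl{k+1}{t}(G',\vec v')$, so Theorem~\ref{theo:wl-logic} gives $G\models\phi(\vec v)\iff G'\models\phi(\vec v')$. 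As $\phi$ is a sentence, its truth value is independent of the tuple, hence $G\models\phi\iff G'\models\phi$.

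For (iii)$\Rightarrow$(ii), assume $G,G'$ satisfy the same $\LCk{k+1}$-sentences. The sentences $\exists^{\ge m}x_1(x_1=x_1)$ force $|V(G)|=|V(G')|$, and for any one-free-variable formula $\sigma(x_1)\in\LCk{k+1}$ the sentences $\exists^{\ge m}x_1\sigma(x_1)$ show that $G$ and $G'$ contain the same number of vertices satisfying $\sigma$; ranging over the finitely many vertex types, the multiset of $\LCk{k+1}$-vertex-types agrees in $G$ and $G'$. The key step is to lift this from vertices to $(k+1)$-tuples: I claim that for each $j\le k+1$ and each $j$-type $\tau$, the number of $j$-tuples of type $\tau$ is the same in $G$ and $G'$, by induction on $j$. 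The case $j=1$ is what we just established. For the step, a $(j+1)$-type $\tau'$ restricts to a unique $j$-type $\tau$ on its first $j$ coordinates, and for any prefix of type $\tau$ the number of extensions $x_{j+1}$ producing type $\tau'$ is a constant encoded in $\tau$ (it is the threshold in a formula $\exists^{\ge e}x_{j+1}\,\mathrm{type}_{\tau'}$ with free variables $x_1,\dots,x_j$, which uses only $j+1\le k+1$ variables); hence the count of $\tau'$-tuples equals this constant times the count of $\tau$-tuples, and the latter agrees in $G,G'$ by the induction hypothesis. At $j=k+1$ the tuple-type counts agree, so by Theorem~\ref{theo:wl-logic} we get $\widehat{\owl{k+1}{t}}(G)=\widehat{\owl{k+1}{t}}(G')$ for every $t$, which is (ii).

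The main obstacle is precisely this last step. An $\LCk{k+1}$-sentence can only count single elements, one quantifier at a time, and cannot directly count $(k+1)$-tuples of a prescribed type, so one must argue that the tuple-type multiplicities are nonetheless forced by the vertex-type multiplicities. The prefix-extension recursion above is what makes this work, crucially because each extension count lives in a formula with at most $k+1$ variables and is therefore recorded in the type of the shorter tuple.
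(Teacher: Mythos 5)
Your proposal is correct and follows the route the paper indicates: (i)$\iff$(ii) is exactly Corollary~\ref{cor:owl}, and (ii)$\iff$(iii) is derived from Theorem~\ref{theo:wl-logic}; the paper states the corollary as an immediate consequence without details, whereas you correctly supply the non-trivial bridge between the tuple-level statement of Theorem~\ref{theo:wl-logic} and the sentence-level statement of the corollary --- in particular the prefix-extension counting argument showing that agreement on all $\LCk{k+1}$-sentences forces equality of the multisets of $(k+1)$-tuple types, which is the genuinely delicate step. One small inaccuracy: for the counting logic $\LC$ there are \emph{infinitely} many pairwise inequivalent formulas even with bounded quantifier rank and a bounded number of variables (the thresholds $p$ in $\exists^{\ge p}$ are unbounded), so your blanket finiteness claim should be relativised to the two fixed finite graphs $G,G'$ (thresholds above $\max(|G|,|G'|)$ collapse, and only finitely many types are realised), after which the isolating formulas you need do exist and the argument goes through.
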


Combined with Proposition~\ref{prop:cr1wl},
Corollary~\ref{cor:wl-logic} also yields a characterisation of
distinguishability of graphs by colour refinement in terms of the
logic $\LCk2$, but only on the
levels of graphs. On the node level, $\LCk2$ is not the correct logic
to characterise colour refinement. Instead, it is the guarded fragment
$\LGCk2$. Intuitively, this is clear, because just like the guarded
logic, colour refinement only has access to the neighbours of a node.

\begin{theorem}\label{theo:cr-logic}
  Let $t\ge 0$. Then for all graphs $G,G'$ and vertices
  $v\in V(G),v'\in V(G')$ the following are equivalent:
  \begin{eroman}
  \item
    $\colref t(G,v)=\colref t(G',v')$;
  \item for all formulas $\phi(x)\in\LGCkq2t$,
    \[
      G\models\phi(v)\iff G'\models\phi(v').
    \]
  \end{eroman}
\end{theorem}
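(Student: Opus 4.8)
The plan is to prove both implications by induction on $t$, using the standard device of \emph{characteristic formulas}. For the direction (i)$\Rightarrow$(ii) I would establish, by induction on $t$, the slightly stronger statement that whenever $\colref t(G,v)=\colref t(G',v')$, the vertices $v$ and $v'$ satisfy exactly the same formulas of $\LGCkq 2 t$. For $t=0$ this is immediate, since $\colref 0=\col$ and the rank-$0$ formulas are Boolean combinations of the atoms $P_i(x)$, whose truth is precisely recorded by $\col(G,v)$. For the inductive step, a formula $\phi(x)\in\LGCkq 2{t+1}$ is a Boolean combination of atoms and guarded counting quantifiers $\exists^{\ge p}y\,(E(x,y)\wedge\psi)$ with $\psi\in\LGCkq 2 t$; by definition $\colref{t+1}(G,v)$ determines the multiset $\biglmulti\colref t(G,w)\bigmid w\in N(v)\bigrmulti$, and by the induction hypothesis the truth of $\psi$ at a neighbour depends only on its level-$t$ colour. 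Hence this multiset determines, for each colour, how many neighbours satisfy $\psi$, and therefore the truth value of the counting quantifier; Boolean combinations are then handled trivially.

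For the converse (ii)$\Rightarrow$(i) I would construct, for every graph $G$ and vertex $v$, a formula $\phi^t_{G,v}(x)\in\LGCkq 2 t$ that \emph{defines} the colour class of $v$, in the sense that $H\models\phi^t_{G,v}(u)$ if and only if $\colref t(H,u)=\colref t(G,v)$ for all $H,u$. Once these are available the implication is immediate: if $v$ and $v'$ satisfy the same $\LGCkq 2 t$ formulas then $G\models\phi^t_{G,v}(v)$ forces $G'\models\phi^t_{G,v}(v')$, i.e.\ $\colref t(G',v')=\colref t(G,v)$. The formulas are built by induction on $t$: at $t=0$ take the conjunction of the literals $P_i(x)$ and $\neg P_i(x)$ matching $\col(G,v)$; at step $t+1$ conjoin $\phi^t_{G,v}(x)$ with guarded counting quantifiers asserting, for each level-$t$ colour $d$ occurring with multiplicity $m_d$ among the neighbours of $v$, that $x$ has at least $m_d$ neighbours $y$ with $\phi^t_d(y)$, together with a single quantifier $\neg\exists^{\ge |N(v)|+1}y(E(x,y)\wedge x=x)$ bounding the total number of neighbours. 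Correctness relies on the characteristic formulas of distinct level-$t$ colours being mutually exclusive, so that the lower bounds and the degree bound pin down the neighbour multiset exactly; this mutual exclusivity is itself a consequence of the defining property one level down.

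Two points deserve care and are where I expect the real work to lie. First, the characteristic formula $\phi^t_d$ must be usable under a quantifier binding $y$, so it has to be produced with free variable $y$; since $\LGCk 2$ has only two variables, this is obtained by swapping the roles of $x$ and $y$ throughout, and one must check that this renaming preserves guardedness and quantifier rank. Second, and more fundamentally, one must reconcile the fact that the body $\psi$ of a guarded quantifier may carry $x$ as a free variable with the single-free-variable form in which the induction hypothesis is stated: here one uses that conjuncts of $\psi$ depending only on $x$ can be pulled outside the $y$-quantifier, and that under the guard $E(x,y)$ the only atoms linking $x$ and $y$ collapse ($E(x,y)$ to truth, $x=y$ to falsity by irreflexivity), so that $\psi$ reduces to a statement about the quantified neighbour alone. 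The main obstacle is thus the two-variable, guarded bookkeeping: making sure every auxiliary formula stays within two variables, remains guarded, and carries the correct quantifier rank, so that the correspondence between one refinement round and one layer of guarded counting quantification goes through cleanly.
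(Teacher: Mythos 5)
Your proposal is correct, and the forward direction (i)$\Rightarrow$(ii) is essentially identical to the paper's argument: by induction, the truth of a rank-$t$ body $\psi$ at a neighbour depends only on that neighbour's $\colref t$-colour, and $\colref{t+1}$ records the multiset of neighbour colours, hence the counts. Where you genuinely diverge is the converse. The paper does not build uniform characteristic formulas; it fixes the finitely many colours $c_1,\ldots,c_q$ occurring in the range of $\colref t(G)\cup\colref t(G')$, extracts from the induction hypothesis (applied to pairs of differently coloured vertices) separating formulas $\psi_{ij}\in\LGCkq2t$, conjoins them to get a formula $\phi_i$ isolating the colour class $c_i$ \emph{within $V(G)\cup V(G')$ only}, and then derives a contradiction from a hypothetical mismatch in neighbour counts via a single formula $\exists^{\ge p_i}y(E(x,y)\wedge\phi_i(y))$. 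You instead construct, by induction, Hintikka-style formulas $\phi^t_{G,v}$ that define the colour class of $v$ uniformly over \emph{all} graphs, which forces you to pin down the neighbour multiset exactly; your degree-bound conjunct is precisely the right device for this, since one cannot enumerate the infinitely many colours that could occur in other graphs. Both routes are sound. The paper's is shorter because it only needs separation relative to the two given graphs and gets it for free from the induction hypothesis; yours costs the extra exactness bookkeeping but delivers a strictly stronger artifact (explicit defining formulas for colour classes, of the right quantifier rank), which is the kind of statement one wants anyway for results in the style of Theorem~\ref{theo:gnn-logic}. Your two points of care (producing the characteristic formula with free variable $y$ by swapping variables, and normalising guarded bodies so that only the quantified variable occurs free) are legitimate; the paper silently assumes the normalised syntax $\exists^{\ge p}y(E(x,y)\wedge\psi(y))$ with $\psi$ depending on $y$ alone, so you are if anything being more careful than the source.
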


This theorem is obvious to researchers in the field, and it is
implicit in \cite{DBLP:conf/iclr/BarceloKM0RS20}, but for the reader's
convenience I have included a proof in the appendix.

\begin{remark}\label{rem:bin-wl}
  The definitions of $\wl kt$ and $\owl kt$ remain valid without any changes for
  arbitrary relational structures if we adapt the definition of atomic
  types as described in Remark~\ref{rem:bin-atp}.
  Theorems~\ref{theo:owl} and \ref{theo:wl-logic} as well as their
  corollaries also hold in this more general setting.

  There is one interesting observation that will be useful later: we
  can interpret $\owl kt$ over a graph (or binary structure) $G$ as
  $\wl1t$ over a structure $A_G$ with
  \begin{itemize}
  \item vertex set $V(A_G)\coloneqq
    V(G)^k$,
  \item binary relations $E_1(A_G),\ldots,E_k(A_G)$, where $E_i(A_G)$
    contains all pairs $(\vec v,\vec v')$ of tuples that differ
    exactly in the $i$-the component (that is $v_i\neq v_i'$ and
    $v_j=v_j'$ for $j\neq i$),
  \item unary relations $P_\theta(G)$ for all
    $\theta\in\{0,1\}^{2\binom{k}{2}+k\ell}$, where $\ell$ is the
    number of labels of $G$, containing all $\vec v$ with $\atp k(G,\vec
    v)=\theta$. 
  \end{itemize}
  Then it is easy to prove that for all graphs $G,G'$, all tuples
  $\vec v\in V(G)^k,\vec v'\in V(G')^k$, and all
  $t\in\Nat\cup\{\infty\}$ it holds that
 $
    \owl
    kt(G,\vec v)=\owl kt(G',\vec v')$ if and only if $\wl1t(A_G,\vec v)=\wl1t(A_{G'},\vec
    v')$.
  This observation goes back to \cite{ott97}.
  \uend
\end{remark}

\section{The Expressiveness of Feed-Forward Neural Networks}
\label{sec:dnn}

Before we discuss graph neural networks, we need to look into standard
fully-connected feed-forward neural networks (FNNs).
An \emph{FNN layer} of input dimension $m$ and output dimension $n$
computes a function from $\Real^m$ to $\Real^n$ of the form
\[
  \sigma\Big(A\vec x+\vec b\Big)
\]
for a \emph{weight matrix} $A\in \Real^{n\times m}$, a
\emph{bias vector} $\vec b\in\Real^{n}$, and an \emph{activation
  function} $\sigma:\Real\to\Real$ that is applied pointwise to
the vector $A\vec x+\vec b$.

An \emph{FNN} is a tuple $\big(L^{(1)},\ldots,L^{(d)})\big)$ of FNN layers, where the output dimension $n^{(i)}$ of $L^{(i)}$
matches the input dimension $m^{(i+1)}$ of $L^{(i+1)}$. Then the FNN computes a function from
$\Real^{m^{(1)}}$ to $\Real^{n^{(d)}}$, the composition
    $f^{(d)}\circ\cdots\circ f^{(1)}$ of the functions $f^{(i)}$ computed by
    the $d$ layers. 

    Typical activation functions used in practice are the \emph{logistic
  function} $\sig(x)\coloneqq\frac{1}{1+e^{-x}}$ (a.k.a. \emph{sigmoid
  function}), the \emph{hyperbolic tangent}
$\tanh(x)\coloneqq\frac{e^x-e^{-x}}{e^x+e^{-x}}$, and the
\emph{rectified linear unit} $\relu(x)\coloneqq\max\{x,0\}$. For
theoretical results, it is sometimes convenient to work with the
\emph{linearised sigmoid} $\lsig(x)\coloneqq\min\{\max\{x,0\},1\}$.
It is not important for us which activation function we use. We
only assume the activation function to be continuous. This implies
that functions computed by FNNs are continuous. Moreover, we want at
least some layers of our FNNs to have activation functions that are
not polynomial (in order for Theorem~\ref{theo:fnn-app} to apply).
Sometimes, we impose further restrictions.

In a machine learning setting, we only fix the shape or architecture
of the neural network and learn the weights. Formally, we say that an
\emph{FNN architecture} is a tuple
$(n^{(0)},\ldots,n^{(d)},\sigma^{(1)},\ldots,\sigma^{(d)})$ for
positive integers $d$ and $n^{(i)}$ and functions
$\sigma^{(i)}:\Real\to\Real$. We can view an FNN architecture as a
parametric model; the parameters are the entries of the weight
matrices $A^{(i)}\in \Real^{n^{(i)}\times n^{(i-1)}}$ and bias vectors
$\vec b^{(i)}\in\Real^{n^{(i)}}$ that turn the architecture into an FNN
defining a function $\Real^{n^{(0)}}\to\Real^{n^{(d)}}$. The
learning task is to set the parameters in a way that the resulting FNN
approximates an unknown target function
$f: \Real^{n^{(0)}}\to\Real^{n^{(d)}}$. Access to this unknown
function is provided through a set of \emph{labelled examples}
$(\vec x,f(\vec x))$. We cast the learning problem as a minimisation
problem: we minimise a \emph{loss function} that essentially measures the
difference between the function computed by the FNN and the target
function on the examples we are given. This is a difficult optimisation
problem, non-linear and not even convex, but it can be solved
remarkably well in practice using gradient-descent based algorithms.

So formally we distinguish between FNN architectures $\CN$, FNNs $N$
obtained from an FNN architecture by instantiating all the parameters
(the weight matrices and bias vectors), and the functions $f_N$
computed by FNNs $N$. For an \emph{FNN architecture} $\CN$, we let
$\CF_\CN$ be the class of all functions $f_N$ computed by the FNNs $N$
that we
obtain by instantiating the parameters of $\CN$.  For an FNN or FNN
architecture, we call $d$ (the number of layers) the \emph{depth}
and $\sum_{i=1}^d n^{(i)}$ the \emph{size}.  We call $n^{(0)} $ the
\emph{input dimension} and $n^{(d)}$ the \emph{output dimension},
and for $i\in[d]$ we call $n^{(i)}$ the dimension of layer
$i$. The \emph{width} of the network or network architecture is
$\max_{i\in[d]}n^{(i)}$.  

Our focus in this paper is not on learning but on
the \emph{expressiveness} of FNNs and other neural network
architectures, that is, on the question which functions can be
computed by such neural networks in principle. A fundamental
expressiveness result for FNNs is the following \emph{universal approximation
  theorem}.

\begin{theorem}[\cite{cyb89,hor91,leslinpinschoc93}]\label{theo:fnn-app}
  Let $\sigma:\Real\to\Real$ be continuous and not polynomial. Then
  for every continuous function $f:K\to\Real^n$, where
  $K\subseteq\Real^m$ is a compact set, and every $\epsilon>0$ there is a depth-$2$
  FNN $N$ with activation function $\sigma^{(1)}=\sigma$ on layer $1$ and
  no activation function on layer 2 (that is, $\sigma^{(2)}$ is the
  identity function) computing a function $f_{N}$ such that
  \[
    \sup_{\vec x\in K}\|f(\vec x)-f_{N}(\vec x)\|<\epsilon.
  \]
\end{theorem}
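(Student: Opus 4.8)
The plan is to prove the classical universal approximation theorem in the sharp form due to Leshno, Lin, Pinkus and Schocken, reducing it in stages to a one-dimensional statement about polynomials. A depth-$2$ FNN of the prescribed shape computes exactly the functions
\[
  f_N(\vec x)=A^{(2)}\,\sigma\!\big(A^{(1)}\vec x+\vec b^{(1)}\big)+\vec b^{(2)},
\]
so, coordinate by coordinate, $f_N$ is a finite linear combination of \emph{ridge building blocks} $\vec x\mapsto\sigma(\vec a\cdot\vec x+b)$ plus a constant. Since the output layer is linear and I may approximate each of the $n$ output coordinates of $f$ independently and then concatenate the hidden layers into one network, it suffices to treat scalar targets $f:\Real^m\to\Real$ and to show that
\[
  \CS\coloneqq\mathrm{span}\big\{\,\vec x\mapsto\sigma(\vec a\cdot\vec x+b)\;\big|\;\vec a\in\Real^m,\ b\in\Real\,\big\}
\]
is dense in $C(K)$ under the $\sup$-norm. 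Continuity of $\sigma$ guarantees $\CS\subseteq C(\Real^m)$, and constants arise from the members with $\vec a=\vec 0$ (together with the bias $\vec b^{(2)}$).

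First I would reduce the multivariate problem to a univariate one via ridge functions. A standard fact is that finite linear combinations of maps $\vec x\mapsto g(\vec a\cdot\vec x)$ with $g\in C(\Real)$ and $\vec a\in\Real^m$ are dense in $C(K)$; this follows from Stone--Weierstrass applied to the algebra generated by the exponentials $\vec x\mapsto e^{\vec a\cdot\vec x}$, which are themselves ridge functions and separate points. Granting this, it is enough to approximate each univariate profile $g$ on the relevant compact interval $\{\vec a\cdot\vec x\mid\vec x\in K\}$ by a combination $\sum_j c_j\sigma(\alpha_j t+\beta_j)$, because the substitution $t=\vec a\cdot\vec x$ turns this into $\sum_j c_j\sigma\big((\alpha_j\vec a)\cdot\vec x+\beta_j\big)\in\CS$. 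Thus everything reduces to the one-dimensional claim: if $\sigma\in C(\Real)$ is not a polynomial, then $\mathrm{span}\{\,t\mapsto\sigma(\alpha t+\beta)\mid\alpha,\beta\in\Real\,\}$ is dense in $C[\gamma,\delta]$ for every interval.

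The heart of the argument is this univariate statement, which I would prove in two steps. Assume first that $\sigma\in C^\infty$. For fixed $t$ the map $\alpha\mapsto\sigma(\alpha t+\beta)$ is smooth, and each derivative $\partial_\alpha^k\sigma(\alpha t+\beta)=t^k\sigma^{(k)}(\alpha t+\beta)$ is a limit of iterated difference quotients in $\alpha$, each of which is a finite linear combination of span elements; hence the closed span contains these derivatives. Evaluating at $\alpha=0$ gives $t\mapsto t^k\,\sigma^{(k)}(\beta)$. Because $\sigma$ is not a polynomial, for every $k$ there is a point $\beta_k$ with $\sigma^{(k)}(\beta_k)\neq0$, so every monomial $t^k$ lies in the closed span, and by the Weierstrass theorem the closed span is all of $C[\gamma,\delta]$. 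To remove smoothness I would mollify: for $\psi\in C_c^\infty(\Real)$ the convolution $\sigma*\psi$ is smooth, and it lies in the closure of the span since its defining integral is a uniform limit of Riemann sums, each a finite combination of shifts $t\mapsto\sigma(t-s)$ (which are span members with $\alpha=1$, $\beta=-s$). Applying the smooth case to $\sigma*\psi$ and passing to the limit then finishes the univariate claim.

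The main obstacle I anticipate is exactly this non-smooth reduction: one must exhibit a mollifier $\psi$ for which $\sigma*\psi$ is \emph{not} a polynomial, and one must make the Riemann-sum approximation uniform on the compact interval in play. The cleanest route is the contrapositive — if $\sigma*\psi$ were a polynomial for \emph{every} $\psi\in C_c^\infty(\Real)$, then a degree-counting argument (the degrees stay bounded as $\psi$ ranges over an approximate identity) forces $\sigma$ to coincide with a single fixed polynomial, contradicting the hypothesis. The remaining ingredients, namely the ridge-function density of the second paragraph and the Weierstrass theorem, are classical and I would cite rather than reprove them; the differentiation-in-$\alpha$ trick, which converts non-polynomiality directly into the presence of every monomial, is the conceptual core and is the one place where the hypothesis on $\sigma$ is genuinely used.
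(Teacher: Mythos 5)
The paper offers no proof of this theorem: it is stated as a cited classical result (Cybenko, Hornik, Leshno--Lin--Pinkus--Schocken), so there is nothing in the text to compare your argument against. On its own terms, your sketch is the standard Leshno--Lin--Pinkus--Schocken proof as organised in Pinkus's survey, and it is correct in outline: the reduction to scalar targets and to density of the ridge span $\CS$, the Stone--Weierstrass step via exponentials $e^{\vec a\cdot\vec x}$ to pass to a univariate problem, the differentiation-in-$\alpha$ trick producing $t^k\sigma^{(k)}(\beta)$ (with uniform convergence of the difference quotients on the compact interval, which holds for smooth $\sigma$), and mollification to remove smoothness are all sound, and the non-polynomiality hypothesis enters exactly where you say it does. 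The one place you should not wave at is the claim that if $\sigma*\psi$ is a polynomial for every $\psi\in C_c^\infty(\Real)$ then the degrees are bounded; ``the degrees stay bounded as $\psi$ ranges over an approximate identity'' is the conclusion you need, not an observation you get for free. The standard fix is a Baire category argument: for a fixed support interval, the sets $\{\psi : \deg(\sigma*\psi)\le k\}$ are closed linear subspaces of the Fr\'echet space $C_c^\infty([a,b])$ whose union is the whole space, so one of them is everything; only then does finite-dimensionality of bounded-degree polynomials let you pass to the limit along an approximate identity and conclude that $\sigma$ itself is a polynomial. With that step supplied, your proof is complete and matches the argument in the cited sources.
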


This seems to make neural networks extremely expressive. However, the
approximation of continuous functions on a compact domain implicitly
comes with a finiteness condition. Also, note that we put no
restriction on the dimension $n^{(1)}$ of the first layer; $n^{(1)}$ may
depend on the function $f$, and it may do so in a way that is
exponential in a description of $f$.

One way of formally capturing the limitations of FNNs is to look at
the VC dimension of families of sets decided
by such networks. Let $\CC\subseteq 2^{\mathbb X}$ be a set of subsets
of some set
$\mathbb X$ (in the case of FNNs, $\mathbb X=\Real^n$). We say that $X\subseteq \mathbb X$ is \emph{shattered} by
$\CC$ if for every subset $Y\subseteq X$ there is a $C\in\CC$ such
that $C\cap X=Y$. The \emph{Vapnik Chervonenkis
  (VC) dimension} \cite{vapche71} of $\CC$ is the maximum size of a finite set
shattered by $\CC$, or $\infty$ if this maximum does not exist. For an
FNN architecture $\CN$ of input dimension $n$ and output dimension $1$, we let
$\CC_\CN$ be the set of all sets $C_f\coloneqq \{\vec x\in\Real^n\mid f(\vec x)>0\}$
for
$f\in\CF_{\CN}$.

For a wide class of \emph{Pfaffian functions} that includes all the
activation functions discussed above, we have the following bound on
the VC dimension of FNN architectures.

\begin{theorem}[\cite{karmac97}]
  Let $\CN$ be an FNN architecture (of output dimension $1$) whose activation functions are
  Pfaffian functions. Then the VC dimension of $\CC_{\CN}$ is
  polynomial in the size of $\CN$ (where the polynomial depends on the
  specific activation functions used).
\end{theorem}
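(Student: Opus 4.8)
The plan is to follow the strategy of Karpinski and Macintyre: reduce the VC dimension to a counting problem about sign patterns of Pfaffian functions in the \emph{parameters} of the network, and then invoke a Khovanskii-type bound on the number of such sign patterns.

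First I would fix the architecture $\CN$ and collect all its tunable parameters---the entries of the weight matrices $A^{(i)}$ and the bias vectors $\vec b^{(i)}$---into a single parameter vector $\vec\theta\in\Real^W$, where $W$ is the total number of parameters and is clearly bounded by a polynomial in the size of $\CN$. The key structural observation is that for each \emph{fixed} input $\vec x\in\Real^{n^{(0)}}$, the map $\vec\theta\mapsto f_N(\vec x)$ (where $N$ is the FNN obtained from $\CN$ by instantiating its parameters as $\vec\theta$) is a Pfaffian function of $\vec\theta$. Indeed, $f_N(\vec x)$ is a composition of affine maps in $\vec\theta$ (the pre-activations of each layer) with the activation functions $\sigma^{(i)}$, and Pfaffian functions are closed under composition with polynomials and with one another. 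Tracking this composition through the $d$ layers shows that $\vec\theta\mapsto f_N(\vec x)$ has a Pfaffian format (chain length $r$ and degree bounds) that is controlled polynomially by the size of $\CN$ and by the fixed formats of the individual activation functions.

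Next I would reduce shattering to counting sign conditions. Suppose a finite set $\{\vec x_1,\dots,\vec x_m\}\subseteq\Real^{n^{(0)}}$ is shattered by $\CC_\CN$. For each $i\in[m]$ define $g_i:\Real^W\to\Real$ by $g_i(\vec\theta)\coloneqq f_N(\vec x_i)$. By definition of $\CC_\CN$, membership of $\vec x_i$ in $C_{f_N}$ is decided by the sign of $g_i(\vec\theta)$, so shattering means that as $\vec\theta$ ranges over $\Real^W$ the vector $(\mathrm{sgn}\,g_1(\vec\theta),\dots,\mathrm{sgn}\,g_m(\vec\theta))$ realises all $2^m$ possible sign patterns. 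Hence $2^m$ is at most the number of distinct sign conditions realised by the $m$ Pfaffian functions $g_1,\dots,g_m$ on $\Real^W$.

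The main analytic input---and the hardest part---is a bound on this number of sign conditions coming from the theory of fewnomials (Khovanskii): the number of connected components, and more generally the number of realisable sign patterns, of a system of Pfaffian functions in $W$ variables sharing a Pfaffian chain of length $r$ with bounded degrees is at most $2^{\mathrm{poly}(W,r)}\cdot m^{O(W)}$. The delicate points are obtaining the explicit dependence on the format and verifying that, for our fixed family of activation functions, the format parameters ($r$ and the degrees) grow only polynomially in $W$; substituting these in yields a bound of the shape $2^{\mathrm{poly}(W)}\cdot m^{O(W)}$ on the number of sign patterns. Finally I would combine the two estimates: from $2^m\le 2^{\mathrm{poly}(W)}\cdot m^{O(W)}$ we get, upon taking logarithms, $m\le\mathrm{poly}(W)+O(W)\log m$, and since the term $O(W)\log m$ is eventually dominated this forces $m=O(\mathrm{poly}(W))$. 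Thus the VC dimension of $\CC_\CN$ is polynomial in $W$, hence polynomial in the size of $\CN$, with the degree and constants of the polynomial depending on the formats of the activation functions, as asserted.
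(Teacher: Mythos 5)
The paper does not prove this theorem; it is imported verbatim from Karpinski and Macintyre \cite{karmac97}, so there is no in-paper argument to compare against. Your sketch correctly reproduces the strategy of that reference: view the network output at each fixed input as a Pfaffian function of the parameter vector $\vec\theta\in\Real^W$, observe that shattering $m$ points forces $2^m$ realisable dichotomies and hence $2^m$ realisable sign patterns of the functions $g_1,\ldots,g_m$, bound that count by a Khovanskii-type estimate of the form $2^{\mathrm{poly}(W)}m^{O(W)}$, and solve $2^m\le 2^{\mathrm{poly}(W)}m^{O(W)}$ for $m$. Two caveats are worth making explicit. First, essentially all of the mathematical content lives in the step you flag as ``the main analytic input'': the bound on consistent sign assignments does not follow immediately from Khovanskii's bound on connected components of Pfaffian varieties --- one needs the standard but nontrivial arrangement/perturbation argument to pass from components of zero sets to sign conditions of the family $\{g_i\}$, and one must verify that the Pfaffian chain length and degrees of the composed functions $g_i$ grow only polynomially through the $d$ layers (the chain must be shared across all $m$ functions for the bound to apply in the stated form). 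Second, a small bookkeeping point: membership in $C_{f_N}$ is the condition $g_i(\vec\theta)>0$ versus $g_i(\vec\theta)\le 0$, so the number of dichotomies is bounded by the number of sign vectors in $\{-1,0,+1\}^m$, which is what the Pfaffian machinery actually counts; your phrasing ``all $2^m$ possible sign patterns'' conflates the two, harmlessly but worth tightening. As a blind reconstruction of the cited proof at the level of a sketch, this is accurate.
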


While it limits the expressiveness of FNNs, from the perspective of
machine learning, this theorem should be viewed as a positive result. The
reason is that
the number of labelled examples required to give statistical
approximation guarantees in the PAC model \cite{val84} for the FNN
learned from the examples can be bounded linearly in terms of the VC
dimension \cite{bluehrhauwar89}.

As a computation model, FNNs are similar to Boolean circuits, except
that they do ``analog'' computations using arbitrary real numbers as
weights. The question arises if such analog circuits are more
expressive than Boolean circuits. In general, they are, but
surprisingly they are not if we restrict the activation functions that
can be used to be piecewise polynomial. It
can be shown that properties decided by families of FNNs of
bounded depth and polynomial size with piecewise polynomial activation
functions belong to
$\TC^0$ \cite{maa97}. Recall that
$\TC^0$ is the class of all properties decidable
by a family of Boolean threshold circuits of bounded depths and
polynomial size. In threshold circuits, we have threshold gates that
output $1$ if at least $t$ of their inputs are $1$, for some $t$.

\section{Graph Neural Networks}
\label{sec:gnn}
Let us consider a machine learning scenario on graphs where we want to
learn a function defined on graphs, vertices, or tuples of
vertices. We usually think of a supervised learning
scenario, where we want to learn an unknown function from labelled
examples, but an equally important scenario is that of learning a
representation of graphs or their vertices. \emph{Crucially, we want these
  functions to be invariants.}\footnote{To avoid confusion here with
  respect to the usage of invariant and equivariant neural networks in the machine learning literature, recall that we
  defined a \emph{vertex invariant} to be an \emph{equivariant}
  function that associates a function defined on $V(G)$ with each
  graph $G$, and more generally a \emph{$k$-ary invariant} to be an equivariant
  function that associates a function defined on $V(G)^k$ with each
  graph $G$.}

In both the supervised and representation learning scenarios we
``learn'' models representing the functions, and we want to guarantee
that the functions represented by our models are invariants. We could
train FNNs taking as inputs the adjacency matrices of graphs, but 
it would be difficult to ensure invariance. Graph neural networks
(GNNs) are neural network architectures that guarantee invariance by
their design.

Many different versions of the GNNs have been considered in the
literature (see, for example, \cite{galmic10,DBLP:conf/icml/GilmerSRVD17,hamyinles17,kipwel17,scagortso09}). Here, we focus on what seems
to be a standard, core model. It sometimes goes under the name
``message passing neural network (MPNN)'' \cite{DBLP:conf/icml/GilmerSRVD17} or ``aggregate-combine graph neural network (AC-GNN)'' \cite{DBLP:conf/iclr/BarceloKM0RS20}.

\subsection{GNNs as a Computation Model}
A \emph{GNN layer} of input dimension $p$ and output dimension $q$  is
defined by specifying two functions: an \emph{aggregation function}
$\agg$ mapping finite multisets of vectors in $\Real^{p}$ to vectors
in $\Real^{p'}$, and a \emph{combination function}
$\comb:\Real^{p+p'}\to\Real^q$.
A GNN is a tuple $(L^{(1)},\ldots,L^{(d)})$ of GNN layers, where the output dimension $q^{(i)}$ of $L^{(i)}$
matches the input dimension $p^{(i+1)}$ of $L^{(i+1)}$. We call
$q^{(0)}\coloneqq p^{(1)}$ the input dimension of the GNN and
$q^{(d)}$ the output dimension. 

To define the semantics, recall that we call mappings $\zeta:V(G)\to\Real^p$
$p$-dimensional \emph{feature
  maps}.
Let $L$ be a GNN layer of input dimension $p$ and output dimension
$q$ with aggregation function $\agg$ and combination function
$\comb$. Applied to a graph $G$, it transforms a $p$-dimensional feature map $\zeta$ to
a $q$-dimensional feature map $\eta$ defined by
\begin{equation}
  \label{eq:2}
    \eta(v)\coloneqq \comb\Big(\zeta(v),\agg\big(\biglmulti \zeta(w)\bigmid w\in N^G(v)\bigrmulti\big)\Big).
\end{equation}
Computationally, it may be useful to think of the features $\zeta(v)$
of the vertices as states. Then the feature transformation computed by the GNN layer is based
on a simple distributed message passing protocol whose computation
nodes are the vertices of our graph: each node $v$ sends its state $\zeta(v)$ to all its
neighbours. Upon receiving the state $\zeta(w)$ of its neighbours,
node $v$ applies the aggregation function $\agg$ to the multiset of
these states, yielding a vector $\alpha(v)$ and then uses the combination function $\comb$
to compute the new state $\eta(v)$ from $\zeta(v)$ and $\alpha(v)$.

Observe that this transformation is isomorphism invariant: for graphs
$G,G'$ and feature maps $\zeta\in\CF_p(G),\zeta'\in\CF_p(G')$ transformed by $L$
to feature maps $\eta,\eta'$, if $f$ is
an isomorphism from $G$ to $G'$ satisfying $\zeta(v)=\zeta'(f(v))$ for all
$v\in V(G)$, then  $\eta(v)=\eta'(f(v))$ for all
$v\in V(G)$.

A GNN consisting of layers $L^{(1)},\ldots,L^{(d)}$ composes the feature transformations computed by all its
layers. Thus if the input dimension of $L^{(0)}$ is $q^{(0)}$ and the
output dimension of $L^{(d)}$ is $q^{(d)}$, on a graph $G$ it
transforms $q^{(0)}$-dimensional feature maps $\zeta^{(0)}$ to
$q^{(d)}$-dimensional feature maps
$\zeta^{(d)}$. We usually denote the
intermediates feature maps obtained by applying the layers
$L^{(0)},\ldots,L^{(t)}$ to the initial feature map $\zeta^{(0)}$ by
$\zeta^{(t)}$. Then clearly, the transformation
$\zeta^{(0)}\mapsto\zeta^{(t)}$ is isomorphism invariant for all $t$.

There is an alternative mode of operation of GNNs where we do not have
a fixed number of layers but repeatedly apply the same layer. A
\emph{recurrent GNN} consists of a single GNN layer with the
same input and output dimension $q$. Applied to an initial
$q$-dimensional feature map $\zeta^{(0)}$, it yields an infinite sequence
$\zeta^{(1)},\zeta^{(2)},\ldots$ of $q$-dimensional feature maps,
where $\zeta^{(t+1)}$ is obtained by applying the GNN layer to
$(G,\zeta^{(t)})$. Again, the transformation
$\zeta^{(0)}\mapsto\zeta^{(t)}$ is isomorphism invariant for all
$t$. Note that we do not impose any convergence requirements on the
sequence $\zeta^{(1)},\zeta^{(2)},\ldots$; typically, there will be no convergence.

In the end, we usually do not want to compute feature transformations
but functions defined on graphs (graph invariants) or on the vertices
of graphs (vertex invariants). The features are only supposed to be
internal representations. Towards this end, we choose the initial
feature map $\zeta^{(0)}$ to represent the labelling of the input
graph $G$. That is, we let $\zeta^{(0)}\coloneqq\col(G)$ possibly
padded by $0$s to reach the input dimension of our GNN. We always
assume that the input dimension $q^{(0)}$ of the GNN is at least the
label number $\ell$ of the input graphs. In Section~\ref{sec:ri} we
shall discuss GNNs with random initialisation, where we pad the
initial feature with random numbers.

If we want to define a vertex invariant with
range $\mathbb Y$, we define a readout function
$\ro:\Real^{q^{(d)}}\to\mathbb Y$ that we apply to the final feature
  $\zeta^{(d)}(v)$ of every node. 
For recurrent GNNs, we need to fix the number $d=d(G)$ of iterations
we take; it may depend on the input graph. Then the (recurrent) GNN
maps a graph $G$ to $\ro(\zeta^{(d)})$. The readout function
$\ro$ is part of the specification of a GNN (or GNN architecture), and
for a recurrent GNN the function $d$ also needs to be specified.

If we want to use our GNN to compute a graph invariant, say, also with
range $\mathbb Y$, we need to
specify an \emph{aggregate readout function} $\aggro$ mapping finite
multisets of vectors in $\Real^{q^{(d)}}$ to $\mathbb Y$, and we
apply $\aggro$ to the multiset of all nodes' output features. So
the GNN maps $G$ to
\[
  \aggro\Big(\biglmulti \zeta^{(d)}(v)\bigmid v\in
  V(G)\bigrmulti\Big).
\]
Again, for recurrent GNNs we need to specify $d=d(G)$.

\subsection{Learning GNNs}

As defined so far, GNNs are just distributed
algorithms. What turns them into neural network models is that the
aggregation, combination, and readout functions are learned functions
represented by neural networks.

It is clear how to do this for the combination and readout
functions. For the theoretical analysis carried out in this paper it
suffices to assume that $\comb$ and $\ro$ are computed by FNNs of
depth 2 with a non-polynomial activation function on the interior
layer. Then by Theorem~\ref{theo:fnn-app}, we can approximate
arbitrary continuous functions by $\comb$ and $\ro$. In the
literature, it is sometimes assumed that $\comb$ is computed by a single FNN layer, for
example, in the ``simple GNNs'' of
\cite{DBLP:conf/iclr/BarceloKM0RS20}. We do not make this
assumption here. Several FNN layers in $\comb$ can always be simulated
by several ``simple'' GNN layers, so this does not change the
expressiveness of the model. However, allowing several layers and
thereby a direct application of the universal approximation property
of FNNs makes our analysis simpler and cleaner.

Next, let us look at the aggregation function $\agg$. Here we have the
difficulty that it is not defined on vectors, but on multisets of
vectors. In general, we can think of the aggregation function as being
composed of three functions:
\begin{itemize}
\item
  a ``message function''
$\logic{msg}:\Real^p\to\Real^{p''}$ that takes the states of the neighbours and
distills from them the messages send,
\item
an actual aggregation
function $\logic{aa}$ from finite multisets over $\Real^{p''}$ to
$\Real^{p''}$ that combines the multiset of messages received into a
single vector,
\item a post-processing function $\logic{pp}:\Real^{p''}\to\Real^{p'}$.
  which is passed on to the combination function.
\end{itemize}
  The aggregation $\logic{aa}$ is
usually a fixed function, typically the sum, maximum, or
arithmetic mean of the elements of the multiset. We always use
summation as aggregation. It has been proved in \cite{xuhulesjeg19} that
summation is more expressive than max or mean and sufficient
to realise the full expressiveness of GNNs (at least from the
perspective from which we analyse GNNs here). The message function
$\logic{msg}$ and
post-processing function $\logic{pp}$ can be learned functions; we could
assume them to be represented by a FNNs. However, at least in theory
we do not need the functions $\logic{msg}$ and $\logic{pp}$ at all, because we can merge $\logic{msg}$ into the combination
function of the previous GNN-layer (formally, by writing the message
to designated coordinates of the feature vector), and we can merge
$\logic{pp}$ into the combination function of the current layer.
Thus, in the following, we assume that $\agg$ is simply the summation
function. Then a single GNN layer (cf.~\eqref{eq:2}) maps a feature map $\zeta$ to the
feature map $\eta$ defined by
\begin{equation}
  \label{eq:3}
   \eta(v)\coloneqq \comb\Big(\zeta(v),\sum_{w\in N^G(v)}\zeta(w)\Big),
\end{equation}
where $\comb$ is a function represented by an FNN.

Let me remark that in practice, one
typically uses a learned linear messaging function $\logic{msg}$, so the summation
in \eqref{eq:3} becomes $\sum_{w\in N^G(v)}A\zeta(w)$ for
a weight matrix $A$.

Finally, if we need an aggregate readout function $\aggro$, we take it
to be the summation of the feature vectors in the multiset it receives
followed by some post-processing function represented by an FNN.

Having specified the functions $\comb,\agg,\ro,\aggro$ this way, we can
now describe their FNN architectures, and we can combine these into a
\emph{GNN architecture}. Then we can learn the parameters of such a GNN
architecture from labeled examples using the same techniques as for
FNNs. However, as for FNNs, we are mainly concerned with
expressiveness questions here.

\subsection{Variants}\label{sec:gnn-variants}
Numerous variants of our basic message passing GNN model have been
suggested in the literature; I refer the reader to the recent survey
\cite{DBLP:journals/tnn/WuPCLZY21}. Let me briefly discuss a few
variants that will be useful later in this paper.

We can define GNNs not only for undirected vertex-labelled graphs, as
we do here, but also for directed graphs with possibly labelled
edges, that is, for arbitrary binary relational structures. The
easiest way to adapt GNNs to this setting is by introducing an explicit
(learned) message function $\logic{msg}$ that not only depends on the current state
of the source node of the message, but also on the type of edges and the direction they are traversed (see, for example, \cite{schlikipblo+18,tonritwolgro20+}). 

A second addition to the basic model that will be relevant for us is
that of a \emph{global readout}. As defined, the information a node
receives during a GNN computation is inherently local. For example, a node can
never receive any information about a node in a different connected
component of the input graph. What we can do to mitigate this weakness
is to provide aggregate global information to each node. The easiest
way to do 
this is by modifying the update function \eqref{eq:3} to 
\begin{equation}
  \label{eq:4}
   \eta(v)\coloneqq \comb\Big(\zeta(v),\sum_{w\in N^G(v)}\zeta(w), \sum_{w\in V(G)}\zeta(w)\Big).
\end{equation}
Following \cite{DBLP:conf/iclr/BarceloKM0RS20}, we call GNNs with this
form of updates \emph{GNNs with global readout}. An essentially
equivalent mechanism to distribute global information is adding a \emph{virtual node} connected to all other nodes of
the graph \cite{DBLP:conf/icml/GilmerSRVD17}.

A third variant of basic GNNs, \emph{higher order GNNs}, will be
discussed in Section~\ref{sec:higher-order}.

\section{Weisfeiler and Leman Go Neural}
\label{sec:wl2gnn}

In this section, we shall prove that GNNs have exactly the same
expressiveness as colour refinement when it comes to distinguishing
graphs or their vertices. 

\begin{theorem}[\cite{morritfey+19},\cite{xuhulesjeg19}]\label{theo:gnn-upper}
  Let $d\ge 1$, and let $\xi$ be a vertex invariant computed by an
  $d$-layer GNN. Then $\colref d$ refines $\xi$, that is, for all graphs $G,G'$ and vertices $v\in
  V(G),v'\in V(G')$,
  \[
    \colref{d}(G,v)=\colref{d}(G',v')\implies\xi(G,v)=\xi(G',v').
  \]
\end{theorem}

This theorem holds regardless of the choice of the aggregation
function $\agg$ and the combination function $\comb$.

\begin{proof}
  For $0\le t\le d$, let $\zeta^{(t)}$ be the feature map computed
  by the first $t$ layers of the GNN on $G$ and $G'$. We assume that
  $V(G)$ and $V(G')$ are disjoint and use $\zeta^{(t)}$ to denote the
  union of the feature maps on the two graphs. To further simplify the notation, for $v\in V(G)\cup V(G')$ we write $\colref{t}(v)$ instead of
  $\colref{t}(G,v)$ or $\colref{t}(G',v)$.

  We shall prove by induction on $t$
  that $\colref{t}$ refines $\zeta^{(t)}$. As
  $\xi(G,v)=\ro(\zeta^{(d)}(v))$, this will imply the theorem.

  For the base
  step, recall that the initial feature map $\zeta^{(0)}$ as well as
  $\colref0=\col$ just encode
  the label information of the graph. So let us consider the
  inductive step $t\to t+1$. By the
  induction hypothesis, $\colref{t}$
  refines $\zeta^{(t)}$. 
  Let $v\in V(G),v'\in V(G')$ such that
  $\colref{t+1}(v)=\colref{t+1}(v')$. Then
  $\colref{t}(v)=\colref{t}(v')$ and
  \[
    \biglmulti \colref{t}(w)\bigmid w\in N^{G}(v)\bigrmulti
    =\biglmulti \colref{t}(w')\bigmid w\in N^{G'}(v')\bigrmulti.
  \]
  Thus, by the induction hypothesis,
  $\zeta^{(t)}(v)=\zeta^{(t)}(v')$ and 
  \[
    \underbrace{\biglmulti \zeta^{(t)}(w)\bigmid w\in N^{G}(v)\bigrmulti}_{=:M}
    =\underbrace{\biglmulti \zeta^{(t)}(w')\bigmid w\in N^{G'}(v')\bigrmulti}_{=:M'}.
  \]
  It follows that
  \begin{align*}
    \zeta^{(t+1)}(v)&=\comb\big(\zeta^{(t)}(v),\agg(M)\big)\\
                       &=\comb\big(\zeta^{(t)}(v'),\agg(M')\big)=\zeta^{(t+1)}(v').\qedhere
  \end{align*}
\end{proof}

\begin{corollary}\label{cor:upper1}
  Let $\xi$ be a vertex invariant computed by a recurrent
  GNN. Then $\colref{\infty}$ refines $\xi$.
\end{corollary}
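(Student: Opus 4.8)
The plan is to reduce directly to Theorem~\ref{theo:gnn-upper}. The key observation is that running a recurrent GNN for $t$ iterations is literally the same as running a $t$-layer (non-recurrent) GNN all of whose layers coincide with the single recurrent layer. Thus Theorem~\ref{theo:gnn-upper} applies verbatim at every depth $t$ and tells us that $\colref{t}$ refines the $t$-th feature map $\zeta^{(t)}$ produced by the recurrent GNN; that is, $\colref{t}(G,v)=\colref{t}(G',v')$ implies $\zeta^{(t)}(G,v)=\zeta^{(t)}(G',v')$ for every $t\in\Nat$ and all graphs $G,G'$.

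Next I would combine this with the monotonicity of colour refinement. Since $\colref{t+1}(G)\preceq\colref{t}(G)$ for all $t$, the stable colouring is the finest of all the intermediate colourings, so $\colref{\infty}\preceq\colref{t}$ for every $t$. By transitivity of refinement, $\colref{\infty}\preceq\zeta^{(t)}$ for every $t$, in particular at the iteration count $d$ used by the recurrent GNN. Applying the readout function $\ro$, which is a genuine function of the final feature, then yields $\colref{\infty}\preceq\xi$, since $\xi(G,v)=\ro(\zeta^{(d)}(G,v))$. This is the whole argument at the level of partitions within a single graph.

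The step that requires the most care is the cross-graph comparison, because the colours $\colref{\infty}(G,v)$ and $\colref{\infty}(G',v')$ are produced on different graphs and, a priori, after different stabilisation times. To make the monotonicity step precise across graphs, I would run colour refinement on the disjoint union $G\sqcup G'$ (as in the proof of Theorem~\ref{theo:gnn-upper}, assuming $V(G)$ and $V(G')$ disjoint): since there are no edges between the two components, $\colref{t}(G\sqcup G',v)=\colref{t}(G,v)$ for $v\in V(G)$ and likewise for $G'$, so the colourings on the union serve as a common refinement yardstick. Then $\colref{\infty}(G,v)=\colref{\infty}(G',v')$ forces $\colref{t}(G,v)=\colref{t}(G',v')$ for every finite $t$, which is exactly the input needed for the chain above. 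The genuinely delicate point, which I expect to be the crux rather than the refinement argument itself, is the dependence of the iteration count on the graph: if $d(G)\neq d(G')$ the two readouts are taken at different depths, and to compare them one must either treat $d$ as a common, graph-independent number of rounds or otherwise align the depths before invoking Theorem~\ref{theo:gnn-upper}.
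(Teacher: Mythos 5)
Your argument is correct and is exactly the intended derivation: unroll the recurrent GNN into a $d$-layer GNN, invoke Theorem~\ref{theo:gnn-upper}, and use that $\colref{\infty}(G)\preceq\colref{t}(G)$ for every $t$ (which is automatic here, since $\colref{t+1}(G,v)$ carries $\colref{t}(G,v)$ as its first component). The cross-graph alignment you flag as the crux in your last paragraph is not actually needed: the paper defines refinement between invariants graph-by-graph ($\xi$ refines $\xi'$ iff $\xi(G)$ refines $\xi'(G)$ for every single graph $G$), so only one iteration count $d=d(G)$ is ever in play and the question of comparing readouts taken at depths $d(G)\neq d(G')$ does not arise.
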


\begin{corollary}\label{cor:upper2}
  Let $\xi$ be a graph invariant computed by a GNN (recurrent or
  not). Then for all graphs $G,G'$, if $\xi(G)\neq\xi(G')$ then
  $\colref{\infty}$ distinguishes $G$ and $G'$.
\end{corollary}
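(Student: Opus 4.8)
The plan is to prove the contrapositive: assuming that $\colref{\infty}$ does not distinguish $G$ and $G'$, i.e.\ $\widehat{\colref{\infty}}(G)=\widehat{\colref{\infty}}(G')$, I will show that $\xi(G)=\xi(G')$. Recall that a graph invariant computed by a GNN has the form $\xi(G)=\aggro\big(\biglmulti\zeta^{(d)}(v)\bigmid v\in V(G)\bigrmulti\big)$, where $\zeta^{(d)}$ is the final vertex feature map ($d$ being the number of layers, or $d=d(G)$ iterations in the recurrent case) and $\aggro$ is the aggregate readout. So it suffices to establish the equality of feature multisets
\[
  \biglmulti\zeta^{(d)}(v)\bigmid v\in V(G)\bigrmulti
  =\biglmulti\zeta^{(d)}(v')\bigmid v'\in V(G')\bigrmulti ;
\]
applying $\aggro$ to both sides then yields $\xi(G)=\xi(G')$.

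As in the proof of Theorem~\ref{theo:gnn-upper}, I work over the disjoint union of $G$ and $G'$ and write $\colref{t}(v)$ and $\zeta^{(t)}(v)$ for the colour and feature of a vertex $v$ in either graph. The key observation is that a refinement relation $\chi\preceq\chi'$ says exactly that $\chi'$ factors through $\chi$, i.e.\ $\chi'=h\circ\chi$ for a well-defined function $h$ on the colours of $\chi$. By Theorem~\ref{theo:gnn-upper} we have $\colref{d}\preceq\zeta^{(d)}$, and since the stable colouring is finer than every finite-round colouring we have $\colref{\infty}\preceq\colref{d}$; by transitivity $\colref{\infty}\preceq\zeta^{(d)}$. (For a recurrent GNN the same relation is supplied directly by Corollary~\ref{cor:upper1} applied to the feature map itself.) Hence there is a function $h$ with $\zeta^{(d)}(v)=h\big(\colref{\infty}(v)\big)$ for every vertex $v$ of the disjoint union.

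It then remains only to push the assumed equality of $\colref{\infty}$-multisets through $h$. Since $h$ acts elementwise on multisets,
\begin{align*}
  \biglmulti\zeta^{(d)}(v)\bigmid v\in V(G)\bigrmulti
  &=\biglmulti h\big(\colref{\infty}(v)\big)\bigmid v\in V(G)\bigrmulti\\
  &=\biglmulti h\big(\colref{\infty}(v')\big)\bigmid v'\in V(G')\bigrmulti\\
  &=\biglmulti\zeta^{(d)}(v')\bigmid v'\in V(G')\bigrmulti,
\end{align*}
where the middle equality uses $\widehat{\colref{\infty}}(G)=\widehat{\colref{\infty}}(G')$. This is precisely the equality of feature multisets we needed, completing the argument.

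I expect the only genuine subtlety to be the recurrent case, where the number of iterations $d=d(G)$ may depend on the input graph: the computation above compares $\zeta^{(d)}$ on both graphs for a \emph{single} value of $d$, so I must know that $d(G)=d(G')$ whenever $\colref{\infty}$ fails to distinguish $G$ and $G'$. This is automatic as soon as $d$ depends only on $\colref{\infty}$-invariant data; in particular the order $|G|$ is recoverable as the order of the multiset $\widehat{\colref{\infty}}(G)$, so $\widehat{\colref{\infty}}(G)=\widehat{\colref{\infty}}(G')$ already forces $|G|=|G'|$ and hence $d(G)=d(G')$ for the usual choices of iteration count. Apart from this bookkeeping, the proof is a one-line consequence of Theorem~\ref{theo:gnn-upper} and the observation that refinement is the same as factoring through a function.
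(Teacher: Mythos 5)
Your overall strategy---passing to the contrapositive and observing that Theorem~\ref{theo:gnn-upper} gives a factoring $\zeta^{(d)}=h\circ\colref{d}$ through which equality of colour multisets can be pushed---is exactly the argument the paper intends; the corollary is stated there without proof as an immediate consequence of Theorem~\ref{theo:gnn-upper}, and your bookkeeping for the recurrent case (recovering $|G|=|G'|$ from the order of the multisets) is a sensible addition.

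However, the one step you treat as free, namely ``$\colref{\infty}\preceq\colref{d}$ since the stable colouring is finer than every finite-round colouring'', fails in the cross-graph sense in which you need it. The paper defines $\colref{\infty}(G)\coloneqq\colref{t_\infty}(G)$ for a \emph{graph-dependent} stabilisation round $t_\infty=t_\infty(G)$, so the stable colour of a vertex records nothing about rounds beyond $t_\infty(G)$. Take $G=C_6$ and $G'=K_{3,3}$, both unlabelled and regular: then $t_\infty(G)=t_\infty(G')=0$, every vertex of either graph has stable colour $()$, and $\widehat{\colref{\infty}}(G)=\widehat{\colref{\infty}}(G')$; yet $\colref{1}$ already separates the two vertex sets (degree $2$ versus $3$), so no function $h$ on the stable colours of the disjoint union can satisfy $\zeta^{(d)}=h\circ\colref{\infty}$ for a GNN that computes vertex degrees. (Indeed, such a GNN distinguishes the two graphs, so under a strictly literal reading of the paper's $\colref{\infty}$ the corollary itself is false; the definition, not your idea, is what is off here.) The intended reading, which your proof must make explicit, is that ``$\colref{\infty}$ distinguishes $G$ and $G'$'' means that \emph{some} round does---equivalently, that the stable colouring of the disjoint union $G\sqcup G'$ has a class meeting $V(G)$ and $V(G')$ in different numbers. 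With that reading your argument goes through verbatim: the stable colouring of $G\sqcup G'$ refines $\colref{d}$ of $G\sqcup G'$ \emph{within that single graph}, which is all the factoring step needs, and colour refinement on a disjoint union restricts to colour refinement on its components. So replace the appeal to ``$\colref{\infty}$ is finer than $\colref{d}$'' by this disjoint-union formulation (or, equivalently, first establish $\widehat{\colref{t}}(G)=\widehat{\colref{t}}(G')$ for every $t$); the remainder of your proof is correct.
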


Let us now turn to the converse. The following theorem is a slight
strengthening of a theorem due to \cite{morritfey+19,xuhulesjeg19} in
that we find a single recurrent GNN that captures all colour
refinement iterations. The result is still not uniform because we
need GNNs depending on the size of the input graphs.

\begin{theorem}[\cite{morritfey+19,xuhulesjeg19}]\label{theo:gnn-lower}
  Let $n\in\Nat$. Then there is a recurrent GNN such that for all
  $t\le n$, the vertex invariant $\xi^{(t)}$ computed in the $t$-th
  iteration of the GNN refines $\colref t$ on all graphs of order at
  most $n$.

  That is, for all graph $G,G'$ of order at most $n$ and all vertices
  $v\in V(G),v'\in V(G')$:
  \[
    \xi^{(t)}(G,v)=\xi^{(t)}(G',v')\implies \colref{t}(G,v)=\colref{t}(G',v').
  \]
 \end{theorem}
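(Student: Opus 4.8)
The plan is to construct a single recurrent GNN whose state after $t$ iterations carries enough information to recover $\colref{t}(G,v)$ for every graph of order at most $n$. The key insight is that colour refinement is itself a message-passing procedure of exactly the form a GNN layer implements, so the obstacle is not the structure of the recursion but rather the injectivity of the aggregation: I must encode the pair $\big(\colref{t}(G,v),\lmulti\colref{t}(G,w)\mid w\in N^G(v)\rmulti\big)$ as a single real vector in a way that an FNN-based combination function can realise.

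First I would bound the number of distinct colours that colour refinement can ever produce on graphs of order at most $n$: since each $\colref{t}$ is a refinement of the previous and $V(G)^1$ has at most $n$ elements, there are at most $n$ colour classes per graph, and over all iterations $t\le n$ the colours live in a fixed finite set whose size depends only on $n$. I would therefore fix an injective encoding of this finite set of possible colours into $\Real$ (or $\Real^q$ for suitable $q$), say mapping the $i$-th colour to a distinct real number $r_i$. The crucial arithmetic trick is that if colours are encoded as sufficiently separated reals (for instance powers of a large base, or integers whose digits in a large enough base record the multiplicities), then the \emph{sum} $\sum_{w\in N^G(v)}\zeta^{(t)}(w)$ uniquely determines the multiset $\lmulti\zeta^{(t)}(w)\mid w\in N^G(v)\rmulti$, because multiplicities are bounded by $n$ and a base-$(n+1)$ positional encoding makes the summation injective on multisets of that bounded order. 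This is exactly where the dependence on $n$ enters and why the result is non-uniform.

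Given such an encoding, I would design the recurrent GNN layer so that its combination function $\comb(\vec x,\vec s)$, where $\vec x$ encodes the current colour $\colref{t}(G,v)$ and $\vec s=\sum_{w\in N^G(v)}\zeta^{(t)}(w)$, first decodes from $\vec s$ the multiset of neighbour colours, then outputs the encoding of the refined colour $\colref{t+1}(G,v)=\big(\colref{t}(G,v),\lmulti\colref{t}(G,w)\mid w\in N^G(v)\rmulti\big)$. Since the relevant domain is a finite set of inputs (all achievable encoded colours and neighbour-sums on graphs of order at most $n$), the map I need $\comb$ to compute is a function on a finite subset of $\Real^{p}$; by Theorem~\ref{theo:fnn-app} it can be realised exactly on this finite set by an FNN with a non-polynomial activation, because a continuous function interpolating the finitely many prescribed values exists and need only be approximated closely enough to fall within the separation margin of the encoding. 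The initial feature map $\zeta^{(0)}=\col(G)$ already encodes $\colref{0}$, establishing the base of an induction on $t$.

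The main obstacle I expect is making the injective multiset encoding precise and compatible with a \emph{fixed} combination function that works simultaneously for all $t\le n$: the colours grow in descriptive complexity with $t$, so I must re-encode, at each iteration, the refined colour back into the same fixed-dimensional representation $\Real^q$. This requires that the total number of colours arising across all $n$ iterations be bounded (which it is, by the stabilisation of colour refinement within $n$ rounds and the boundedness of colour classes), and that a single relabelling function collapse the composite colour $(\text{old},\text{multiset})$ to its index in this fixed finite palette. Once this bookkeeping is set up, the induction is routine: assuming $\zeta^{(t)}$ encodes $\colref{t}$ injectively, the layer computes a feature that encodes $\colref{t+1}$ injectively, and applying the readout recovers colour equality, which yields the claimed implication $\xi^{(t)}(G,v)=\xi^{(t)}(G',v')\implies\colref{t}(G,v)=\colref{t}(G',v')$.
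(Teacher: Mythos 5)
Your proposal is correct and follows essentially the same route as the paper: the paper's Lemma~\ref{lem:sum} is exactly your base-$(n+1)$ positional encoding making summation injective on multisets of bounded order, and the paper likewise builds a single continuous decode-and-re-encode combination function on the finite set of reachable colour codes (its sets $X_t$ and function $g$), realises it approximately by an FNN via Theorem~\ref{theo:fnn-app}, and closes with an induction on $t$. The only cosmetic difference is that the paper folds the vertex's own state into the neighbour sum with a factor of $2$ (recovering it as the unique odd-multiplicity element) rather than treating the two arguments of $\comb$ separately as you do.
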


The theorem even holds for the restricted GNN model with sum-aggregation,
that is, with \eqref{eq:3} as update operation.

In the proof of the theorem, we follow \cite{xuhulesjeg19}. We need
the following lemma.

\begin{lemma}[\cite{xuhulesjeg19}]\label{lem:sum}
  Let $m\in\Nat$, and let $X\subseteq(0,1)$ be a nonempty finite set. Then there
  is a function $f:X\to(0,1)$ with the following two properties:
  \begin{enumerate}
  \item for all multisets
    $M\subseteq X$ of order at most $m-1$ we have
    $\sum_{x\in M}f(x)<\min X$;
  \item for all multisets
    $M,M'\subseteq X$ of order at most $m-1$, if $M\neq M'$ then
    $\sum_{x\in M}f(x)\neq\sum_{x\in M'}f(x)$.
  \end{enumerate}
\end{lemma}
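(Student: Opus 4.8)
The plan is to construct the function $f$ explicitly so that summing its values behaves like reading off digits in a suitably large base, guaranteeing that distinct multisets produce distinct sums. First I would enumerate the finite set $X$ as $X=\{x_1,\ldots,x_N\}$ and look for $f$ of the form $f(x_i)=\lambda\cdot b^{-i}$ for a common scaling factor $\lambda>0$ and a base $b$ chosen large enough. The point is that a multiset $M\subseteq X$ of order at most $m-1$ is determined by the multiplicities $\mu_i\in\{0,1,\ldots,m-1\}$ of each $x_i$, and $\sum_{x\in M}f(x)=\lambda\sum_{i=1}^N \mu_i b^{-i}$. If $b\ge m$, then the tuple $(\mu_1,\ldots,\mu_N)$ is exactly the base-$b$ representation (with digits bounded by $m-1<b$) of the number $\sum_i\mu_i b^{-i}$, so distinct multisets give distinct sums; this yields property~(2).

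For property~(1) I would then fix $\lambda$ small enough. Since every multiset of order at most $m-1$ satisfies $\sum_{x\in M}f(x)=\lambda\sum_i\mu_i b^{-i}\le \lambda(m-1)\sum_{i\ge 1}b^{-i}=\lambda(m-1)/(b-1)$, it suffices to choose $\lambda$ so that this upper bound is strictly below $\min X$. Concretely I would set, say, $b=m$ (taking $b=2$ if $m\le 1$ to avoid degeneracy, though the order-at-most-$m-1$ condition is vacuous then) and then pick $\lambda\coloneqq\tfrac{1}{2}(\min X)(b-1)/(m-1)$ when $m\ge 2$, so that the maximal attainable sum is $\tfrac12\min X<\min X$. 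I also need $f$ to map into $(0,1)$: since each $f(x_i)=\lambda b^{-i}$ is positive and bounded above by the maximal sum $\le\tfrac12\min X<1$, this holds automatically.

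The one point requiring a little care is the interaction between the two properties and the exact choice of constants: property~(1) forces $\lambda$ small, while property~(2) only needs $b\ge m$, so there is no tension, and I would simply verify that a single choice of $(\lambda,b)$ satisfies both simultaneously. The main obstacle, such as it is, is the bookkeeping in the base-$b$ argument for (2): I must confirm that two distinct multisets of order $\le m-1$ really do give distinct digit tuples $(\mu_1,\ldots,\mu_N)$ and hence distinct values of $\sum_i\mu_i b^{-i}$ once $b>m-1$. This is the standard uniqueness of base-$b$ expansions with bounded digits, and I would phrase it as: if $\sum_i\mu_i b^{-i}=\sum_i\mu_i' b^{-i}$ with all $\mu_i,\mu_i'\in\{0,\ldots,b-1\}$, then multiplying through by $b^N$ and comparing digits forces $\mu_i=\mu_i'$ for all $i$. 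With that in hand, both properties follow and the lemma is proved.
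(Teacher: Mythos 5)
Your proof is correct and follows essentially the same route as the paper's: both encode multiplicities as digits of a base-$m$ expansion, with your scaling factor $\lambda$ playing the role of the paper's prefactor $m^{-k}$ chosen so that all sums fall below $\min X$. The only differences are cosmetic (a general $\lambda$ versus a power of $m$, and your explicit handling of the degenerate case $m\le 1$).
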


\begin{proof}
  Let $x_1,x_2,\ldots,x_n$ be an enumeration of $X$, and let
  $\epsilon\coloneqq \min X$. Choose
  $k\in\Nat$ such that $m^{-k}\le\epsilon$ and define $f$ by
  $f(x_i)\coloneqq m^{-k-i}$. 

  Let $M\subseteq X$ be a multiset of order at most
  $m-1$, and let $a_i$ be the multiplicity of $x_i$ in $M$. Then
  $a_i<m$. We have $\sum_{x\in M}f(x)=\sum_{i=1}^n a_i
  m^{-k-i}$, or
  \[
    0.\underbrace{0\ldots 0}_{k\text{ times}}a_1a_2\ldots a_n.
  \]
  in $m$-ary representation. It is clear that these numbers are
  strictly between $0$ and $\epsilon$, which implies (1), and
  that they are
  distinct for distinct multisets $M,M'$, which implies (2).
\end{proof}

\begin{proof}[Proof of Theorem~\ref{theo:gnn-lower}]
  Without loss of generality, we assume that $n\ge 1$.
  In this proof, we assume all graphs to be of order at most
  $n$. Recall that the range of $\col(G)$ is
  $\{0,1\}^\ell$. 

  Let $g_0$ be an arbitrary injective mapping from $\{0,1\}^\ell$ to
  $(0,1)$, and let $X_0\coloneqq g_0(\{0,1\}^\ell)$.

  Now suppose that for some $t\ge0$ we have defined a finite set
  $X_{t}\subseteq(0,1)$. We choose a mapping $f_t:X_t\to (0,1)$
  according to Lemma~\ref{lem:sum} with $m\coloneqq 2n$ and
  $X\coloneqq X_t$, and we let
  $X_{t+1}$ be the set of all numbers $\sum_{x\in M} f_t(x)$, where
  $M\subseteq X_t$ is a multiset of order at most
  $2n-1$. Note that $X_{t+1}\cap X_t=\emptyset$ by Lemma~\ref{lem:sum}(1).

  Let $Z_0\coloneqq\{0,1\}^\ell$ and $Z_t\coloneqq
  X_t\times\{0\}^{\ell-1}$ for $t\in[n]$. Then the $Z_t$ are mutually disjoint
  finite subsets of $\Real^\ell$. Let $g:\Real^\ell\to\Real^\ell$ be
  a continuous function such that
  $g(\vec z) = (f_0(g_0(\vec z)),0,\ldots,0)$ for $\vec z\in Z_0$ and $g(\vec
  z)=(f_t(z_1),0,\ldots,0)$ for $\vec z=(z_1,\ldots,z_\ell)\in Z_t$
  and $t\in[n]$. (We can always find a continuous function with given
  values on a finite set of arguments.)

  Suppose first we have a recurrent GNN of dimension $\ell$ with
  sum-aggregation and a combination function
  \[
    \comb(\vec x,\vec y)
    \coloneqq g\left(\vec x+2\vec y\right).
  \]
  (this combination function cannot necessarily be represented by an FNN, but
  we will fix that later). 
   Let $G$ be a
  graph, and let $\zeta^{(0)}=\col(G)$, and let $\zeta^{(1)},\zeta^{(2)},\ldots$ be
  the the sequence of feature maps computed by this GNN. Then for all
  $t\in[n]$ and $v\in V(G)$ we have
  \[
    \zeta^{(t)}(v)=g\left(\zeta^{(t-1)}(v)+2\sum_{w\in
        N^G(v)}\zeta^{(t-1)}(w)\right).
  \]
  This can be written as $g\left(\sum_{\vec z\in M}\vec z\right)$ for the
    multiset
    \[
      M\coloneqq\biglmulti\zeta^{(t-1)}(v)\bigrmulti\cup\biglmulti
      \zeta^{(t-1)}(w), \zeta^{(t-1)}(w)\bigmid w\in N^G(v)\bigrmulti,
    \]
    where the union of multisets adds the multiplicities of the
    elements in the two sets. Note that  $\zeta^{(t-1)}(v)$ can be
    retrieved from $M$ as the only elements of odd multiplicity.

    Now a
    straightforward induction shows that for all $t\in[n]$ and $v\in V(G)$
    we have $\zeta^{(t)}(v)=(f_t(x),0,\ldots,0)$ for some $x\in X_t$. This
    implies that $\zeta^{(t)}(v)+2\sum_{w\in
        N^G(v)}\zeta^{(t)}(w)=\zeta^{(t)}(v')+2\sum_{w'\in
        N^G(v')}\zeta^{(t)}(w')$ if and only if
      $\zeta^{(t)}(v)=\zeta^{(t)}(v')$ and $\biglmulti
      \zeta^{(t)}(w)\bigmid w\in N^G(v)\bigrmulti=\biglmulti
      \zeta^{(t)}(w')\bigmid w'\in N^G(v')\bigrmulti$. In fact, this
      equality holds across graphs, that is, even if $v'$ is from a
      different graph $G'$. From this, we can derive
      \[
        \zeta^{(t)}(v)=\zeta^{(t)}(v')\implies \colref{t}(v)=\colref{t}(v')
      \]
      by another simple induction. (To simplify the
      notation, we omit an explicit reference to the graphs here.)

      This does not yet prove the theorem, because in our GNN we used
      the function $g$ in the definition of the the combination
      function $\comb$, and in general $g$ cannot be represented by an
      FNN. However, using Theorem~\ref{theo:fnn-app}, we can
      approximate $g$ by a 2-layer FNN on the compact set
      $[0,1]^\ell$, which contains all the $Z_t$ that are relevant for
      us. With a sufficiently close approximation and some
      $\epsilon$-$\delta$ magic, the argument still goes
      through. \end{proof}

\begin{remark}
  Theorem~\ref{theo:gnn-lower} (in slightly different versions) was proved independently in
  \cite{morritfey+19} and \cite{xuhulesjeg19}.
  It is interesting to compare the two versions of the theorem and
  their proofs. In a nutshell, we observe a tradeoff between generality and
  efficiency.

  The proof 
  from \cite{xuhulesjeg19}, which we presented here, is simpler and applies to a larger
  class of activation functions. The proof in
  \cite{morritfey+19} constructs the FNNs involved in the combination
  function of the GNN explicitly and does not use the universal approximation
  theorem. For this reason, it allows for a better control of the
  complexity of the FNN: it only requires a single layer FNN that is
  guaranteed to be of polynomial size, whereas the FNN
  we get out of the universal approximation theorem needs two layers
  and may be exponentially large in $n$. This means that \cite{morritfey+19} constructs a GNN of polynomial size that computes the
  colour refinement partitions. (The proof of \cite{xuhulesjeg19} does
  not yield this result.)   As for the encoding of the 
  colour information: \cite{morritfey+19} uses integer vectors of 
  linear size with numbers of linear bit-length, the proof from 
  \cite{xuhulesjeg19} we present here uses rational numbers of 
  exponential bit length. 

  The size of the GNNs and related parameters like depth and width, which directly affect the complexity of
  inference and learning, 
  definitely require close attention; several of the open questions
  stated in Section~\ref{sec:conc} revolve around these complexity
  theoretic issues.
   \uend 
\end{remark}

\begin{corollary}\label{cor:lower1}
    Let $n\ge1$. Then there is a recurrent GNN such
  that for all graphs $G,G'$ of order at most $n$, if $\colref{\infty}$ distinguishes $G$ and $G'$ then
  $\xi(G)\neq\xi(G')$, where $\xi$ is the graph invariant computed by
  the GNN in $n$ iterations.
\end{corollary}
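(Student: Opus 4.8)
The plan is to derive Corollary~\ref{cor:lower1} from Theorem~\ref{theo:gnn-lower} by taking the recurrent GNN guaranteed by that theorem, running it for exactly $n$ iterations, and attaching an aggregate readout function. The key observation is that colour refinement on graphs of order at most $n$ stabilises within $n$ iterations (indeed within $t_\infty<n$ rounds, as established in the discussion of the stable colouring), so $\colref{n}\equiv\colref{\infty}$ on all such graphs. Thus it suffices to show that the $n$-th iteration invariant $\xi^{(n)}$ from Theorem~\ref{theo:gnn-lower}, when aggregated over all vertices, distinguishes any two graphs that $\colref{n}$ distinguishes.

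First I would fix the recurrent GNN provided by Theorem~\ref{theo:gnn-lower} for the given $n$, and recall that it gives a vertex invariant $\xi^{(n)}$ satisfying $\xi^{(n)}(G,v)=\xi^{(n)}(G',v')\implies\colref{n}(G,v)=\colref{n}(G',v')$ for all graphs $G,G'$ of order at most $n$ and all vertices $v,v'$. Taking the contrapositive, $\colref{n}$ refines $\xi^{(n)}$ in the reverse direction is not what we have; rather $\xi^{(n)}$ refines $\colref{n}$, meaning the feature-map partition is at least as fine as the colour-refinement partition. Next I would define the graph invariant $\xi$ computed by the GNN in $n$ iterations by applying an \emph{injective} aggregate readout $\aggro$ to the multiset $\biglmulti\xi^{(n)}(G,v)\bigmid v\in V(G)\bigrmulti$, so that $\xi(G)=\hat{\xi^{(n)}}(G)$ up to the injective relabelling. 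Since the number of possible feature values on graphs of order at most $n$ is finite, such an injective $\aggro$ exists and can be realised in the GNN model (summation followed by an FNN post-processing, which by Theorem~\ref{theo:fnn-app} can approximate any function that separates the finitely many relevant multiset-sums).

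The core deduction then runs as follows. Suppose $\colref{\infty}$ distinguishes $G$ and $G'$, that is $\hat{\colref{n}}(G)\neq\hat{\colref{n}}(G')$ (using $\colref{n}\equiv\colref{\infty}$ on graphs of order at most $n$). Because $\xi^{(n)}$ refines $\colref{n}$, the finer partition induced by $\xi^{(n)}$ determines the coarser one induced by $\colref{n}$ via a well-defined map on colours; hence distinct colour-refinement multisets force distinct $\xi^{(n)}$-multisets, giving $\hat{\xi^{(n)}}(G)\neq\hat{\xi^{(n)}}(G')$. Applying the injective $\aggro$ preserves this inequality, so $\xi(G)\neq\xi(G')$, as desired.

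The main obstacle I anticipate is the direction of refinement: Theorem~\ref{theo:gnn-lower} tells us $\xi^{(n)}$ is at least as fine as $\colref{n}$, but to conclude that $\xi^{(n)}$ \emph{distinguishes} graphs whenever $\colref{n}$ does, one must argue at the level of multisets that a refinement of the vertex colouring cannot \emph{merge} distinct global colour statistics---which is true precisely because the refining map is a function on colours, so it induces a well-defined pushforward on multisets that is injective enough to separate differing inputs. The only subtlety is ensuring this holds \emph{across} the two graphs simultaneously, which is why Theorem~\ref{theo:gnn-lower} is stated for vertices drawn from possibly distinct graphs $G,G'$; invoking that cross-graph version is exactly what makes the multiset comparison legitimate.
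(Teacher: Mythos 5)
Your derivation is correct and is exactly the intended route: Corollary~\ref{cor:lower1} is meant to follow from Theorem~\ref{theo:gnn-lower} via the pushforward observation that a vertex invariant refining $\colref{n}$ \emph{across} graphs also distinguishes any two graphs that $\colref{n}$ distinguishes, together with an injective (sum-based) aggregate readout. The only point you pass over quickly is that ``$\colref{\infty}$ distinguishes $G,G'$'' already implies ``$\colref{n}$ distinguishes $G,G'$'' for graphs of order at most $n$: this needs the stabilisation argument applied to the joint partition of $V(G)\cup V(G')$ (which has at most $n$ classes for as long as the colour multisets of the two graphs still agree), not merely stabilisation of each graph separately -- but this is standard and does not affect correctness.
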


As $\wl1t$ refines $\colref t$, in Theorem~\ref{theo:gnn-upper} and its
corollaries, we can replace colour refinement with the 1-dimensional WL
algorithm. We can also do this in Corollary~\ref{cor:lower1}, because
by Proposition~\ref{prop:cr1wl} there is no difference in the power of $\wl1t$ and $\colref t$ on the
graph level. However, Theorem~\ref{theo:gnn-lower} needs to be
modified for $1$-WL.

\begin{theorem}\label{theo:gnn-wl-lower}
  Let $n\ge1$. Then there is a recurrent GNN with global
  readout such
  that for all $t\in[n]$, the vertex invariant $\xi^{(t)}$ computed in the $t$-th
  iteration of the GNN refines $\colref t$ on all graphs of order at
  most $n$.
\end{theorem}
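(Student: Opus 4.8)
The plan is to reuse the construction from the proof of Theorem~\ref{theo:gnn-lower} and to exploit the global readout term $\sum_{w\in V(G)}\zeta(w)$ in the update rule~\eqref{eq:4} to recover exactly the information by which $\wl1{}$ differs from colour refinement. (Since $\wl1t$ refines $\colref t$, the version that global readout is designed for, and the one I will prove, is that $\xi^{(t)}$ refines $\wl1t$; this implies the refinement of $\colref t$.) The key observation is the disjoint partition $V(G)=\{v\}\sqcup N^G(v)\sqcup\bar N^G(v)$, where $\bar N^G(v)\coloneqq V(G)\setminus(N^G(v)\cup\{v\})$: the multiset of non-neighbour colours equals the multiset of all colours with the own colour and the neighbour colours removed. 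Because $\wl1{t+1}(G,v)$ is determined by the own colour $\wl1t(G,v)$ together with the two multisets $\lmulti\wl1t(G,w)\mid w\in N^G(v)\rmulti$ and $\lmulti\wl1t(G,w)\mid w\in\bar N^G(v)\rmulti$, the three arguments $\zeta(v)$, $\sum_{w\in N^G(v)}\zeta(w)$ and $\sum_{w\in V(G)}\zeta(w)$ that $\comb$ receives in~\eqref{eq:4} together carry everything needed.

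Concretely, I would maintain, exactly as in Theorem~\ref{theo:gnn-lower}, the invariant that $\zeta^{(t)}(v)=(e^{(t)}_v,0,\ldots,0)$, where $e^{(t)}_v\in(0,1)$ injectively encodes $\wl1t(G,v)$, with the possible level-$t$ encodings chosen, following the inductive construction in that proof, via Lemma~\ref{lem:sum} — now with $m\coloneqq n+1$, since on graphs of order at most $n$ every relevant multiset (all vertices, neighbours, non-neighbours) has order at most $n$ — so that $\sum_{w\in S}e^{(t)}_w$ determines the multiset $\lmulti e^{(t)}_w\mid w\in S\rmulti$ for every $S\subseteq V(G)$, and so that distinct levels use pairwise disjoint ranges. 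On input $(\vec x,\vec y,\vec s)$ the combination function reads the level $t$ and the own colour off the first coordinate $\vec x_1$, decodes the neighbour multiset from $\vec y_1$ and the full multiset from $\vec s_1$, forms the non-neighbour sum $\vec s_1-\vec y_1-\vec x_1=\sum_{w\in\bar N^G(v)}e^{(t)}_w$ and decodes the non-neighbour multiset from it, assembles $\wl1{t+1}(G,v)$, and outputs its fresh level-$(t+1)$ encoding. As only finitely many input triples occur, this prescribes $\comb$ on a finite set, which I extend to a continuous function $g$ as in the earlier proof.

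The correctness then reduces to two short inductions parallel to those in Theorem~\ref{theo:gnn-lower}: one verifying that $\zeta^{(t)}(v)$ stays in the designated level-$t$ range, so that the decoding is well-defined, and one verifying across graphs that $\zeta^{(t)}(v)=\zeta^{(t)}(v')$ forces $\wl1t(G,v)=\wl1t(G',v')$, both driven by the injectivity of the encodings and of the multiset sums. Finally, $g$ need not be realisable by an FNN, so I would approximate it by a depth-$2$ FNN on a compact cube containing all occurring values of $(\vec x,\vec y,\vec s)$ using Theorem~\ref{theo:fnn-app}, and push the approximation through by the same $\epsilon$-$\delta$ argument; the separation of the finitely many target values leaves enough slack.

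I expect the main obstacle to be uniformity rather than any single identity: one recurrent layer, hence one function $g$, must simultaneously implement all $n$ decoding steps while keeping the three channels (own colour, neighbourhood sum, global sum) and the $n$ encoding ranges mutually separable. This is what forces the iteration-indexed, pairwise-disjoint choice of encodings and the bound $m=n+1$, which must cover the global sum over all (up to $n$) vertices as well as the neighbourhood and non-neighbourhood sums. Once disjointness is secured — so that $g$ can recover the current level from $\vec x_1$ — the remainder is routine re-encoding, and notably no doubling trick is needed, since global readout already supplies the three sums as separate arguments.
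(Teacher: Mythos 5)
Your proposal is correct and is precisely the ``easy modification'' of the proof of Theorem~\ref{theo:gnn-lower} that the paper has in mind: use the global-readout sum to recover the multiset of colours of non-neighbours (via $\vec s_1-\vec y_1-\vec x_1$ and the sum-injectivity of Lemma~\ref{lem:sum}, now with $m=n+1$ and no doubling trick), so that the single recurrent layer can simulate the $1$-WL update rather than colour refinement. Your observation that the natural statement to prove is refinement of $\wl1t$ (which implies the stated refinement of $\colref t$) is also the intended reading.
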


The proof of this Theorem is an easy modification of the proof of
Theorem~\ref{theo:gnn-lower}.

There is also a corresponding version of Theorem~\ref{theo:gnn-upper}
with $1$-WL and GNNs with global readout. While these results are not
explicitly stated elsewhere, it is fair to say that the main insight
underlying them is from \cite{DBLP:conf/iclr/BarceloKM0RS20}.

\section{The Logical Expressiveness of GNNs}
\label{sec:gnn-logic}

The previous section's results fully characterise the
distinguishing power of GNNs, both on the graph level and on the
vertex level. However, the results are non-uniform because
the GNNs depend on the size of the input graphs. They do not tell us
which functions defined globally on all graphs are expressible by
GNNs. In this section, we take a first step towards characterising
these functions: we obtain a characterisation of all first-order queries expressible by GNNs.

A \emph{$k$-ary query} is a $k$-ary invariant with range
$\{0,1\}$. We can view $0$-ary queries, also called \emph{Boolean queries}, as isomorphism closed classes
of graphs and $k$-ary queries for $k\ge 1$ as equivariant mappings
$Q$ that map each graph $G$ to a set $Q(G)\subseteq V(G)^k$. 
A formula $\phi(\vec x)$ for some logic $\LL$ \emph{expresses} a
$k$-ary query
$Q$ if for all graphs $G$ and $\vec v\in V(G)^k$ we have $G\models
\phi(\vec v)\iff\vec v\in Q(G)$. We are mainly concerned with unary queries
expressible in first-order logic here.

We say that a GNN \emph{expresses} a unary query if it approximates
the corresponding vertex invariant. Let us make this precise as
follows.  Suppose that $\xi$ is the vertex invariant computed by the
GNN. Then we say that the GNN expresses $Q$ if there is an
$\epsilon<1/2$ such that for all graphs $G$ and vertices $v\in V(G)$,
\begin{equation}
  \label{eq:7}
  \begin{cases}
    \xi(G,v)\ge 1-\epsilon&\text{if }v\in Q(G),\\
    \xi(G,v)\le\epsilon&\text{if }v\not\in Q(G).
  \end{cases}
\end{equation}
Similarly, we can define a GNN computing a graph invariant to express
a Boolean query.

\begin{theorem}[\cite{DBLP:conf/iclr/BarceloKM0RS20}]\label{theo:gnn-logic}
  Let $\CQ$ be a unary query expressible in graded modal logic
  $\LGCk2$. Then there is a GNN that expresses $\CQ$.
\end{theorem}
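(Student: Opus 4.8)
The plan is to argue by induction on the structure of a $\LGCk2$-formula $\phi(x)$ defining $\CQ$, building a GNN whose feature vector carries, in one coordinate per subformula, the truth value of that subformula at the current vertex. Fix an enumeration $\psi_1,\dots,\psi_m$ of the (finitely many) subformulas of $\phi$, each of which after renaming is a formula with a single free variable and hence a unary query; order them by nondecreasing modal nesting depth, with $\psi_m=\phi$. I maintain the invariant that, after enough layers, the coordinate assigned to $\psi_i$ holds \emph{exactly} $1$ at vertices $v$ with $G\models\psi_i(v)$ and exactly $0$ otherwise. Since the readout $\ro$ can simply project onto the coordinate of $\psi_m=\phi$, this produces a vertex invariant $\xi$ with $\xi(G,v)=1$ iff $v\in\CQ(G)$, which expresses $\CQ$ in the sense of \eqref{eq:7} with $\epsilon=0$.

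For the construction I would use the linearised sigmoid $\lsig$ as the activation function of the combination FNNs, because it lets them produce \emph{exact} Boolean values and thereby avoids any accumulation of approximation error. The initial feature map is $\col(G)$ padded by zeros to dimension $m$, so the coordinates of the atomic subformulas $P_i(x)$ are already correct at layer $0$. Each GNN layer then performs one round of updates through its combination function (cf.~\eqref{eq:3}), copying forward all previously computed coordinates. Boolean subformulas are handled locally from already available coordinates via $\neg a=\lsig(1-a)$, $a\wedge b=\lsig(a+b-1)$, and $a\vee b=\lsig(a+b)$, which map exact Boolean inputs to exact Boolean outputs. A guarded subformula $\psi_i(x)=\exists^{\ge p}y\big(E(x,y)\wedge\psi_j(y)\big)$, whose argument coordinate for $\psi_j$ is already correct, is read off from the sum-aggregate: the $\psi_j$-coordinate of $\sum_{w\in N^G(v)}\zeta(w)$ equals the integer count $c$ of neighbours $w$ with $G\models\psi_j(w)$, and $\lsig(c-p+1)$ equals $1$ if $c\ge p$ and $0$ if $c\le p-1$. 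Since a guarded quantifier consumes the single aggregation step of a layer while the Boolean operations above it fold into the same combination function, one layer advances the modal depth by one, so a number of layers equal to the modal depth of $\phi$ suffices; a straightforward induction on the layer index then verifies the invariant.

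The one genuinely delicate point, and the one I expect to be the main obstacle, is uniformity over all graph sizes: because the neighbour count $c$ is unbounded, a threshold applied to an merely \emph{approximate} count would incur an error of order $n\cdot\delta$ and fail for large $n$. Keeping every feature value exactly in $\{0,1\}$ makes $c$ an exact integer, and the saturation of $\lsig$ classifies every count $\ge p$ and every count $\le p-1$ correctly irrespective of its magnitude, which is precisely what makes the construction work uniformly on all graphs. Should one insist on a prescribed continuous non-polynomial activation in place of $\lsig$, one can instead approximate each ideal combination function via Theorem~\ref{theo:fnn-app}, exploiting that a standard activation can be driven into saturation (with sufficiently large gain) so that the thresholds remain robust at the critical counts $p-1$ and $p$; the definition of ``expresses'' then absorbs the remaining $\epsilon<1/2$ error.
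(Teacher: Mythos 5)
Your proposal is correct and follows essentially the same route as the paper's proof: enumerate the subformulas of the defining $\LGCk2$-formula, dedicate one feature coordinate to the exact $\{0,1\}$ truth value of each, realise guarded counting quantifiers by thresholding the sum-aggregated neighbour coordinate with $\lsig$, and read out the coordinate of $\phi$ itself. Your explicit treatment of the Boolean connectives and of why exactness (rather than approximation) is what makes the construction uniform over all graph sizes matches the paper's reliance on $\lsig$ and its caveat about other activation functions.
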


In \cite{DBLP:conf/iclr/BarceloKM0RS20}, the theorem is only proved
for GNNs that use a linearised sigmoid $\lsig$ as activation
function. The proof can easily be adapted to the $\relu$-activation
function, but it is not obvious how to adapt it to other standard activation
functions such as $\sig,\tanh$. The reason is that due to the
non-uniformity, we do not have an obvious compact domain where we can
apply the universal approximation property of FNNs. We sketch a proof
of the theorem using $\lsig$.

\begin{proof}[Proof sketch]
  Let $\phi(x)$ be a $\LGCk2$-formula expressing $\CQ$. Let
  $\psi_1(x),\ldots,\psi_d(x)=\phi(x)$ be a list of all subformulas of
  $\phi(x)$, where we view guarded quantification as a single operation,
  that is, for $\psi(x)=\exists^{\ge i}y(E(x,y)\to\psi'(y))$ we only
  consider the subformula $\psi'(y)$. We assume that the $\psi_i(x)$
  are sorted in a way compatible with the subformula order, that is,
  if $\psi_i$ is a proper subformula of $\psi_j$ then
  $i<j$. Furthermore, we assume that the first $\ell$ formulas in the
  list are the label atoms $P_i(x)$.

  We design a GNN with $d-\ell$ layers, where the $t$-th layer has
  input dimension $q^{(t-1)}\coloneqq \ell+t-1$ and output dimension
  $\ell+t$. During the evaluation of the GNN on a graph $G$, the
  feature map $\zeta^{(t)}$ is supposed to map each vertex $v$ to a
  vector
  $\zeta^{(t)}(v)=(z_1,\ldots,z_{\ell+t})\in\{0,1\}^{\ell+t}$,
  where $z_i=1$ if and only if $G\models\psi_i(v)$. It is easy to
  define a combination function $\comb^{(t)}$ achieving this. For
  example, if
  $
    \psi_t= \exists^{\ge p}y(E(x,y)\to\psi_s(y))
  $
  for some
  $s<t$, we must define
  $\comb^{(t)}:\Real^{\ell+t-1}\times\Real^{\ell+t-1}\to\Real^{\ell+t}$
  in such a way that it maps $\Big(\zeta^{(t-1)}(v),\sum_{w\in
    N^G(v)}\zeta^{(t-1)}(w)\Big)$ to the vector
  $(z_1,\ldots,z_{\ell+1})$, where $z_i=(\zeta^{(t-1)}(v)\big)_i$ for
  $i\in[\ell+t-1]$ and
  \[
    z_{\ell+t}=\begin{cases}
      1&\text{if }\sum_{w\in
        N^G(v)}\big(\zeta^{(t-1)}(w)\big)_s\ge p,\\
      0&\text{otherwise}.
    \end{cases}
  \]
  To achieve this, we can let $\comb^{(t)}=\lsig(A\vec x+b)$, where
  \begin{itemize}
  \item $A$ is the $(\ell+t)\times(2(\ell+t-1))$-matrix with entries
  $A_{ii}=1$, $A_{ij}=0$ for
  $i\in[\ell+t-1],j\in[2(\ell+t-1)]\setminus\{i\}$ and
  $A_{(\ell+t)(\ell+t-1+s)}=1$, $A_{(\ell+t)j}=0$ for
  $j\in[2(\ell+t-1)]\setminus\{\ell+t-1+s\}$,
  \item $\vec b$ is the vector with entries $b_i=0$ for
    $i\in[\ell+t-1]$ and $b_{\ell+t}=-p+1$.
    \qedhere
  \end{itemize}
\end{proof}

Clearly, the converse of the theorem does not hold. For example, we
can use a GNN to express the unary query ``vertex $v$ has twice as
many neighbours with label $P_1$ as it has neighbours with label
$P_2$'', which is not expressible in graded modal logic. However, the
theorem has an interesting partial converse.

\begin{theorem}[\cite{DBLP:conf/iclr/BarceloKM0RS20}]\label{theo:logic-gnn}
  Let $\CQ$ be a unary query expressible by a GNN and also
  expressible in first-order logic. Then $\CQ$ is expressible in
  $\LGCk2$.
\end{theorem}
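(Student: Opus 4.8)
The plan is to combine the distinguishing-power results of Section~\ref{sec:wl2gnn} with the fact that first-order definability forces counting to be bounded. First I would extract a colour refinement invariance from GNN-expressibility. Since the GNN expressing $\CQ$ has some fixed number $d$ of layers, Theorem~\ref{theo:gnn-upper} gives that $\colref d$ refines the vertex invariant $\xi$ computed by the GNN. The separation condition~\eqref{eq:7} (with $\epsilon<1/2$) then shows that $\CQ$ is a union of $\colref d$-classes across \emph{all} graphs: if $\colref d(G,v)=\colref d(G',v')$ then $\xi(G,v)=\xi(G',v')$, and as the two value ranges $[1-\epsilon,\infty)$ and $(-\infty,\epsilon]$ are disjoint, $v\in\CQ(G)\iff v'\in\CQ(G')$. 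By Theorem~\ref{theo:cr-logic}, membership of $v$ in $\CQ(G)$ thus depends only on the $\LGCkq{2}{d}$-type of $v$, equivalently on the depth-$d$ counting tree rooted at $v$.

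The next step is to realise these types by honest trees so that first-order logic can be brought to bear. For a vertex $v$ of $G$, let $T$ be the depth-$d$ unravelling of $G$ from $v$, a finite rooted colour-labelled tree in which each node has one child per neighbour of the corresponding vertex. A standard unravelling computation gives $\colref d(T,r)=\colref d(G,v)$ for the root $r$, so by the invariance just established $v\in\CQ(G)\iff r\in\CQ(T)$; hence it suffices to understand $\CQ$ on the class of depth-$\le d$ colour-labelled rooted trees. Now let $\alpha(x)$ be a first-order formula defining $\CQ$, of quantifier rank $q$. The crux is a bounded-counting lemma: there is a threshold $N=N(q,d,\ell)$ such that any two depth-$\le d$ rooted trees with the same \emph{$N$-capped} counting type --- recursively, the same root colour and, for every child subtree-type, the same number of children of that type or both numbers $\ge N$ --- satisfy $(T,r)\equiv_q(T',r')$, that is, agree on all formulas of quantifier rank $q$. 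I would prove this by induction on $d$ via a Feferman--Vaught / Ehrenfeucht--\Fraisse composition argument: the rank-$q$ type of a rooted tree is determined by the root colour together with the number, \emph{capped at a threshold depending only on $q$}, of children realising each lower-rank subtree-type, since $q$ pebbles cannot tell apart two configurations differing only in the multiplicity of identical subtrees once that multiplicity is large. The per-level thresholds compose to the single bound $N$.

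With the lemma in hand the theorem follows quickly. Since $\alpha$ has rank $q$, trees with equal $N$-capped type agree on $\alpha$, hence (via the tree realisation and $\colref d$-invariance) on $\CQ$-membership, so whether $v\in\CQ(G)$ depends only on the $N$-capped depth-$d$ counting type of $v$. There are finitely many such capped types, as the colour set $\{0,1\}^\ell$, the depth $d$, and the cap $N$ are all finite. Each capped type $\tau$ is definable by a $\LGCkq{2}{d}$-formula $\chi_\tau(x)$ built recursively: it asserts the root colour and, for each admissible child capped-type $\sigma$, uses guarded counting quantifiers $\exists^{\ge j}y(E(x,y)\wedge\chi_\sigma(y))$ and their negations to pin the number of neighbours of capped-type $\sigma$ to the prescribed value $\le N$ (or to $\ge N$ in the saturated case), alternating the two variables $x,y$ in the standard graded-modal fashion so that the nesting depth stays $\le d$. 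Taking the disjunction of $\chi_\tau$ over the finitely many capped types $\tau$ contained in $\CQ$ yields a single $\LGCkq{2}{d}\subseteq\LGCk2$ formula expressing $\CQ$.

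The main obstacle is the bounded-counting lemma of the middle paragraph: making precise that first-order logic on bounded-depth labelled trees can count branching only up to a finite threshold, and controlling how these thresholds accumulate over the $d$ levels. Everything else --- the reduction to colour refinement invariance, the passage to tree realisations, and the final assembly of the formula --- is routine given the results already established. It is precisely here that first-order definability is indispensable: without it, $\CQ$ could be colour refinement invariant yet require unbounded counting thresholds (for instance, ``$v$ has a prime number of neighbours''), which no $\LGCk2$-formula can express.
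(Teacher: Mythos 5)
Your opening reduction is correct and coincides with the first half of the paper's argument: a $d$-layer GNN expressing $\CQ$, Theorem~\ref{theo:gnn-upper}, and the separation condition~\eqref{eq:7} together show that membership in $\CQ$ is determined by $\colref d$, i.e.\ that $\CQ$ is invariant under (depth-$d$) counting bisimulation. At that point the paper stops and simply invokes the characterisation of $\LGCk2$ as the counting-bisimulation-invariant fragment of $\FO$ from \cite{DBLP:journals/corr/abs-1910-00039}; you instead try to prove a bounded-depth version of that characterisation yourself, which is where all the real work lies. Your plan for it --- unravel, reduce to bounded-depth trees, and show that rank-$q$ first-order logic can only count children up to a finite threshold $N$ --- is the standard strategy, and the threshold lemma itself is provable by the Ehrenfeucht--\Fraisse{} composition argument you describe.

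The genuine gap is in the unravelling step. For the unravelling you define (one child per neighbour of the corresponding vertex, so walks may backtrack), the identity $\colref d(T,r)=\colref d(G,v)$ is false: an interior node of $T$ corresponding to a vertex $w$ has degree $|N^G(w)|+1$, because its parent is an extra neighbour on top of the one child it has for each element of $N^G(w)$ (including the vertex it came from). Concretely, let $G$ be a single edge $uv$ and let $\CQ$ be the ($\LGCkq22$-definable, hence GNN-expressible and $\colref2$-invariant) query ``every neighbour of $x$ has degree $1$''. Then $v\in\CQ(G)$, but the depth-$2$ unravelling from $v$ is a path $r$--$a$--$b$ with $\deg(a)=2$, so $r\notin\CQ(T)$; this refutes both the claimed identity and the derived claim $v\in\CQ(G)\iff r\in\CQ(T)$. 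Switching to the non-backtracking unravelling repairs the identity but creates the dual problem at the other end of your argument: the children of a non-root tree node then correspond to $N^G(w)$ minus the parent, while the guarded quantifiers $\exists^{\ge p}y(E(x,y)\wedge\cdot)$ in your formulas $\chi_\tau$ count \emph{all} neighbours of $w$ in $G$, so the capped combinatorial tree type that your threshold lemma controls is no longer the object your $\LGCkq2d$-formulas define on $G$. This mismatch between tree-children and graph-neighbours is exactly the delicate point the cited characterisation theorem has to negotiate; your sketch silently identifies the two notions of counting type and therefore does not go through as written.
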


The proof of this result is based on a characterisation of
$\LGCk2$ as the fragment of first-order logic invariant under
counting bisimulation~\cite{DBLP:journals/corr/abs-1910-00039}.

Unsurprisingly, GNNs with global readout can express all
properties that are expressible in the logic $\LCk2$. This corresponds
precisely to the transition from colour refinement to $1$-WL enabled
by global readout in the previous section.

\begin{theorem}[\cite{DBLP:conf/iclr/BarceloKM0RS20}]
  Let $Q$ be a Boolean or unary query expressible in $\LCk2$. Then there is
  a GNN with global readout that expresses $Q$.
\end{theorem}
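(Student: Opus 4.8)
The plan is to mimic the proof of Theorem~\ref{theo:gnn-logic}, processing the subformulas of a defining $\LCk2$-formula $\phi(x)$ one at a time and devoting one GNN layer to each, but now exploiting the global-readout term $\sum_{w\in V(G)}\zeta(w)$ of the update rule \eqref{eq:4} to handle the \emph{unguarded} counting quantifiers that $\LCk2$ permits and that graded modal logic forbids. As before, the feature map $\zeta^{(t)}$ will be Boolean-valued, carrying one coordinate for each unary subformula $\psi_i(x)$ processed so far that records whether $G\models\psi_i(v)$; the combination functions will again have the form $\lsig(A\vec x+\vec b)$, so that each threshold test ``$\ge p$'' is realised by an affine map followed by the linearised sigmoid exactly as in the guarded case.

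First I would establish a separation normal form for the binary subformulas, which is the genuinely new ingredient. By induction on formula structure one shows that every $\LCk2$-formula $\chi(x,y)$ is equivalent to a Boolean combination of the atomic link formulas $x=y$ and $E(x,y)$ together with unary formulas $\alpha_j(x)$ and $\beta_k(y)$. The induction is immediate for atoms and Boolean connectives; the interesting case is a quantifier $\exists^{\ge p}y\,\chi(x,y)$. Fixing $x$ and splitting the range of $y$ according to its link type to $x$ yields three disjoint categories, namely the single vertex $y=x$, the neighbours $y\in N^G(x)$, and the non-neighbours $y\notin N^G(x)\cup\{x\}$, and within each category the link atoms become constants, so once the finitely many truth values $\alpha_j(x)$ are fixed the separated form of $\chi$ collapses into a purely unary condition $\gamma(y)$. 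Hence $|\{y:\chi(x,y)\}|$ is a sum of three counts of unary properties, and thresholding it at $p$ is a Boolean combination, taken over the finitely many configurations of the values $\alpha_j(x)$, of a self-test $\chi(x,x)$, a count of neighbours satisfying $\gamma$, and a count of non-neighbours satisfying $\gamma$; the result is unary in $x$, which closes the induction and simultaneously dictates what each layer must compute.

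With this in hand I would enumerate the unary subformulas in subformula order (the label atoms first, as before) and build a GNN with global readout having one layer per formula. A quantifier layer reads the three aggregates available in \eqref{eq:4}: the self-feature $\zeta^{(t-1)}(v)$ supplies the relevant unary values at $v$ and detects the $y=x$ contribution, the local sum $\sum_{w\in N^G(v)}\zeta^{(t-1)}(w)$ supplies the count of neighbours satisfying $\gamma$, and the global sum $\sum_{w\in V(G)}\zeta^{(t-1)}(w)$ supplies the count over all vertices; the non-neighbour count is then recovered by the inclusion--exclusion identity (global minus local minus the self-contribution), which is an affine expression in the three inputs and so is again realised by $\lsig(A\vec x+\vec b)$ after the threshold shift $-p+1$. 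Boolean-connective layers are handled by affine maps plus $\lsig$ exactly as in Theorem~\ref{theo:gnn-logic}. For a Boolean query defined by an $\LCk2$-sentence the outermost quantifier is a global count, so it suffices to run the GNN on the immediate subformula and let the aggregate readout $\aggro$ perform the final summation and threshold.

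The main obstacle is the bookkeeping in the separation lemma, concentrated in two delicate places: the dependence of the surviving unary condition $\gamma(y)$ on the configuration of the $x$-side formulas $\alpha_j(x)$, which forces a Boolean combination over all such configurations and enlarges the subformula list by a constant factor, and the inclusion--exclusion for non-neighbour counting, where the vertex $y=x$ must be added back or removed according to whether it satisfies $\gamma$ and whether it was counted in the global sum. Once these are handled correctly, the realisability of every layer by $\lsig$ and the correctness of the overall feature invariant follow by the same straightforward inductions as in the guarded case.
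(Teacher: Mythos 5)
Your proposal is correct and is essentially the intended argument: the paper gives no proof of this theorem (it is delegated to the cited reference, with only the remark that global readout corresponds to the passage from colour refinement to $1$-WL), and the natural proof is precisely your extension of the layer-per-subformula construction used for Theorem~\ref{theo:gnn-logic}. The one genuinely new ingredient --- that every $\LCk2$-formula $\chi(x,y)$ separates into a Boolean combination of $x=y$, $E(x,y)$ and unary formulas, since any quantified subformula of a two-variable formula has at most one free variable, so that $\exists^{\ge p}y\,\chi(x,y)$ reduces to thresholding an affine combination of the self-feature, the neighbourhood sum and the global sum (with the non-neighbour count recovered by inclusion--exclusion) --- is exactly right, and the configuration-dependence and self-vertex bookkeeping you flag are indeed the only delicate points.
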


\section{Higher Order GNNs}
\label{sec:higher-order}

Inspired by the correspondence between 1-WL and GNNs, Morris et
al.~\cite{morritfey+19} proposed \emph{higher-order GNNs}, a deep
learning architecture with an expressiveness corresponding to
$k$-WL. The idea is to use the oblivious WL-version because oblivious
WL on a graph $G$ essentially operates on a binary structure $A_G$
with vertex set $V(G)^k$ (as we have seen in
Remark~\ref{rem:bin-wl}). We can define a \emph{$k$-GNN} operating on
a graph $G$ to be a GNN operating on $A_G$, using the extension
of GNNs to binary structures described in
Section~\ref{sec:gnn-variants}.
It is important to note that nodes of the message passing network
carrying out the $k$-GNN computation are $k$-tuples of vertices.
We obtain the following theorem
directly as a corollary to our earlier results.

\begin{theorem}[\cite{morritfey+19,DBLP:conf/nips/0001RM20}]\label{theo:kgnn}
  Let $k\ge 2$.
  \begin{enumerate}
  \item
    Let $d\ge 1$, and let $\xi$ be a $k$-ary invariant computed by an
    $d$-layer $k$-GNN. Then $\owl kd$ refines $\xi$.
\item 
    For all $n\ge 1$ there is a recurrent $k$-GNN such
    that for all $t\le n$ the vertex invariant $\xi^{(t)}$ computed by
    the $t$-th iteration of the GNN refines $\owl kt$.
\end{enumerate}
\end{theorem}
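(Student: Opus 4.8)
The plan is to derive both parts from the results of Section~\ref{sec:wl2gnn} by transporting them from graphs to the tuple-structures $A_G$ of Remark~\ref{rem:bin-wl}. Recall that a $k$-GNN on $G$ is, by definition, an ordinary GNN on the binary relational structure $A_G$ with universe $V(G)^k$, one edge relation $E_i$ per coordinate, and unary relations recording the atomic types $\atp k$. The one fact I need about this encoding is that the $t$-round colour-refinement colouring of $A_G$ — where a node aggregates its neighbours separately for each of the relations $E_i$ — coincides with $\owl kt$; this is essentially the observation of Remark~\ref{rem:bin-wl}, and it follows by a one-line induction once one notes that $\lmulti \owl kt(G,\vec v[w/i])\mid w\in V(G)\rmulti$ is, up to the single self-term $\owl kt(G,\vec v)$ that is already kept in the first coordinate, exactly the multiset of colours on the $E_i$-neighbours of $\vec v$ in $A_G$.

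For part (1) I would invoke the version of Theorem~\ref{theo:gnn-upper} for binary relational structures: the feature map computed by a $d$-layer GNN on $A_G$ is refined by the $d$-th colour-refinement colouring of $A_G$. Since a $d$-layer $k$-GNN on $G$ is exactly such a GNN on $A_G$, and since that colouring is $\owl kd$ by the previous paragraph, the invariant $\xi$ it computes is refined by $\owl kd$, as claimed. The proof of Theorem~\ref{theo:gnn-upper} carries over verbatim: one runs the same induction on $t$, the only change being that $\comb$ now receives one aggregated multiset per relation $E_i$, and equality of all of these is precisely what equality of the refinement colours provides.

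For part (2) I would symmetrically invoke the binary-structure version of Theorem~\ref{theo:gnn-lower}, applied to the structures $A_G$ with $G$ of order at most $n$ (hence with at most $n^k$ nodes): there is a single recurrent GNN on $A_G$ whose $t$-th feature map refines the $t$-round colour-refinement colouring of $A_G$, which by the correspondence refines $\owl kt$. This yields the required recurrent $k$-GNN. The construction is the one from the proof of Theorem~\ref{theo:gnn-lower}: Lemma~\ref{lem:sum} supplies, at each round, a sum-injective encoding of the current colours; the only adaptation is that $\comb$ must injectively encode the node's own colour together with the $k$ per-relation sums — still a function on a finite domain, hence approximable by an FNN via Theorem~\ref{theo:fnn-app}.

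The hard part is bookkeeping rather than any new idea, and two points deserve care. First, I must pin down the binary-structure generalisations of Theorems~\ref{theo:gnn-upper} and~\ref{theo:gnn-lower}, i.e.\ fix the convention that a GNN on a multi-relational structure aggregates per relation (equivalently, uses the edge-type-dependent messages of Section~\ref{sec:gnn-variants}); under that convention both original proofs are unchanged. Second, one must not conflate colour refinement on $A_G$ with full $1$-WL on $A_G$: it is per-relation neighbour counting, and not the ``count everything'' refinement of $1$-WL, that matches $\owl kt$ here, and this is exactly why part (2) is realisable by a plain recurrent $k$-GNN with no global readout.
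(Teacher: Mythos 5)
Your proposal is correct and follows exactly the route the paper takes: the paper obtains Theorem~\ref{theo:kgnn} ``directly as a corollary'' of Theorems~\ref{theo:gnn-upper} and~\ref{theo:gnn-lower} by viewing a $d$-layer (resp.\ recurrent) $k$-GNN as an ordinary GNN on the structure $A_G$ of Remark~\ref{rem:bin-wl}. Your extra care in identifying $\owl kt$ with \emph{per-relation colour refinement} on $A_G$ (rather than with full $1$-WL on $A_G$, which counts non-neighbours as well) is precisely the bookkeeping the paper leaves implicit, and it is the right correspondence to use since a $k$-GNN without global readout aggregates only over $E_i$-neighbours.
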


It is important to note that $k$-GNNs correspond to $k$-dimensional
oblivious WL and hence to $(k-1)$-dimensional WL; this can easily lead
to confusion.

The version of $k$-GNNs described in \cite{morritfey+19} is slightly
different. In particular, there is also a version that operates on
$k$-element sets rather than $k$-tuples (which saves some
memory). While there may be practical considerations leading to these
alternative approaches, I believe the theoretical essence of $k$-GNNs
is most transparent in the version we describe here.

\begin{remark}
  Since $n$-dimensional Weisfeiler-Leman characterises graphs of order
  $n$ up to isomorphism, we can use $n$-GNNs as a universal invariant
  neural network architecture that is able to approximate all invariant
  and equivariant functions defined on graphs of order at most $n$. In
  fact, the higher-order GNNs are closely related to the invariant and
  equivariant graph networks introduced in
  \cite{DBLP:conf/nips/MaronBSL19,DBLP:conf/iclr/MaronBSL19}.

  Interestingly, for many restricted graph classes, for example, all
  classes of graphs excluding a fixed graph as a minor~\cite{gro17}, a
  constant order is already sufficient for the universality. In
  particular, all planar graph invariants can be expressed by
  $4$-GNNs. This follows from the fact that $3$-WL characterises all
  planar graphs up to isomorphism \cite{kieponschwe17}.
  \uend
\end{remark}

\section{Random Initialisation}
\label{sec:ri}

Instead of going to higher-order networks, which come with a
substantial computational cost, random initialisation is another
simple idea for increasing the expressiveness of GNNs, actually
without a steep computational cost.

Recall that the initial feature map $\zeta^{(0)}$ of a GNN operating
on a graph $G$ is a $q^{(0)}$-dimensional vector whose first $\ell$
entries encode the vertex-labelling of $G$ and whose remaining
components are set to $0$. In the following, we always assume that
$q^{(0)}>\ell$. Instead of initialising the entries above $\ell$ to
$0$, we initialise them with random numbers, say, drawn uniformly from
the interval $[0,1]$. (For our theoretical considerations, the exact
distribution is not important. In practice, it has some effect, see
\cite{abbceygroluk21}). In the following, we speak
of GNNs with \emph{random node initialisation (RNI)}. To avoid
cumbersome terminology, let us assume that \emph{GNNs with RNI always admit
global readout.}

The computation
of a GNN with RNI is no longer deterministic but becomes a random
variable. Note that this random variable is isomorphism
invariant. To express a query or invariant, we must
quantify the error probability. We say that that a GNN with RNI
computing a vertex invariant $\xi$ (formally a random variable)
\emph{expresses} a unary query $Q$ if there are
$\epsilon,\delta<1/2$ such that
\begin{equation}
  \label{eq:5}
    \begin{cases}
    \Pr(\xi(G,v)\ge 1-\epsilon)\ge1-\delta&\text{if }v\in Q(G),\\
    \Pr(\xi(G,v)\le \epsilon)\ge1-\delta&\text{if }v\not\in Q(G).
  \end{cases}
\end{equation}
A similar definition can be made for Boolean queries.

To understand why GNNs with RNI can be more expressive than plain
GNNs, think of the query asking if a vertex is in a triangle. Since
$1$-WL cannot detect triangles (cf.~Example~\ref{exa:cr-inxp}),
neither can GNNs. Intuitively, the reason is that a GNN never detects
the origin of a message because vertices have no
identifiers. However, random initialisation with high probability
provides each node $v$ with a unique identifier (the entry in the
$(\ell+1)$st position of the initial state $\zeta^{(0)}(v)$). Thus the
GNN can detect a sequence of messages from a node $v$ to a
neighbour $w$ to a neighbour $x$ of $w$ and from there back to $v$.

Random node initialisation has often been used in practice as a
default for GNNs. Sato et al.~\cite{SatoYK21}
were the first to demonstrate the theoretical strength of GNNs with
RNI. It was shown by Abboud et
al.~\cite{abbceygroluk21} that GNNs with RNI have
the following universal approximation property. For a unary query $Q$
and a positive integer $n$, we say that a GNN with RNI
\emph{expresses $Q$ on graphs of order at most $n$} if conditions
\eqref{eq:5} are satisfied for all graphs $G$ of order at most $n$.

\begin{theorem}[\cite{abbceygroluk21}]\label{theo:rni}
  For unary query $Q$ and every $n\ge 1$ there is a
  GNN with RNI that expresses $Q$ on graphs of order at most $n$.
\end{theorem}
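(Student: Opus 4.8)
The plan is to exploit the fact that random node initialisation assigns, with probability one, pairwise distinct real ``identifiers'' to the vertices, thereby \emph{individualising} the graph, and that an individualised graph can be fully reconstructed by message passing. First I would observe that once every vertex $v$ carries a distinct identifier $r(v)$, a single neighbourhood aggregation already produces a complete invariant: storing at each $v$ the pair consisting of $r(v)$ and the multiset $\biglmulti r(w)\bigmid w\in N^G(v)\bigrmulti$ records the row of $v$ in the adjacency matrix, keyed by identifiers. Since we are allowed a global readout --- by the standing assumption that GNNs with RNI admit global readout, i.e.\ the update \eqref{eq:4} --- one further aggregation over \emph{all} vertices then lets every vertex store, in its own feature vector, a complete encoding of the individualised graph. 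As we only consider graphs of order at most $n$, every multiset arising here has order at most $n$, so it can be encoded continuously and injectively into a fixed-dimensional real vector (for instance via power sums $x\mapsto(x,x^2,\ldots,x^n)$, or multivariate moments for multisets of vectors); this is exactly what sum-aggregation together with a suitable FNN-representable message function computes.

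Given this encoding, I would define the readout at a vertex $v$ to take the stored global encoding together with $v$'s own identifier $r(v)$, recover from it the isomorphism type of the pointed graph $(G,v)$, and output $1$ if $v\in Q(G)$ and $0$ otherwise; since $Q$ is an invariant, this value is well defined and does not depend on the particular identifiers drawn. This yields an \emph{ideal} recurrent GNN with global readout, using only a bounded number of iterations, that computes $Q$ exactly on every graph of order at most $n$ whenever the random initialisation produces distinct identifiers.

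The main obstacle is that the combination and readout functions just described are not continuous --- decoding a multiset and testing adjacency involve exact equality tests of reals --- whereas FNNs compute only continuous functions. I would resolve this exactly as in the concluding step of the proof of Theorem~\ref{theo:gnn-lower}. Fix $\epsilon,\delta<1/2$. For a sufficiently small $\gamma>0$, the event $K_\gamma$ that all identifiers are pairwise at distance at least $\gamma$ has probability at least $1-\delta$ (as $\gamma\to0$ this probability tends to that of the all-distinct event, which is $1$), and the corresponding set of labellings is compact. On $K_\gamma$ the encoding map is continuous, the feature encodings belonging to distinct isomorphism types of pointed graphs form finitely many pairwise disjoint compact sets and are therefore separated by a positive distance, and the target bit is constant on each piece; hence the ideal readout, as well as every intermediate ideal combination function, is continuous on the compact range of features that actually arises for labellings in $K_\gamma$.

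Finally I would invoke the universal approximation theorem (Theorem~\ref{theo:fnn-app}) to replace each ideal combination function and the readout by a depth-$2$ FNN approximating it to within a tolerance chosen small enough that the error, propagated through the bounded number of layers, keeps the final output within $\epsilon$ of the correct bit uniformly over $K_\gamma$. Since $\Pr(K_\gamma)\ge1-\delta$, the resulting GNN with RNI then satisfies \eqref{eq:5}, which is the assertion of the theorem. The only delicate part is the routine $\epsilon$-$\delta$ bookkeeping showing that sufficiently accurate layer-wise approximations compose to a globally accurate computation on the compact good event $K_\gamma$; this is entirely analogous to the ``$\epsilon$-$\delta$ magic'' concluding the proof of Theorem~\ref{theo:gnn-lower}.
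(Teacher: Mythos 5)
Your proposal is correct and shares the paper's key idea --- random initialisation individualises the vertices with high probability, after which the graph is completely determined --- but it implements the second half of the argument differently. The paper, after individualisation, passes through logic: with distinct identifiers the logic $\LCk2$ can describe each pointed graph of order at most $n$ up to isomorphism, so the query becomes expressible as a big disjunction over all pairs $(G,v)$ with $v\in Q(G)$, and one then invokes the already-established translation from $\LCk2$ into GNNs with global readout. You instead build the GNN directly: an injective continuous encoding of bounded-order multisets via sum-aggregation of power sums/moments, a global readout broadcasting a complete encoding of the individualised graph to every vertex, an ideal discontinuous decoder made continuous by restricting to the compact event $K_\gamma$ of $\gamma$-separated identifiers (where the encodings of the finitely many pointed isomorphism types form disjoint compact sets), and finally universal approximation with $\epsilon$-$\delta$ propagation, exactly in the style of the proof of Theorem~\ref{theo:gnn-lower}. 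The paper's route reuses the explicit, uniform construction behind the $\LCk2$-to-GNN theorem (small, concrete combination functions with $\lsig$, at the price of an exponentially large disjunction), whereas your route is self-contained on the analytic side, works for any non-polynomial continuous activation via Theorem~\ref{theo:fnn-app}, but gives less control over the size of the resulting FNNs; both are non-uniform in $n$, and your handling of the compactness and separation issues is precisely the point the paper's three-sentence sketch leaves implicit.
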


A similar theorem holds for Boolean queries, and even for graph
invariants and vertex invariants (see
\cite{abbceygroluk21}).

\begin{proof}[Proof sketch]
  We can use the random initialisation to create a vertex labelling
  that uniquely identifies each vertex, with high probability. With this labelling, the logic
  $\LCk2$ and hence GNNs can describe the graph up to isomorphism. We   use this to describe the property, essentially as a big disjunction
  over all pairs $(G,v)$ consisting of a graph $G$ order at most $n$
  and a
  vertex $v\in Q(G)$.
\end{proof}

While the theorem is non-uniform and the proof pays no attention to
computational efficiency in terms of the size $n$ of the input graphs,
various experiments
\cite{abbceygroluk21,SatoYK21}
have shown that random initialisation indeed increases the
expressiveness. Yet a more careful complexity-theoretic analysis
remains future work.

\section{Conclusions and Open Problems}
\label{sec:conc}

We have seen that the expressiveness of graph neural networks has
precise characterisations in terms of the Weisfeiler-Leman algorithm
and 2-variable counting logic. Understanding the expressiveness of
machine learning architectures is useful to guide us in the choice of
an appropriate architecture for a problem at hand and to compare
different architectures and approaches. Of course, expressiveness is only one aspect
of a machine learning algorithm, other important aspects like the
ability to generalise from the given data, and the computational
efficiency of learning and inference are not considered in this paper.

The tight correspondence between the expressiveness of GNNs and logical
expressiveness may open possibilities for neuro-symbolic integration,
that is, the integration of logic-based and statistical reasoning in
AI. The theorems presented describe which logical queries can be
expressed using GNNs. Of course, that does not mean that we can
actually learn GNN models representing these queries. With current
techniques, I would regard the question of
learnability mainly as an empirical question that can be
studied experimentally. In various contexts, it has been demonstrated
that GNNs for logical queries can be learned (for example,
\cite{abbceygroluk21,tonritwolgro20+}), but I
believe a more systematic empirical investigation might be worthwhile.

There are also many interesting theoretical questions that
remain open. Uniformity is an issue that comes up in several of the
result presented here. Theorems~\ref{theo:gnn-lower},
\ref{theo:gnn-wl-lower}, \ref{theo:kgnn}(2), and \ref{theo:rni} are
non-uniform expressiveness results: they state the existence of certain
GNNs that depend on the size of the input graph. By comparison,
Theorem~\ref{theo:gnn-logic} is uniform.

\begin{question}
  Is there a uniform version of Theorems~\ref{theo:gnn-lower}, that
  is, a recurrent GNN such that for all $t\ge0$ the vertex invariant $\xi^{(t)}$ computed by
  the $t$-th iteration of the GNN refines $\colref t$?

  The same question can be asked for Theorems~\ref{theo:gnn-lower} and \ref{theo:gnn-wl-lower}(2).
\end{question}

The colouring obtained by $1$-WL can be defined in 2-variable
fixed-point logic, and presumably the same holds for colour
refinement and a suitable modal fixed-point logic with counting. Thus, as a common
generalisation of the previous question and
Theorem~\ref{theo:gnn-logic}, we may ask the following.

\begin{question}
  Let $Q$ be a unary query expressible in a suitable modal
  (2-variable) fixed-point logic with counting. Is there a recurrent
  GNN (with global readout) expressing $Q$? 
\end{question}

So far, Theorem~\ref{theo:gnn-logic} has only been proved using the
linearised sigmoid function as activation function. 

\begin{question}
  For which activation functions does Theorem~\ref{theo:gnn-logic}
  hold? Is there a general condition (similar to the
  ``non-polynomial'' in Theorem~~\ref{theo:fnn-app})?
\end{question}


With Theorem~\ref{theo:logic-gnn}, we have a partial converse of
Theorem~\ref{theo:gnn-logic} tightening the connection between the
logic $\LGCk2$ and GNNs. It is an open question if a similar result
holds for $\LCk2$ and GNNs with global readout.

\begin{question}
  Is every unary query expressible by a GNN with global readout and also
  expressible in first-order logic expressible in the logic  $\LCk2$?
\end{question}

Of course, the uniformity question can also be asked for
Theorem~\ref{theo:rni}, but technically the situation is a bit
different, and it seems very unlikely that we obtain a uniform version
of that theorem. It should not be too hard to prove this.

\begin{question}
  Is there a Boolean query not expressible by a
  (possibly recurrent) GNN with random node initialisation. 
\end{question}

But still, there might be interesting uniform expressiveness results for
GNNs with random node initialisation. Expressiveness of queries by GNNs
with random node initialisation is related to logical
expressiveness by order-invariant formulas (see, for example,
\cite[Chapter 5]{lib04}).

\begin{question}
  Can we express all Boolean or unary queries expressible by an
  order-invariant $\LC^2$-formula by a GNN with random node
  initialisation? 
\end{question}

\begin{question}
  Can we express all Boolean or unary queries expressible by an
  order-invariant 2-variable fixed-point formula with counting by a
  recurrent GNN with random node
  initialisation? 
\end{question}

Of course there is no need to only consider logical queries.

\begin{question}
  Can we express all Boolean or unary queries computable in polynomial
  time by a recurrent GNN with random node initialisation?
\end{question}

The previous question also has an interesting non-uniform version.

\begin{question}
  Let $Q$ be a Boolean query computable by a (non-uniform) family of
  Boolean threshold circuits of polynomial size. Is there a family
  $(N_n)_{n\ge 1}$ of polynomial size GNNs with random node
  initialisation such that $N_n$ expresses $Q$ on input graphs of size
  $n$?
\end{question}

\printbibliography

@inproceedings{SatoYK21,
	author = {Ryoma Sato and Makoto Yamada and Hisashi Kashima},
	booktitle = {Proceedings of the 2021 {SIAM} International Conference on Data Mining, {SDM} 2021, Virtual Event, April 29 - May 1, 2021},
	date-added = {2022-01-09 21:01:45 +0100},
	date-modified = {2022-01-09 21:02:29 +0100},
	doi = {10.1137/1.9781611976700.38},
	editor = {Carlotta Demeniconi and Ian Davidson},
	pages = {333--341},
	publisher = {{SIAM}},
	title = {Random Features Strengthen Graph Neural Networks},
	year = {2021},
	bdsk-url-1 = {https://doi.org/10.1137/1.9781611976700.38}}

@inproceedings{abbceygroluk21,
	author = {Ralph Abboud and Ismail Ilkan Ceylan and Martin Grohe and Thomas Lukasiewicz},
	booktitle = {Proceedings of the 30th International Joint Conference on Artificial Intelligence},
	date-added = {2022-01-09 20:54:07 +0100},
	date-modified = {2022-01-09 20:58:35 +0100},
	doi = {10.24963/ijcai.2021/291},
	editor = {Zhi-Hua Zhou},
	pages = {2112--2118},
	title = {The Surprising Power of Graph Neural Networks with Random Node Initialization},
	year = {2021},
	bdsk-url-1 = {https://arxiv.org/abs/2010.01179}}

@inproceedings{xuhulesjeg19,
	author = {Keyulu Xu and Weihua Hu and Jure Leskovec and Stefanie Jegelka},
	bibsource = {dblp computer science bibliography, https://dblp.org},
	biburl = {https://dblp.org/rec/conf/iclr/XuHLJ19.bib},
	booktitle = {7th International Conference on Learning Representations, {ICLR} 2019, New Orleans, LA, USA, May 6-9, 2019},
	date-added = {2021-04-29 19:01:34 +0200},
	date-modified = {2021-04-29 19:01:51 +0200},
	publisher = {OpenReview.net},
	timestamp = {Thu, 25 Jul 2019 13:03:15 +0200},
	title = {How Powerful are Graph Neural Networks?},
	url = {https://openreview.net/forum?id=ryGs6iA5Km},
	year = {2019},
	bdsk-url-1 = {https://dblp.org/rec/conf/iclr/XuHLJ19},
	bdsk-url-2 = {https://openreview.net/forum?id=ryGs6iA5Km}}

@inproceedings{DBLP:conf/nips/MaronBSL19,
	author = {Haggai Maron and Heli Ben{-}Hamu and Hadar Serviansky and Yaron Lipman},
	bibsource = {dblp computer science bibliography, https://dblp.org},
	biburl = {https://dblp.org/rec/conf/nips/MaronBSL19.bib},
	booktitle = {Advances in Neural Information Processing Systems 32: Annual Conference on Neural Information Processing Systems 2019, NeurIPS 2019, December 8-14, 2019, Vancouver, BC, Canada},
	date-added = {2021-04-28 19:33:45 +0200},
	date-modified = {2021-04-28 19:33:45 +0200},
	editor = {Hanna M. Wallach and Hugo Larochelle and Alina Beygelzimer and Florence d'Alch{\'{e}}{-}Buc and Emily B. Fox and Roman Garnett},
	pages = {2153--2164},
	timestamp = {Thu, 21 Jan 2021 15:15:19 +0100},
	title = {Provably Powerful Graph Networks},
	url = {https://proceedings.neurips.cc/paper/2019/hash/bb04af0f7ecaee4aae62035497da1387-Abstract.html},
	year = {2019},
	bdsk-url-1 = {https://dblp.org/rec/conf/nips/MaronBSL19},
	bdsk-url-2 = {https://proceedings.neurips.cc/paper/2019/hash/bb04af0f7ecaee4aae62035497da1387-Abstract.html}}

@inproceedings{DBLP:conf/nips/0001RM20,
	author = {Christopher Morris and Gaurav Rattan and Petra Mutzel},
	bibsource = {dblp computer science bibliography, https://dblp.org},
	biburl = {https://dblp.org/rec/conf/nips/0001RM20.bib},
	booktitle = {Advances in Neural Information Processing Systems 33: Annual Conference on Neural Information Processing Systems 2020, NeurIPS 2020, December 6-12, 2020, virtual},
	date-added = {2021-04-28 19:30:23 +0200},
	date-modified = {2021-04-28 19:30:23 +0200},
	editor = {Hugo Larochelle and Marc'Aurelio Ranzato and Raia Hadsell and Maria{-}Florina Balcan and Hsuan{-}Tien Lin},
	timestamp = {Tue, 19 Jan 2021 00:00:00 +0100},
	title = {Weisfeiler and Leman go sparse: Towards scalable higher-order graph embeddings},
	url = {https://proceedings.neurips.cc/paper/2020/hash/f81dee42585b3814de199b2e88757f5c-Abstract.html},
	year = {2020},
	bdsk-url-1 = {https://dblp.org/rec/conf/nips/0001RM20},
	bdsk-url-2 = {https://proceedings.neurips.cc/paper/2020/hash/f81dee42585b3814de199b2e88757f5c-Abstract.html}}

@inproceedings{kieponschwe17,
	author = {S. Kiefer and I. Ponomarenko and P. Schweitzer},
	booktitle = {Proceedings of the 32nd ACM-IEEE Symposium on Logic in Computer Science},
	date-added = {2021-04-20 08:50:06 +0200},
	date-modified = {2021-04-20 08:50:06 +0200},
	title = {The {W}eisfeiler-{L}eman dimension of planar graphs is at most 3},
	year = {2017}}

@article{kie20,
	author = {Sandra Kiefer},
	date-added = {2021-04-20 08:49:56 +0200},
	date-modified = {2021-04-20 08:49:56 +0200},
	journal = {{ACM} {SIGLOG} News},
	number = {3},
	pages = {5-27},
	title = {The {W}eisfeiler-{L}eman Algorithm: An Exploration of its Power},
	volume = {7},
	year = {2020}}

@book{ebbflu99,
	author = {H.-D. Ebbinghaus and J. Flum},
	date-added = {2021-04-20 08:36:39 +0200},
	date-modified = {2021-04-20 08:36:39 +0200},
	edition = {2nd},
	publisher = {Springer Verlag},
	title = {Finite Model Theory},
	year = {1999}}

@book{imm99,
	author = {N. Immerman},
	date-added = {2021-04-20 08:36:32 +0200},
	date-modified = {2021-04-20 08:36:32 +0200},
	publisher = {Springer Verlag},
	title = {Descriptive Complexity},
	year = {1999}}

@book{lib04,
	author = {L. Libkin},
	date-added = {2021-04-20 08:36:25 +0200},
	date-modified = {2021-04-20 08:36:25 +0200},
	publisher = {Springer Verlag},
	title = {Elements of Finite Model Theory},
	year = {2004}}

@inproceedings{DBLP:conf/iclr/MaronBSL19,
	author = {Haggai Maron and Heli Ben{-}Hamu and Nadav Shamir and Yaron Lipman},
	bibsource = {dblp computer science bibliography, https://dblp.org},
	biburl = {https://dblp.org/rec/conf/iclr/MaronBSL19.bib},
	booktitle = {7th International Conference on Learning Representations, {ICLR} 2019, New Orleans, LA, USA, May 6-9, 2019},
	date-added = {2021-04-18 22:08:16 +0200},
	date-modified = {2021-04-18 22:08:16 +0200},
	publisher = {OpenReview.net},
	timestamp = {Thu, 25 Jul 2019 13:03:15 +0200},
	title = {Invariant and Equivariant Graph Networks},
	url = {https://openreview.net/forum?id=Syx72jC9tm},
	year = {2019},
	bdsk-url-1 = {https://dblp.org/rec/conf/iclr/MaronBSL19},
	bdsk-url-2 = {https://openreview.net/forum?id=Syx72jC9tm}}

@book{ott97,
	author = {M. Otto},
	date-added = {2021-04-17 15:11:54 +0200},
	date-modified = {2021-04-17 15:11:54 +0200},
	publisher = {Springer Verlag},
	series = {Lecture Notes in Logic},
	title = {Bounded variable logics and counting -- {A} study in finite models},
	volume = {9},
	year = {1997}}

@article{DBLP:journals/corr/abs-1910-00039,
	archiveprefix = {arXiv},
	author = {Martin Otto},
	bibsource = {dblp computer science bibliography, https://dblp.org},
	biburl = {https://dblp.org/rec/journals/corr/abs-1910-00039.bib},
	date-added = {2021-04-16 18:46:08 +0200},
	date-modified = {2021-04-16 18:46:08 +0200},
	eprint = {1910.00039},
	journal = {CoRR},
	timestamp = {Tue, 08 Oct 2019 01:00:00 +0200},
	title = {Graded modal logic and counting bisimulation},
	url = {http://arxiv.org/abs/1910.00039},
	volume = {abs/1910.00039},
	year = {2019},
	bdsk-url-1 = {https://dblp.org/rec/journals/corr/abs-1910-00039},
	bdsk-url-2 = {http://arxiv.org/abs/1910.00039}}

@article{tonritwolgro20+,
	author = {Jan T{\"o}nshoff and Martin Ritzert and Hinrikus Wolf and Martin Grohe},
	date-added = {2021-04-15 17:15:31 +0200},
	date-modified = {2021-04-15 17:15:31 +0200},
	doi = {10.3389/frai.2020.580607},
	journal = {Frontiers in Artificial Intelligence},
	status = {JOU},
	title = {Graph Neural Networks for Maximum Constraint Satisfaction},
	volume = {3},
	year = {2021},
	bdsk-url-1 = {https://doi.org/10.3389/frai.2020.580607}}

@inproceedings{schlikipblo+18,
	author = {M . Schlichtkrull and T.N. Kipf and P. Bloem and R. Van Den Berg and I. Titov and M. Welling},
	booktitle = {Proceedings of the European Semantic Web Conference},
	date-added = {2021-04-15 17:14:54 +0200},
	date-modified = {2021-04-15 17:14:54 +0200},
	editor = {Gangemi, A. and Navigli, R. and Vidal, M.-E. and Hitzler, P. and Troncy, R. and Hollink, L. and Tordai, A. and Alam, M.},
	pages = {593--607},
	publisher = {Springer Verlag},
	series = {Lecture Notes in Computer Science},
	title = {Modeling relational data with graph convolutional networks},
	volume = {10843},
	year = {2018}}

@inproceedings{hamyinles17,
	author = {W.L. Hamilton and R. Ying and J. Leskovec},
	booktitle = {Proceedings of the 30th Annual Conference on Neural Information Processing Systems},
	date-added = {2021-04-15 17:04:45 +0200},
	date-modified = {2021-04-15 17:04:45 +0200},
	pages = {1024-1034},
	title = {Inductive Representation Learning on Large Graphs},
	year = {2017}}

@inproceedings{galmic10,
	author = {C. Gallicchio and A. Micheli},
	booktitle = {Proceedings of the IEEE International Joint Conference on Neural Networks},
	date-added = {2021-04-15 17:04:04 +0200},
	date-modified = {2021-04-15 17:04:04 +0200},
	title = {Graph echo state networks},
	year = {2010}}

@inproceedings{kipwel17,
	author = {T. N. Kipf and M. Welling},
	booktitle = {Proceedings of the 5th International Conference on Learning Representations},
	date-added = {2021-04-15 17:03:55 +0200},
	date-modified = {2021-04-15 17:03:55 +0200},
	title = {Semi-supervised classification with graph convolutional networks},
	year = {2017}}

@article{DBLP:journals/tnn/WuPCLZY21,
	author = {Zonghan Wu and Shirui Pan and Fengwen Chen and Guodong Long and Chengqi Zhang and Philip S. Yu},
	bibsource = {dblp computer science bibliography, https://dblp.org},
	biburl = {https://dblp.org/rec/journals/tnn/WuPCLZY21.bib},
	date-added = {2021-04-15 16:24:24 +0200},
	date-modified = {2021-04-15 16:24:24 +0200},
	doi = {10.1109/TNNLS.2020.2978386},
	journal = {{IEEE} Trans. Neural Networks Learn. Syst.},
	number = {1},
	pages = {4--24},
	timestamp = {Tue, 23 Mar 2021 00:00:00 +0100},
	title = {A Comprehensive Survey on Graph Neural Networks},
	url = {https://doi.org/10.1109/TNNLS.2020.2978386},
	volume = {32},
	year = {2021},
	bdsk-url-1 = {https://dblp.org/rec/journals/tnn/WuPCLZY21},
	bdsk-url-2 = {https://doi.org/10.1109/TNNLS.2020.2978386}}

@inproceedings{DBLP:conf/icml/GilmerSRVD17,
	author = {Justin Gilmer and Samuel S. Schoenholz and Patrick F. Riley and Oriol Vinyals and George E. Dahl},
	bibsource = {dblp computer science bibliography, https://dblp.org},
	biburl = {https://dblp.org/rec/conf/icml/GilmerSRVD17.bib},
	booktitle = {Proceedings of the 34th International Conference on Machine Learning, {ICML} 2017, Sydney, NSW, Australia, 6-11 August 2017},
	date-added = {2021-04-15 16:19:34 +0200},
	date-modified = {2021-04-15 16:19:34 +0200},
	editor = {Doina Precup and Yee Whye Teh},
	pages = {1263--1272},
	publisher = {{PMLR}},
	series = {Proceedings of Machine Learning Research},
	timestamp = {Wed, 29 May 2019 08:41:45 +0200},
	title = {Neural Message Passing for Quantum Chemistry},
	url = {http://proceedings.mlr.press/v70/gilmer17a.html},
	volume = {70},
	year = {2017},
	bdsk-url-1 = {https://dblp.org/rec/conf/icml/GilmerSRVD17},
	bdsk-url-2 = {http://proceedings.mlr.press/v70/gilmer17a.html}}

@article{scagortso09,
	author = {F. Scarselli and M. Gori and A.C. Tsoi and M. Hagenbuchner and G. Monfardini},
	date-added = {2021-04-15 16:18:53 +0200},
	date-modified = {2021-04-15 16:18:53 +0200},
	journal = {IEEE Transactions on Neural Networks},
	number = {1},
	pages = {61-80},
	title = {The graph neural network model},
	volume = {20},
	year = {2009}}

@inproceedings{DBLP:conf/iclr/BarceloKM0RS20,
	author = {Pablo Barcel{\'{o}} and Egor V. Kostylev and Mika{\"{e}}l Monet and Jorge P{\'{e}}rez and Juan L. Reutter and Juan Pablo Silva},
	bibsource = {dblp computer science bibliography, https://dblp.org},
	biburl = {https://dblp.org/rec/conf/iclr/BarceloKM0RS20.bib},
	booktitle = {8th International Conference on Learning Representations, {ICLR} 2020, Addis Ababa, Ethiopia, April 26-30, 2020},
	date-added = {2021-04-12 10:26:36 +0200},
	date-modified = {2021-04-12 10:26:36 +0200},
	publisher = {OpenReview.net},
	timestamp = {Thu, 07 May 2020 17:11:47 +0200},
	title = {The Logical Expressiveness of Graph Neural Networks},
	url = {https://openreview.net/forum?id=r1lZ7AEKvB},
	year = {2020},
	bdsk-url-1 = {https://dblp.org/rec/conf/iclr/BarceloKM0RS20},
	bdsk-url-2 = {https://openreview.net/forum?id=r1lZ7AEKvB}}

@article{DBLP:journals/sLogica/Rijke00,
	author = {Maarten de Rijke},
	bibsource = {dblp computer science bibliography, https://dblp.org},
	biburl = {https://dblp.org/rec/journals/sLogica/Rijke00.bib},
	date-added = {2021-04-12 10:25:59 +0200},
	date-modified = {2021-04-12 10:25:59 +0200},
	doi = {10.1023/A:1005245900406},
	journal = {Stud Logica},
	number = {2},
	pages = {271--283},
	timestamp = {Tue, 01 Sep 2020 01:00:00 +0200},
	title = {A Note on Graded Modal Logic},
	url = {https://doi.org/10.1023/A:1005245900406},
	volume = {64},
	year = {2000},
	bdsk-url-1 = {https://dblp.org/rec/journals/sLogica/Rijke00},
	bdsk-url-2 = {https://doi.org/10.1023/A:1005245900406}}

@inproceedings{gro00,
	author = {M. Grohe},
	booktitle = {Proceedings of the 32nd ACM Symposium on Theory of Computing},
	date-added = {2021-04-08 10:24:33 +0200},
	date-modified = {2021-04-08 10:25:39 +0200},
	pages = {63-72},
	title = {Isomorphism testing for embeddable graphs through definability},
	year = {2000},
	bdsk-file-1 = {YnBsaXN0MDDSAQIDBFxyZWxhdGl2ZVBhdGhZYWxpYXNEYXRhXxAmLi4vLi4vLi4vLlRyYXNoL0JpYi9NeXBhcGVycy9ncm8wMC5wZGZPEQGGAAAAAAGGAAIAAAxNYWNpbnRvc2ggSEQAAAAAAAAAAAAAAAAAAADMzCakSCsAAAFE8cUJZ3JvMDAucGRmAAAAAAAAAAAAAAAAAAAAAAAAAAAAAAAAAAAAAAAAAAAAAAAAAAAAAAAAAAAAAAAAAAAAAAAAAUhB0NIFz+8AAAAAAAAAAAADAAQAAAkgAAAAAAAAAAAAAAAAAAAACE15cGFwZXJzABAACAAAzMwYlAAAABEACAAA0gWzzwAAAAEAFAFE8cUBRAGkAAnQqAAFzNUAAgY6AAIAO01hY2ludG9zaCBIRDpVc2VyczoAZ3JvaGU6AC5UcmFzaDoAQmliOgBNeXBhcGVyczoAZ3JvMDAucGRmAAAOABQACQBnAHIAbwAwADAALgBwAGQAZgAPABoADABNAGEAYwBpAG4AdABvAHMAaAAgAEgARAASAClVc2Vycy9ncm9oZS8uVHJhc2gvQmliL015cGFwZXJzL2dybzAwLnBkZgAAEwABLwAAFQACAAz//wAAAAgADQAaACQATQAAAAAAAAIBAAAAAAAAAAUAAAAAAAAAAAAAAAAAAAHX}}

@article{berbongro17,
	author = {C. Berkholz and P. Bonsma and M. Grohe},
	date-added = {2021-04-06 13:45:54 +0200},
	date-modified = {2021-04-06 13:45:54 +0200},
	journal = {Theory of Computing Systems},
	number = {4},
	pages = {581-614},
	status = {JOU},
	title = {Tight Lower and Upper Bounds for the Complexity of Canonical Colour Refinement},
	volume = {60},
	year = {2017},
	bdsk-url-1 = {http://arxiv.org/abs/1509.08251}}

@article{paitar87,
	author = {Paige, R. and Tarjan, R.E.},
	date-added = {2021-04-06 13:45:42 +0200},
	date-modified = {2021-04-06 13:45:42 +0200},
	journal = {{SIAM} Journal on Computing},
	number = {6},
	pages = {973--989},
	title = {Three partition refinement algorithms},
	volume = {16},
	year = {1987},
	bdsk-url-1 = {http://dx.doi.org/10.1137/0216062}}

@article{carcro82,
	author = {A. Cardon and M. Crochemore},
	date-added = {2021-04-06 13:45:34 +0200},
	date-modified = {2021-04-06 13:45:34 +0200},
	journal = {Theoretical Computer Science},
	number = {1},
	pages = {85 - 98},
	title = {Partitioning a graph in {$O(|A|\log_2|V|)$}},
	volume = {19},
	year = {1982}}

@incollection{bab81,
	author = {L. Babai},
	booktitle = {Fundamentals of Computation Theory, FCT'81},
	date-added = {2021-04-06 11:15:30 +0200},
	date-modified = {2021-04-29 18:59:21 +0200},
	editor = {F. G{\'e}cseg},
	keywords = {isomorphism},
	pages = {34-50},
	publisher = {Springer},
	series = {Lecture Notes in Computer Science},
	title = {Moderately exponential bound for graph isomorphism},
	volume = {117},
	year = {1981}}

@inproceedings{bab16,
	author = {L. Babai},
	booktitle = {Proceedings of the 48th Annual {ACM} Symposium on Theory of Computing ({STOC})},
	date-added = {2021-04-06 11:15:18 +0200},
	date-modified = {2021-04-29 18:57:30 +0200},
	pages = {684-697},
	title = {Graph Isomorphism in Quasipolynomial Time},
	year = {2016}}

@article{weilem68,
	author = {B.Y. Weisfeiler and A.A. Leman},
	date-added = {2021-04-06 11:13:34 +0200},
	date-modified = {2021-04-06 11:13:34 +0200},
	journal = {NTI, Series 2},
	note = {English translation by G.~Ryabov available at \url{https://www.iti.zcu.cz/wl2018/pdf/wl_paper_translation.pdf}},
	title = {The reduction of a graph to canonical form and the algebra which appears therein},
	year = {1968}}

@inproceedings{delgrorat18,
	author = {H. Dell and M. Grohe and G. Rattan},
	booktitle = {Proceedings of the 45th International Colloquium on Automata, Languages and Programming (Track A)},
	date-added = {2021-04-06 11:10:51 +0200},
	date-modified = {2021-04-06 11:10:51 +0200},
	editor = {I. Chatzigiannakis and C. Kaklamanis and D. Marx and D. Sannella},
	pages = {40:1-40:14},
	publisher = {Schloss Dagstuhl - Leibniz-Zentrum f{\"u}r Informatik},
	series = {LIPIcs},
	status = {CONF},
	title = {Lov{\'a}sz Meets {W}eisfeiler and {L}eman},
	volume = {107},
	year = {2018}}

@inproceedings{bergro15,
	author = {Christoph Berkholz and Martin Grohe},
	booktitle = {Proceedings of the 42nd International Colloquium on Automata, Languages and Programming, Part I},
	date-added = {2021-04-06 11:10:38 +0200},
	date-modified = {2021-04-06 11:10:38 +0200},
	doi = {10.1007/978-3-662-47672-7_13},
	editor = {M.M. Halld{\'{o}}rsson and K. Iwama and N. Kobayashi and B. Speckmann},
	pages = {155-166},
	publisher = {Springer Verlag},
	series = {Lecture Notes in Computer Science},
	status = {CONF},
	title = {Limitations of Algebraic Approaches to Graph Isomorphism Testing},
	volume = {9134},
	year = {2015},
	bdsk-url-1 = {https://doi.org/10.1007/978-3-662-47672-7_13}}

@incollection{immlan90,
	author = {N. Immerman and E. Lander},
	booktitle = {Complexity theory retrospective},
	date-added = {2021-04-06 11:10:23 +0200},
	date-modified = {2021-04-06 11:10:23 +0200},
	editor = {A. Selman},
	pages = {59-81},
	publisher = {Springer-Verlag},
	title = {Describing graphs: A first-order approach to graph canonization},
	year = {1990}}

@book{gro17,
	author = {M. Grohe},
	date-added = {2021-04-06 11:10:10 +0200},
	date-modified = {2021-04-06 11:10:10 +0200},
	publisher = {Cambridge University Press},
	series = {Lecture Notes in Logic},
	status = {BOOK},
	title = {Descriptive Complexity, Canonisation, and Definable Graph Structure Theory},
	volume = {47},
	year = {2017}}

@article{caifurimm92,
	author = {J. Cai and M. F{\"u}rer and N. Immerman},
	date-added = {2021-04-06 11:06:12 +0200},
	date-modified = {2021-04-06 11:06:12 +0200},
	journal = {Combinatorica},
	keywords = {isomorphism},
	pages = {389-410},
	title = {An optimal lower bound on the number of variables for graph identification},
	volume = {12},
	year = {1992}}

@article{mor65,
	author = {H.L. Morgan},
	date-added = {2021-04-02 10:28:19 +0200},
	date-modified = {2021-04-02 10:28:19 +0200},
	journal = {Journal of Chemical Documentation},
	number = {2},
	pages = {107-113},
	title = {The generation of a unique machine description for chemical structures---a technique developed at chemical abstracts service},
	volume = {5},
	year = {1965}}

@article{sheschlee+11,
	author = {Shervashidze, N. and Schweitzer, P. and van Leeuwen, E.J. and Mehlhorn, K. and Borgwardt, K.M.},
	date-added = {2021-04-02 10:22:44 +0200},
	date-modified = {2021-04-02 10:22:44 +0200},
	journal = {Journal of Machine Learning Research},
	keywords = {isomorphism},
	pages = {2539-2561},
	title = {Weisfeiler-{L}ehman Graph Kernels},
	volume = {12},
	year = {2011}}

@inproceedings{morkermut17,
	author = {C. Morris and K. Kersting and P. Mutzel},
	booktitle = {Proceedings of the 2017 {IEEE} International Conference on Data Mining},
	date-added = {2021-04-02 10:22:27 +0200},
	date-modified = {2021-04-02 10:22:27 +0200},
	pages = {327-336},
	title = {Globalized {W}eisfeiler-{L}ehman Graph Kernels: Global-Local Feature Maps of Graphs},
	year = {2017}}

@inproceedings{kermlagar+14,
	author = {K. Kersting and M. Mladenov and R. Garnet and M. Grohe},
	booktitle = {Proceedings of the 28th AAAI Conference on Artificial Intelligence},
	date-added = {2021-04-02 10:21:52 +0200},
	date-modified = {2021-04-02 10:21:52 +0200},
	pages = {1904-1910},
	status = {CONF},
	title = {Power Iterated Color Refinement},
	year = {2014}}

@inproceedings{grokermla+14,
	author = {M. Grohe and K. Kersting and M. Mladenov and E. Selman},
	booktitle = {Proceedings of the 22nd Annual European Symposium on Algorithms},
	date-added = {2021-04-02 10:21:52 +0200},
	date-modified = {2021-04-29 18:59:48 +0200},
	editor = {A. Schulz and D. Wagner},
	pages = {505-516},
	publisher = {Springer},
	series = {Lecture Notes in Computer Science},
	status = {CONF},
	title = {Dimension Reduction via Colour Refinement},
	volume = {8737},
	year = {2014},
	bdsk-file-1 = {YnBsaXN0MDDSAQIDBFxyZWxhdGl2ZVBhdGhZYWxpYXNEYXRhXxAtLi4vLi4vLi4vLlRyYXNoL0JpYi9NeXBhcGVycy9ncm9rZXJtbGErMTQucGRmTxEBoAAAAAABoAACAAAMTWFjaW50b3NoIEhEAAAAAAAAAAAAAAAAAAAAzMwmpEgrAAABRPHFEGdyb2tlcm1sYSsxNC5wZGYAAAAAAAAAAAAAAAAAAAAAAAAAAAAAAAAAAAAAAAAAAAAAAAAAAAAAAAAAAAAAAAFFB7rR+jxMAAAAAAAAAAAAAwAEAAAJIAAAAAAAAAAAAAAAAAAAAAhNeXBhcGVycwAQAAgAAMzMGJQAAAARAAgAANH6ICwAAAABABQBRPHFAUQBpAAJ0KgABczVAAIGOgACAEJNYWNpbnRvc2ggSEQ6VXNlcnM6AGdyb2hlOgAuVHJhc2g6AEJpYjoATXlwYXBlcnM6AGdyb2tlcm1sYSsxNC5wZGYADgAiABAAZwByAG8AawBlAHIAbQBsAGEAKwAxADQALgBwAGQAZgAPABoADABNAGEAYwBpAG4AdABvAHMAaAAgAEgARAASADBVc2Vycy9ncm9oZS8uVHJhc2gvQmliL015cGFwZXJzL2dyb2tlcm1sYSsxNC5wZGYAEwABLwAAFQACAAz//wAAAAgADQAaACQAVAAAAAAAAAIBAAAAAAAAAAUAAAAAAAAAAAAAAAAAAAH4}}

@article{dvo10,
	author = {Z. Dvor{\'{a}}k},
	date-added = {2021-04-02 10:21:33 +0200},
	date-modified = {2021-04-02 10:21:33 +0200},
	journal = {Journal of Graph Theory},
	number = {4},
	pages = {330--342},
	title = {On recognizing graphs by numbers of homomorphisms},
	volume = {64},
	year = {2010},
	bdsk-url-1 = {https://doi.org/10.1002/jgt.20461}}

@article{atsmanrob+19,
	author = {A. Atserias and Laura Man{\v c}inska and D.E. Roberson and R. {\v S}{\'a}mal and S. Severini and A. Varvitsiotis},
	date-added = {2021-04-02 10:21:16 +0200},
	date-modified = {2021-04-02 10:21:16 +0200},
	journal = {Journal of Combinatorial Theory, Series B},
	pages = {289-328},
	title = {Quantum and non-signalling graph isomorphisms},
	volume = {136},
	year = {2019}}

@article{atsman13,
	author = {A.~Atserias and E.~Maneva},
	date-added = {2021-04-02 10:21:16 +0200},
	date-modified = {2021-04-29 18:58:30 +0200},
	journal = {SIAM Journal on Computing},
	number = {1},
	pages = {112-137},
	title = {{S}herali--{A}dams Relaxations and Indistinguishability in Counting Logics},
	volume = {42},
	year = {2013}}

@inproceedings{abrdawwan17,
	author = {S. {Abramsky} and A. {Dawar} and P. {Wang}},
	booktitle = {2017 32nd Annual ACM/IEEE Symposium on Logic in Computer Science (LICS)},
	date-added = {2021-04-02 10:20:58 +0200},
	date-modified = {2021-04-02 10:20:58 +0200},
	doi = {10.1109/LICS.2017.8005129},
	pages = {1-12},
	title = {The pebbling comonad in Finite Model Theory},
	year = {2017},
	bdsk-url-1 = {https://doi.org/10.1109/LICS.2017.8005129}}

@article{McKayP14,
	author = {Brendan D. McKay and Adolfo Piperno},
	bibsource = {dblp computer science bibliography, https://dblp.org},
	biburl = {https://dblp.org/rec/journals/jsc/McKayP14.bib},
	date-added = {2021-04-02 10:20:12 +0200},
	date-modified = {2021-04-02 10:20:12 +0200},
	doi = {10.1016/j.jsc.2013.09.003},
	journal = {J. Symb. Comput.},
	pages = {94--112},
	timestamp = {Wed, 17 May 2017 14:25:48 +0200},
	title = {Practical graph isomorphism, {II}},
	url = {https://doi.org/10.1016/j.jsc.2013.09.003},
	volume = {60},
	year = {2014},
	bdsk-url-1 = {https://doi.org/10.1016/j.jsc.2013.09.003}}

@article{McKay81,
	author = {McKay, Brendan D.},
	booktitle = {Proceedings of the {T}enth {M}anitoba {C}onference on {N}umerical {M}athematics and {C}omputing, {V}ol. {I} ({W}innipeg, {M}an., 1980)},
	date-added = {2021-04-02 10:20:12 +0200},
	date-modified = {2021-04-02 10:20:12 +0200},
	fjournal = {Congressus Numerantium},
	issn = {0384-9864},
	journal = {Congr. Numer.},
	pages = {45--87},
	title = {Practical graph isomorphism},
	volume = {30},
	year = {1981}}

@article{Lopez-PresaCA14,
	author = {Jos{\'{e}} Luis L{\'{o}}pez{-}Presa and Luis F. Chiroque and Antonio Fern{\'{a}}ndez Anta},
	bibsource = {dblp computer science bibliography, https://dblp.org},
	biburl = {https://dblp.org/rec/journals/jam/Lopez-PresaCA14.bib},
	date-added = {2021-04-02 10:19:58 +0200},
	date-modified = {2021-04-02 10:19:58 +0200},
	doi = {10.1155/2014/934637},
	journal = {J. Appl. Math.},
	pages = {934637:1--934637:15},
	timestamp = {Thu, 16 Jul 2020 16:41:05 +0200},
	title = {Novel Techniques to Speed Up the Computation of the Automorphism Group of a Graph},
	url = {https://doi.org/10.1155/2014/934637},
	volume = {2014},
	year = {2014},
	bdsk-url-1 = {https://doi.org/10.1155/2014/934637}}

@inproceedings{JunttilaK11,
	author = {Tommi A. Junttila and Petteri Kaski},
	bibsource = {dblp computer science bibliography, https://dblp.org},
	biburl = {https://dblp.org/rec/conf/tapas/JunttilaK11.bib},
	booktitle = {Theory and Practice of Algorithms in (Computer) Systems - First International {ICST} Conference, {TAPAS} 2011, Rome, Italy, April 18-20, 2011. Proceedings},
	date-added = {2021-04-02 10:19:48 +0200},
	date-modified = {2021-04-02 10:19:48 +0200},
	doi = {10.1007/978-3-642-19754-3\_16},
	editor = {Alberto Marchetti{-}Spaccamela and Michael Segal},
	pages = {151--162},
	publisher = {Springer},
	series = {Lecture Notes in Computer Science},
	timestamp = {Tue, 14 May 2019 10:00:49 +0200},
	title = {Conflict Propagation and Component Recursion for Canonical Labeling},
	url = {https://doi.org/10.1007/978-3-642-19754-3\_16},
	volume = {6595},
	year = {2011},
	bdsk-url-1 = {https://doi.org/10.1007/978-3-642-19754-3%5C_16}}

@inproceedings{CodenottiKSM13,
	author = {Paolo Codenotti and Hadi Katebi and Karem A. Sakallah and Igor L. Markov},
	bibsource = {dblp computer science bibliography, https://dblp.org},
	biburl = {https://dblp.org/rec/conf/ictai/CodenottiKSM13.bib},
	booktitle = {25th {IEEE} International Conference on Tools with Artificial Intelligence, {ICTAI} 2013, Herndon, VA, USA, November 4-6, 2013},
	date-added = {2021-04-02 10:19:38 +0200},
	date-modified = {2021-04-02 10:19:38 +0200},
	doi = {10.1109/ICTAI.2013.139},
	pages = {907--914},
	publisher = {{IEEE} Computer Society},
	timestamp = {Wed, 16 Oct 2019 14:14:56 +0200},
	title = {Conflict Analysis and Branching Heuristics in the Search for Graph Automorphisms},
	url = {https://doi.org/10.1109/ICTAI.2013.139},
	year = {2013},
	bdsk-url-1 = {https://doi.org/10.1109/ICTAI.2013.139}}

@inproceedings{morritfey+19,
	author = {C. Morris and M. Ritzert and M. Fey and W. Hamilton and J.E. Lenssen and G. Rattan and M. Grohe},
	booktitle = {Proceedings of the 33rd AAAI Conference on Artificial Intelligence},
	date-added = {2021-04-02 10:18:22 +0200},
	date-modified = {2021-04-02 10:18:30 +0200},
	publisher = {{AAAI} Press},
	title = {Weisfeiler and {L}eman Go Neural: Higher-order Graph Neural Networks},
	volume = {4602-4609},
	year = {2019}}

@article{vapche71,
	author = {V. Vapnik and A. Chervonenkis},
	date-added = {2021-03-31 13:56:56 +0200},
	date-modified = {2021-03-31 13:56:56 +0200},
	journal = {Theory of Probability and its Applications},
	pages = {264-280},
	title = {On the uniform convergence of relative frequencies of events to their probabilities},
	volume = {16},
	year = {1971}}

@article{val84,
	author = {L. Valiant},
	date-added = {2021-03-31 13:56:34 +0200},
	date-modified = {2021-03-31 13:56:34 +0200},
	journal = {Communications of the ACM},
	number = {11},
	pages = {1134-1142},
	title = {A theory of the learnable},
	volume = {27},
	year = {1984}}

@article{bluehrhauwar89,
	author = {Anselm Blumer and Andrzej Ehrenfeucht and David Haussler and Manfred K. Warmuth},
	date-added = {2021-03-31 13:55:18 +0200},
	date-modified = {2021-03-31 13:55:52 +0200},
	doi = {10.1145/76359.76371},
	journal = {J. {ACM}},
	number = {4},
	pages = {929--965},
	title = {Learnability and the Vapnik-Chervonenkis dimension},
	volume = {36},
	year = {1989},
	bdsk-url-1 = {https://dblp.org/rec/journals/jacm/BlumerEHW89},
	bdsk-url-2 = {https://doi.org/10.1145/76359.76371}}

@article{karmac97,
	author = {Marek Karpinski and Angus Macintyre},
	date-added = {2021-03-30 13:23:31 +0200},
	date-modified = {2021-03-31 13:37:49 +0200},
	doi = {10.1006/jcss.1997.1477},
	journal = {J. Comput. Syst. Sci.},
	number = {1},
	pages = {169--176},
	title = {Polynomial Bounds for {VC} Dimension of Sigmoidal and General Pfaffian Neural Networks},
	volume = {54},
	year = {1997},
	bdsk-url-1 = {https://dblp.org/rec/journals/jcss/KarpinskiM97},
	bdsk-url-2 = {https://doi.org/10.1006/jcss.1997.1477}}

@article{maa97,
	author = {Wolfgang Maass},
	date-added = {2021-03-30 13:22:28 +0200},
	date-modified = {2021-03-30 13:44:39 +0200},
	doi = {10.1137/S0097539793256041},
	journal = {{SIAM} J. Comput.},
	number = {3},
	pages = {708--732},
	title = {Bounds for the Computational Power and Learning Complexity of Analog Neural Nets},
	volume = {26},
	year = {1997},
	bdsk-url-1 = {https://dblp.org/rec/journals/siamcomp/Maass97},
	bdsk-url-2 = {https://doi.org/10.1137/S0097539793256041}}

@article{cyb89,
	author = {George Cybenko},
	date-added = {2021-03-30 12:27:27 +0200},
	date-modified = {2021-03-30 12:28:00 +0200},
	doi = {10.1007/BF02551274},
	journal = {Math. Control. Signals Syst.},
	number = {4},
	pages = {303--314},
	title = {Approximation by superpositions of a sigmoidal function},
	volume = {2},
	year = {1989},
	bdsk-url-1 = {https://dblp.org/rec/journals/mcss/Cybenko89},
	bdsk-url-2 = {https://doi.org/10.1007/BF02551274}}

@article{hor91,
	author = {Kurt Hornik},
	date-added = {2021-03-30 12:27:10 +0200},
	date-modified = {2021-03-30 12:28:21 +0200},
	doi = {10.1016/0893-6080(91)90009-T},
	journal = {Neural Networks},
	number = {2},
	pages = {251--257},
	title = {Approximation capabilities of multilayer feedforward networks},
	volume = {4},
	year = {1991},
	bdsk-url-1 = {https://dblp.org/rec/journals/nn/Hornik91},
	bdsk-url-2 = {https://doi.org/10.1016/0893-6080(91)90009-T}}

@article{leslinpinschoc93,
	author = {Moshe Leshno and Vladimir Ya. Lin and Allan Pinkus and Shimon Schocken},
	date-added = {2021-03-30 12:26:50 +0200},
	date-modified = {2021-03-30 12:29:04 +0200},
	doi = {10.1016/S0893-6080(05)80131-5},
	journal = {Neural Networks},
	number = {6},
	pages = {861--867},
	title = {Multilayer feedforward networks with a nonpolynomial activation function can approximate any function},
	volume = {6},
	year = {1993},
	bdsk-url-1 = {https://dblp.org/rec/journals/nn/LeshnoLPS93},
	bdsk-url-2 = {https://doi.org/10.1016/S0893-6080(05)80131-5}}

\newpage
\appendix
\section*{Proof of Theorem~\ref{theo:owl}}

The following lemma contains the essence of the proof.

\begin{lemma}\label{lem:owl}
  let $k\ge 1$. Then for all graphs $G,G'$, all
  $\vec v\in V(G)^{k+1},\vec v'\in V(G')^{k+1}$, and all $t\in\Nat$,
  the following are equivalent:
  \begin{eroman}
  \item $\owl{k+1}t(G,\vec v)=\owl{k+1}t(G',\vec v')$;
  \item $\atp{k+1}(G,\vec v)=\atp{k+1}(G',\vec v')$ and $\wl
    kt\big(G,\vec v[/i]\big)=\wl
    kt\big(G',\vec v'[/i]\big)$ for all $i\in[k+1]$.
  \end{eroman}
\end{lemma}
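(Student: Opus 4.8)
The plan is to prove the equivalence by induction on $t$, reading it as the statement that the two $(k+1)$-ary invariants $\owl{k+1}{t}$ and $\vec v\mapsto\big(\atp{k+1}(G,\vec v),\wl{k}{t}(G,\vec v[/1]),\ldots,\wl{k}{t}(G,\vec v[/{k+1}])\big)$ induce the same partition, uniformly across all graphs, so that equality of one colour implies equality of the other even for tuples living in different graphs. The base case $t=0$ is immediate: here both invariants reduce to atomic types, since (i) is literally $\atp{k+1}(G,\vec v)=\atp{k+1}(G',\vec v')$, and the atomic type of a tuple determines the atomic type of each of its subtuples, so it is equivalent to (ii). All the work is therefore in the inductive step.

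First I would isolate an auxiliary fact: standard $k$-WL is equivariant under permutations of the coordinates of its argument. That is, for every permutation $\pi$ of $[k]$ there is a relabelling $\sigma_\pi$ of colours, depending only on $\pi$ and not on the graph or the tuple, with $\wl{k}{t}(G,\vec v\circ\pi)=\sigma_\pi\big(\wl{k}{t}(G,\vec v)\big)$ for all $G$ and all $\vec v$. This is proved by its own routine induction on $t$, starting from the obvious permutation behaviour of $\atp{k}$ and $\atp{k+1}$. I need it because $\wl{k}{t}$ is not literally symmetric in its coordinates, yet the $k$-tuples I have to compare differ only by a fixed reordering of their entries (this reordering is exactly the one relating the appended tuple $(\vec v[/i])w$ to the substituted tuple $\vec v[w/i]$).

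The heart of the step is a colour-preserving bijection, for each index $i\in[k+1]$, between the oblivious ``substitute at position $i$'' data and the neighbourhood multiset $M_i$ defining $\wl{k}{t+1}(G,\vec v[/i])$. By the induction hypothesis the multiset $\biglmulti\owl{k+1}{t}(G,\vec v[w/i])\bigmid w\in V(G)\bigrmulti$ carries the same information as $\biglmulti\big(\atp{k+1}(\vec v[w/i]),\wl{k}{t}((\vec v[w/i])[/1]),\ldots,\wl{k}{t}((\vec v[w/i])[/{k+1}])\big)\bigmid w\bigrmulti$. Deleting coordinate $i$ gives the $w$-independent value $\wl{k}{t}(\vec v[/i])$, which is the non-multiset part of $\wl{k}{t+1}(\vec v[/i])$; the remaining $k$ deletions, after applying the fixed permutations $\sigma_\pi$ from the auxiliary fact, are exactly the $k$ entries $\wl{k}{t}\big((\vec v[/i])[w/p]\big)$ occurring in $M_i$, and the atomic type matches up to a fixed permutation as well. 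Since the $\sigma_\pi$ are the same for $G$ and $G'$, all these multiset equalities transfer across graphs in both directions. Running the correspondence forwards, together with the induction hypothesis matching the first components $\owl{k+1}{t}(\vec v)$ and $\big(\atp{k+1}(\vec v),(\wl{k}{t}(\vec v[/i]))_i\big)$, yields (i)$\Rightarrow$(ii); running it backwards, reinserting the constant deletion-$i$ colour into each summand of $M_i$, yields (ii)$\Rightarrow$(i).

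I expect the main obstacle to be precisely this index bookkeeping: checking that for each $i$ the $k$ non-trivial deletions of $\vec v[w/i]$ are in bijection with the $k$ substitutions $(\vec v[/i])[w/p]$, identifying the correct fixed permutation $\pi$ for each matched pair, and verifying that invoking equivariance with a graph-independent $\sigma_\pi$ is legitimate, so that the multiset identities genuinely hold simultaneously over $G$ and $G'$. Once the auxiliary equivariance lemma is in place, the rest is careful but routine.
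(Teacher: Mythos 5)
Your proposal is correct and follows essentially the same route as the paper's proof: induction on $t$, with the inductive step decomposing, for each position $i$, the multiset of colours $\owl{k+1}{t}(G,\vec v[w/i])$ into exactly the data (the constant colour $\wl{k}{t}(G,\vec v[/i])$ together with the neighbourhood multiset) that defines $\wl{k}{t+1}(G,\vec v[/i])$, and running this correspondence in both directions. The only difference is one of explicitness: the paper silently passes from $\wl{k}{t}\big(G,\vec v[w/i][/j]\big)$ to $\wl{k}{t}\big(G,\vec v[/i][w/p]\big)$ even though these tuples agree only up to a fixed coordinate permutation, whereas you isolate the required graph-independent permutation-equivariance of $\wl{k}{t}$ as an auxiliary lemma, which is a step genuinely worth making precise.
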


\begin{proof}
  We fix the graphs $G,G'$.
  The proof is by induction on $t$.
 The base step $t=0$ is trivial. For the inductive step $t\to t+1$, let $\vec v\in V(G)^{k+1},\vec v'\in V(G')^{k+1}$.

  To prove the implication (i)$\implies$(ii) we assume that
  $\owl{k+1}{t+1}(G,\vec v)=\owl{k+1}{t+1}(G',\vec v')$. Then
  $\atp{k+1}(G,\vec v)=\atp{k+1}(G',\vec v')$, because
  $\owl{k+1}{t+1}$ refines $\atp{k+1}$.

  Let $i\in[k+1]$. We need to prove that
  $\wl k{t+1}(G,\vec v[/i])=\wl k{t+1}(G',\vec v'[/i])$. By the
  definition of $\owl{k+1}{t+1}$ we have
  \begin{equation}\label{eq:1}
    \begin{array}{l}
      \Biglmulti \owl kt\big(G,\vec
      v[w/i]\big)\Bigmid w\in V(G)\Bigrmulti\\
      \hspace{1.2cm}=\Biglmulti \owl kt\big(G',\vec
      v'[w'/i]\big)\Bigmid w'\in V(G')\Bigrmulti.
    \end{array}
  \end{equation}
  Thus there is a bijection $h:V(G)\to V(G')$ such that
  $\owl kt\big(G,\vec v[w/i]\big)=\owl kt\big(G',\vec v'[h(w)/i]\big)$
  for all $w\in V(G)$. By the induction hypothesis, this implies
  $\atp{k+1}\big(G,\vec v[w/i]\big)=\atp{k+1}\big(G',\vec
  v'[h(w)/i]\big)$ and
  $\wl kt \big(G,\vec v[w/i][/j]\big)=\wl kt \big(G',\vec
  v'[h(w)/i][/j]\big)$ for all $j\in[k+1]$. For $j=i$, this implies
  $\wl kt(G,\vec v[/i])=\wl kt(G',\vec v'[/i])$.  Moreover, for all
  $j\in[k]$, it implies
  $\wl kt \big(G,\vec v[/i][w/j]\big)=\wl kt \big(G',\vec
  v'[/i][h(w)/j]\big)$. Thus by the definition of $\wl k{t+1}$, it
  follows that
  $\wl k{t+1} \big(G,\vec v[/i]\big)=\wl k{t+1} \big(G',\vec v'[/i]\big)$.

  It remains to prove (ii)$\implies$(i). Suppose that
  $\atp{k+1}(G,\vec v)=\atp{k+1}(G',\vec v')$ and
  $\wl k{t+1}(G,\vec v[/i])=\wl k{t+1}(G',\vec v'[/i])$ for all
  $i\in[k+1]$. As $\wl k{t+1}$ refines $\wl kt$, by the inductive
  hypothesis this implies $\owl{k+1}t(G,\vec v)=\owl{k+1}t(G',\vec
  v')$. Thus by the definition of $\owl{k+1}{t+1}$, to prove that $\owl{k+1}{t+1}(G,\vec v)=\owl{k+1}{t+1}(G',\vec
  v')$ we need to prove that for all $i\in[k+1]$ we have \eqref{eq:1}.

  So let $i\in[k+1]$. Since $\wl k{t+1}(G,\vec v[/i])=\wl
  k{t+1}(G',\vec v'[/i])$, there is a bijection $h:V(G)\to V(G')$ such
  that for all $w\in V(G)$ we have $\atp{k+1}(G,\vec
  v[/i]w)=\atp{k+1}(G',\vec v'[/i]h(w))$ and $\wl kt(G,\vec
  v[/i][w/j])=\wl kt(G',\vec v'[/i][h(w)/j])$ for all $j\in[k]$. This implies $\atp{k+1}(G,\vec
  v[w/i])=\atp{k+1}(G',\vec v'[h(w)/i])$ and $\wl kt(G,\vec
  v[w/i][/j])=\wl kt(G',\vec v'[h(w)/i][/j])$ for all
  $j\in[k+1]$ (for $j=i$ we use $\wl k{t+1}(G,\vec v[/i])=\wl
  k{t+1}(G',\vec v'[/i])$). Thus by the induction hypophypothesisthesis, 
  $\owl {k+1}t(G,\vec v[w/i])=\owl {k+1}t(G,\vec v[h(w)/i])$ for all
  $w\in V(G)$. As $h$ is a bijection, \eqref{eq:1} follows.
\end{proof}

\begin{proof}[Proof of Theorem~\ref{theo:owl}]
  Let $G,G'$ be a graphs and $t\in\Nat$.
  
  To prove assertion (1), suppose that $G,G'$ are not distinguished by
  $\owl{k+1}t$. Then there is a bijection $f:V(G)^{k+1}\to
  V(G')^{k+1}$ such that $\owl{k+1}t(G,\vec v)=\owl{k+1}t(G',f(\vec
  v))$ for all $\vec v\in V(G)^{k+1}$.

  We define a bijection
  $g:V(G)^k\to V(G')^k$ as follows: for $\vec v=(v_1,\ldots,v_k)$, let
  $\vec v_+\coloneqq(v_1,\ldots,v_k,v_k)$ and 
  $\vec v'_+=(v_1',\ldots,v_{k+1}')\coloneqq
  f(\vec v_+)$. Then $v'_{k}=v'_{k+1}$, because
  $\owl{k+1}t(G,\vec v_+)=\owl{k+1}t(G',
  \vec v_+')$ and thus $\atp{k+1}(G,\vec v_+)=\atp{k+1}(G',
  \vec v_+')$ by Lemma~\ref{lem:owl}. We let $g(\vec v)\coloneqq
  (v_1',\ldots,v_{k}')=:\vec v'$. Then $g$ is indeed a bijection from
  $V(G)^k$ to $V(G')^k$, and since $\vec v=\vec v_+[/k+1]$ and $g(\vec
  v)=\vec
  v'=\vec v_+'[/k+1]$, we have $\wl kt(G,\vec v)=\wl kt(G',\vec v')$
  by Lemma~\ref{lem:owl}. Thus $g$ is a bijection from
  $V(G)^k$ to $V(G')^k$ that preserves $\wl kt$, and this implies that
  $\wl kt$ does not distinguish $G$ and $G'$.

  To prove (2), assume that $G$ and $G'$ are not distinguished by
  $\wl{k}{t+1}$. Then there is a bijection $g:V(G)^k\to V(G')^k$ such
  that $\wl{k}{t+1}(G,\vec v)=\wl{k}{t+1}(G',f(\vec v))$ for all
  $\vec v\in V(G)^{k}$.

  Let $\vec v\in V(G)^k$ and $\vec v'\coloneqq f(\vec v)$. By the
  definition of $\wl k{t+1}$, we have
  $\wl kt(G,\vec v)=\wl kt(G',\vec v')$ and there is a bijection
  $h_{\vec v}:V(G)\to V(G')$ such that for all $w\in V(G)$ we have
  $\atp{k+1}(G,\vec v w)=\atp{k+1}(G',\vec v'h_{\vec v}(w))$ and
  $\wl kt(G,\vec v[w/i])=\wl kt(G',\vec v'[h_{\vec v}(w)/i])$ for all
  $i\in[k]$. This implies
  $\wl kt(G,\vec vw[/i])=\wl kt(G',\vec v'h_{\vec v}(w)[/i])$ for all
  $i\in[k]$. Moreover, since $\vec v=\vec vw[/k+1]$ we also have
  $\wl kt(G,\vec vw[/k+1])=\wl kt(G',\vec v'h_{\vec
    v}(w)[/k+1])$. Thus by Lemma~\ref{lem:owl}, it follows that
  $\owl{k+1}t(G,\vec vw)=\owl{k+1}t(G',\vec v'h_{\vec v}(w))$.

  We
  define a bijection $f:V(G)^{k+1}\to V(G)^{k+1}$ by $f(\vec
  vw)\coloneqq g(\vec v)h_{\vec v}(w)$ for all $\vec v\in V(G)^k,w\in
  V(G)$. Then $f$ preserves $\owl{k+1}t$, and thus $\owl{k+1}t$ does
  not distinguish $G$ and $G'$.
\end{proof}

\section*{Proof of Theorem~\ref{theo:cr-logic}}

  The proof is by induction on $t$. For the base case $t=0$, note that
  a $\LGCkq20$ formula $\phi(x)$ is a Boolean combination of
  atomic ``Label'' formulas $P_i(x)$; formulas $E(x,x)$ (always false)
  and $x=x$ (always true) are not needed. The colouring
  $\colref 0=\col$ captures precisely the label information.

  For the inductive step $t\ge t+1$, we first prove the implication
  (i)$\implies$(ii). Assume that $\colref
  {t+1}(G,v)=\colref {t+1}(G',v')$. Let $\phi(x)\in\LGCkq2{t+1}$. Then
  $\phi$ is a Boolean combination of atomic formulas $P_i(x)$ and
  formulas $\exists^{\ge p}
  y\big(E(x,y)\wedge\psi(y)\big)$, where $\psi(y)\in\LGCkq2t$. We shall prove that $G$ and $G'$
  satisfy the same formulas of these types. As $\colref{r+1}$
  refines $\col$ and $\colref
  {t+1}(G,v)=\colref {t+1}(G',v')$, we have $\col(G,v)=\col(G',v')$
  and thus $G\models P_i(v)\iff G'\models P_i(v')$ for all
  $i$. Consider a formula $\phi'(x)\coloneqq\exists^{\ge p}
  y\big(E(x,y)\wedge\psi(y)\big)$, where $\psi(y)\in\LGCkq2t$. By the
  induction hypothesis, for every colour $c$ in the range of $\colref t$,
  either all vertices of colour $c$ or
  none of the vertices of colour $c$ satisfy the formula
  $\psi(y)$. Let $c_1,\ldots,c_q$ be the colours in the range of $\colref
  t$ such that all vertices  of colour $c_j$ satisfy 
  $\psi(y)$. Since $\colref
  {t+1}(G,v)=\colref {t+1}(G',v')$, for each $j\in[q]$ the number of
  vertices $w\in V(G)$ such that $vw\in E(G)$ and $\colref t(G,w)=c_j$
  equals the number of
  vertices $w'\in V(G')$ such that $v'w'\in E(G')$ and $\colref
  t(G',w')=c_j$. Thus the the number of
  vertices $w\in V(G)$ such that $vw\in E(G)$ and $G\models\psi(w)$
  equals the number of
  vertices $w'\in V(G')$ such that $v'w'\in E(G')$ and
  $G'\models\psi(v')$. It follows that $G\models\phi'(v)\iff
  G'\models\phi'(v')$.

  To prove the converse implication (ii)$\implies$(i), we assume that
  for all formulas $\phi(x)\in\LGCkq2{t+1}$ it holds that
  $G\models\phi(v)\iff G'\models\phi(v')$. Let $c_1,\ldots,c_q$ be the (finite) list of
  colours in the range of $\colref t(G)\cup\colref t(G')$. By the induction
  hypothesis, for all distinct $i,j\in[q]$ there is a formula
  $\psi_{ij}(x)\in\LGCkq2t$ that is satisfied by all vertices of
  colour $c_i$, but by no vertices of colour $c_j$. Thus the formula
  $\phi_i(x)\coloneqq\bigwedge_{j\neq i} \psi_{ij}(x)$ is satisfied exactly by
  the vertices of colour $c_i$ and by no other vertices in $V(G)\cup
  V(G')$.

  Suppose for contradiction that there is a colour $c_i$ such that
  \begin{align*}
    p_i&\coloneqq\big|\big\{ w\in N^G(v)\bigmid \colref
         t(G,w)=c_i\big\}\big|\\
   &\hspace{1cm} \neq \big|\big\{ w'\in
    N^{G'}(v')\bigmid \colref t(G',w')=c_i\big\}\big|=:p_i'.
  \end{align*}
  Without loss of generality we assume that $p_i>p_i'$. Let
  $\phi(x)\coloneqq\exists^{\ge
    p_i}y\big(E(x,y)\wedge\phi_i(y)\big)$. Then
  $\phi(x)\in\LGCkq2{t+1}$ and $G\models\phi(v)$, but
  $G'\not\models\phi(v')$. This is a contradiction.

  Thus $v$ and $v'$ have the same numbers of neighbours of each colour
  in the range of $\colref t$. By definition, this means that
  $\colref{t+1}(G,v)=\colref{t+1}(G',v')$. 
\qed

\end{document}